\def\eqref#1{equation~\ref{#1}}
\def\1{\bm{1}}
\def\vt{{\bm{t}}}
\def\mA{{\bm{A}}}
\def\mB{{\bm{B}}}
\def\mF{{\bm{F}}}
\def\mH{{\bm{H}}}
\def\mK{{\bm{K}}}
\def\mS{{\bm{S}}}
\def\mT{{\bm{T}}}
\def\mX{{\bm{X}}}
\def\mZ{{\bm{Z}}}
\DeclareMathAlphabet{\mathsfit}{\encodingdefault}{\sfdefault}{m}{sl}
\SetMathAlphabet{\mathsfit}{bold}{\encodingdefault}{\sfdefault}{bx}{n}
\newcommand{\KL}{D_{\mathrm{KL}}}
\newcommand{\Var}{\mathrm{Var}}
\newcommand{\Cov}{\mathrm{Cov}}
\DeclareMathOperator*{\argmax}{arg\,max}
\DeclareMathOperator*{\argmin}{arg\,min}
\providecommand{\tabularnewline}{\\}
\providecommand{\algorithmname}{Algorithm}
\definecolor{codegreen}{rgb}{0,0.6,0}
\definecolor{codegray}{rgb}{0.5,0.5,0.5}
\definecolor{codepurple}{rgb}{0.58,0,0.82}
\definecolor{backcolour}{rgb}{0.95,0.95,0.92}
\lstdefinestyle{mystyle}{
    backgroundcolor=\color{backcolour},   
    commentstyle=\color{codegreen},
    keywordstyle=\color{magenta},
    numberstyle=\tiny\color{codegray},
    stringstyle=\color{codepurple},
    basicstyle=\ttfamily\footnotesize,
    breakatwhitespace=false,         
    breaklines=true,                 
    captionpos=b,                    
    keepspaces=true,                 
    numbers=left,                    
    numbersep=5pt,                  
    showspaces=false,                
    showstringspaces=false,
    showtabs=false,                  
    tabsize=2
}
\newcommand{\REVISE}[1]{\textcolor{black}{#1}}
\title{Generating Adversarial Examples with Task Oriented Multi-Objective Optimization}
\author{\name Anh Bui \email tuananh.bui@monash.edu \\
   \addr Monash University
   \COAUTHOR
   \name Trung Le \email trunglm@monash.edu \\
   \addr Monash University
   \COAUTHOR
   \name He Zhao \email he.zhao@ieee.org \\
   \addr CSIRO's Data61, Australia 
   \COAUTHOR 
   \name Quan Tran \email qtran@adobe.com \\
   \addr Adobe Research 
   \COAUTHOR
   \name Paul Montague \email paul.montague@dst.defence.gov.au \\
   \addr Defence Science and Technology Group, Australia
   \COAUTHOR
   \name Dinh Phung \email dinh.phung@monash.edu \\
   \addr Monash University, VinAI Research 
}
\newcommand{\new}{\marginpar{NEW}}
\begin{document}

\maketitle

\global\long\def\sidenote#1{\marginpar{\small\emph{{\color{Medium}#1}}}}%
\global\long\def\se{\hat{\text{se}}}%
\global\long\def\interior{\text{int}}%
\global\long\def\boundary{\text{bd}}%
\global\long\def\ML{\textsf{ML}}%
\global\long\def\GML{\mathsf{GML}}%
\global\long\def\HMM{\mathsf{HMM}}%
\global\long\def\support{\text{supp}}%
\global\long\def\new{\text{*}}%
\global\long\def\stir{\text{Stirl}}%
\global\long\def\mA{\mathcal{A}}%
\global\long\def\mB{\mathcal{B}}%
\global\long\def\expect{\mathbb{E}}%
\global\long\def\mF{\mathcal{F}}%
\global\long\def\mK{\mathcal{K}}%
\global\long\def\mH{\mathcal{H}}%
\global\long\def\mX{\mathcal{X}}%
\global\long\def\mZ{\mathcal{Z}}%
\global\long\def\mS{\mathcal{S}}%
\global\long\def\Ical{\mathcal{I}}%
\global\long\def\mT{\mathcal{T}}%
\global\long\def\Pcal{\mathcal{P}}%
\global\long\def\dist{d}%
\global\long\def\HX{\entro\left(X\right)}%
\global\long\def\entropyX{\HX}%
\global\long\def\HY{\entro\left(Y\right)}%
\global\long\def\entropyY{\HY}%
\global\long\def\HXY{\entro\left(X,Y\right)}%
\global\long\def\entropyXY{\HXY}%
\global\long\def\mutualXY{\mutual\left(X;Y\right)}%
\global\long\def\mutinfoXY{\mutualXY}%
\global\long\def\given{\mid}%
\global\long\def\gv{\given}%
\global\long\def\goto{\rightarrow}%
\global\long\def\asgoto{\stackrel{a.s.}{\longrightarrow}}%
\global\long\def\pgoto{\stackrel{p}{\longrightarrow}}%
\global\long\def\dgoto{\stackrel{d}{\longrightarrow}}%
\global\long\def\lik{\mathcal{L}}%
\global\long\def\logll{\mathit{l}}%
\global\long\def\bigcdot{\raisebox{-0.5ex}{\scalebox{1.5}{\ensuremath{\cdot}}}}%
\global\long\def\sig{\textrm{sig}}%
\global\long\def\likelihood{\mathcal{L}}%
\global\long\def\vectorize#1{\mathbf{#1}}%
\global\long\def\vt#1{\mathbf{#1}}%
\global\long\def\gvt#1{\boldsymbol{#1}}%
\global\long\def\idp{\ \bot\negthickspace\negthickspace\bot\ }%
\global\long\def\cdp{\idp}%
\global\long\def\das{}%
\global\long\def\id{\mathbb{I}}%
\global\long\def\idarg#1#2{\id\left\{  #1,#2\right\}  }%
\global\long\def\iid{\stackrel{\text{iid}}{\sim}}%
\global\long\def\bzero{\vt 0}%
\global\long\def\bone{\mathbf{1}}%
\global\long\def\a{\mathrm{a}}%
\global\long\def\ba{\mathbf{a}}%
\global\long\def\b{\mathrm{b}}%
\global\long\def\bb{\mathbf{b}}%
\global\long\def\B{\mathrm{B}}%
\global\long\def\boldm{\boldsymbol{m}}%
\global\long\def\c{\mathrm{c}}%
\global\long\def\C{\mathrm{C}}%
\global\long\def\d{\mathrm{d}}%
\global\long\def\D{\mathrm{D}}%
\global\long\def\N{\mathrm{N}}%
\global\long\def\h{\mathrm{h}}%
\global\long\def\H{\mathrm{H}}%
\global\long\def\bH{\mathbf{H}}%
\global\long\def\K{\mathrm{K}}%
\global\long\def\M{\mathrm{M}}%
\global\long\def\bff{\vt f}%
\global\long\def\bx{\mathbf{\mathbf{x}}}%
\global\long\def\bl{\boldsymbol{l}}%
\global\long\def\s{\mathrm{s}}%
\global\long\def\T{\mathrm{T}}%
\global\long\def\bu{\mathbf{u}}%
\global\long\def\v{\mathrm{v}}%
\global\long\def\bv{\mathbf{v}}%
\global\long\def\bo{\boldsymbol{o}}%
\global\long\def\bh{\mathbf{h}}%
\global\long\def\bs{\boldsymbol{s}}%
\global\long\def\x{\mathrm{x}}%
\global\long\def\bx{\mathbf{x}}%
\global\long\def\bz{\mathbf{z}}%
\global\long\def\hbz{\hat{\bz}}%
\global\long\def\z{\mathrm{z}}%
\global\long\def\y{\mathrm{y}}%
\global\long\def\bxnew{\boldsymbol{y}}%
\global\long\def\bX{\boldsymbol{X}}%
\global\long\def\tbx{\tilde{\bx}}%
\global\long\def\by{\mathbf{y}}%
\global\long\def\bY{\boldsymbol{Y}}%
\global\long\def\bZ{\boldsymbol{Z}}%
\global\long\def\bU{\boldsymbol{U}}%
\global\long\def\bn{\boldsymbol{n}}%
\global\long\def\bV{\boldsymbol{V}}%
\global\long\def\bI{\boldsymbol{I}}%
\global\long\def\J{\mathrm{J}}%
\global\long\def\bJ{\mathbf{J}}%
\global\long\def\w{\mathrm{w}}%
\global\long\def\bw{\vt w}%
\global\long\def\bW{\mathbf{W}}%
\global\long\def\balpha{\gvt{\alpha}}%
\global\long\def\bdelta{\boldsymbol{\delta}}%
\global\long\def\bsigma{\gvt{\sigma}}%
\global\long\def\bbeta{\gvt{\beta}}%
\global\long\def\bmu{\gvt{\mu}}%
\global\long\def\btheta{\boldsymbol{\theta}}%
\global\long\def\blambda{\boldsymbol{\lambda}}%
\global\long\def\bgamma{\boldsymbol{\gamma}}%
\global\long\def\bpsi{\boldsymbol{\psi}}%
\global\long\def\bphi{\boldsymbol{\phi}}%
\global\long\def\bpi{\boldsymbol{\pi}}%
\global\long\def\bomega{\boldsymbol{\omega}}%
\global\long\def\bepsilon{\boldsymbol{\epsilon}}%
\global\long\def\btau{\boldsymbol{\tau}}%
\global\long\def\bxi{\boldsymbol{\xi}}%
\global\long\def\realset{\mathbb{R}}%
\global\long\def\realn{\realset^{n}}%
\global\long\def\integerset{\mathbb{Z}}%
\global\long\def\natset{\integerset}%
\global\long\def\integer{\integerset}%
\global\long\def\natn{\natset^{n}}%
\global\long\def\rational{\mathbb{Q}}%
\global\long\def\rationaln{\rational^{n}}%
\global\long\def\complexset{\mathbb{C}}%
\global\long\def\comp{\complexset}%
\global\long\def\compl#1{#1^{\text{c}}}%
\global\long\def\and{\cap}%
\global\long\def\compn{\comp^{n}}%
\global\long\def\comb#1#2{\left({#1\atop #2}\right) }%
\global\long\def\param{\vt w}%
\global\long\def\Param{\Theta}%
\global\long\def\meanparam{\gvt{\mu}}%
\global\long\def\Meanparam{\mathcal{M}}%
\global\long\def\meanmap{\mathbf{m}}%
\global\long\def\logpart{A}%
\global\long\def\simplex{\Delta}%
\global\long\def\simplexn{\simplex^{n}}%
\global\long\def\dirproc{\text{DP}}%
\global\long\def\ggproc{\text{GG}}%
\global\long\def\DP{\text{DP}}%
\global\long\def\ndp{\text{nDP}}%
\global\long\def\hdp{\text{HDP}}%
\global\long\def\gempdf{\text{GEM}}%
\global\long\def\rfs{\text{RFS}}%
\global\long\def\bernrfs{\text{BernoulliRFS}}%
\global\long\def\poissrfs{\text{PoissonRFS}}%
\global\long\def\grad{\gradient}%
\global\long\def\gradient{\nabla}%
\global\long\def\partdev#1#2{\partialdev{#1}{#2}}%
\global\long\def\partialdev#1#2{\frac{\partial#1}{\partial#2}}%
\global\long\def\partddev#1#2{\partialdevdev{#1}{#2}}%
\global\long\def\partialdevdev#1#2{\frac{\partial^{2}#1}{\partial#2\partial#2^{\top}}}%
\global\long\def\closure{\text{cl}}%
\global\long\def\cpr#1#2{\Pr\left(#1\ |\ #2\right)}%
\global\long\def\var{\text{Var}}%
\global\long\def\Var#1{\text{Var}\left[#1\right]}%
\global\long\def\cov{\text{Cov}}%
\global\long\def\Cov#1{\cov\left[ #1 \right]}%
\global\long\def\COV#1#2{\underset{#2}{\cov}\left[ #1 \right]}%
\global\long\def\corr{\text{Corr}}%
\global\long\def\sst{\text{T}}%
\global\long\def\SST{\sst}%
\global\long\def\ess{\mathbb{E}}%
\global\long\def\Ess#1{\ess\left[#1\right]}%
\global\long\def\fisher{\mathcal{F}}%
\global\long\def\bfield{\mathcal{B}}%
\global\long\def\borel{\mathcal{B}}%
\global\long\def\bernpdf{\text{Bernoulli}}%
\global\long\def\betapdf{\text{Beta}}%
\global\long\def\dirpdf{\text{Dir}}%
\global\long\def\gammapdf{\text{Gamma}}%
\global\long\def\gaussden#1#2{\text{Normal}\left(#1, #2 \right) }%
\global\long\def\gauss{\mathbf{N}}%
\global\long\def\gausspdf#1#2#3{\text{Normal}\left( #1 \lcabra{#2, #3}\right) }%
\global\long\def\multpdf{\text{Mult}}%
\global\long\def\poiss{\text{Pois}}%
\global\long\def\poissonpdf{\text{Poisson}}%
\global\long\def\pgpdf{\text{PG}}%
\global\long\def\wshpdf{\text{Wish}}%
\global\long\def\iwshpdf{\text{InvWish}}%
\global\long\def\nwpdf{\text{NW}}%
\global\long\def\niwpdf{\text{NIW}}%
\global\long\def\studentpdf{\text{Student}}%
\global\long\def\unipdf{\text{Uni}}%
\global\long\def\transp#1{\transpose{#1}}%
\global\long\def\transpose#1{#1^{\mathsf{T}}}%
\global\long\def\mgt{\succ}%
\global\long\def\mge{\succeq}%
\global\long\def\idenmat{\mathbf{I}}%
\global\long\def\trace{\mathrm{tr}}%
\global\long\def\argmax#1{\underset{_{#1}}{\text{argmax}} }%
\global\long\def\argmin#1{\underset{_{#1}}{\text{argmin}\ } }%
\global\long\def\diag{\text{diag}}%
\global\long\def\norm{}%
\global\long\def\spn{\text{span}}%
\global\long\def\vtspace{\mathcal{V}}%
\global\long\def\field{\mathcal{F}}%
\global\long\def\ffield{\mathcal{F}}%
\global\long\def\inner#1#2{\left\langle #1,#2\right\rangle }%
\global\long\def\iprod#1#2{\inner{#1}{#2}}%
\global\long\def\dprod#1#2{#1 \cdot#2}%
\global\long\def\norm#1{\left\Vert #1\right\Vert }%
\global\long\def\entro{\mathbb{H}}%
\global\long\def\entropy{\mathbb{H}}%
\global\long\def\Entro#1{\entro\left[#1\right]}%
\global\long\def\Entropy#1{\Entro{#1}}%
\global\long\def\mutinfo{\mathbb{I}}%
\global\long\def\relH{\mathit{D}}%
\global\long\def\reldiv#1#2{\relH\left(#1||#2\right)}%
\global\long\def\KL{KL}%
\global\long\def\KLdiv#1#2{\KL\left(#1\parallel#2\right)}%
\global\long\def\KLdivergence#1#2{\KL\left(#1\ \parallel\ #2\right)}%
\global\long\def\crossH{\mathcal{C}}%
\global\long\def\crossentropy{\mathcal{C}}%
\global\long\def\crossHxy#1#2{\crossentropy\left(#1\parallel#2\right)}%
\global\long\def\breg{\text{BD}}%
\global\long\def\lcabra#1{\left|#1\right.}%
\global\long\def\lbra#1{\lcabra{#1}}%
\global\long\def\rcabra#1{\left.#1\right|}%
\global\long\def\rbra#1{\rcabra{#1}}%

\begin{abstract}
Deep learning models, even the-state-of-the-art ones, are highly vulnerable to adversarial examples. 
Adversarial training is one of the most efficient methods to improve the model's robustness. 
The key factor for the success of adversarial training is the capability to generate 
qualified and divergent adversarial examples which satisfy some objectives/goals 
(e.g., finding adversarial examples that maximize the model losses for simultaneously 
attacking multiple models). 
Therefore, multi-objective optimization (MOO) is a natural tool 
for adversarial example generation to achieve multiple objectives/goals simultaneously.
However, we observe that a naive application of MOO tends to maximize all objectives/goals equally, without caring 
if an objective/goal has been achieved yet. This leads to useless effort to further 
improve the goal-achieved tasks, while putting less focus on the goal-unachieved 
tasks. In this paper, we propose \emph{Task Oriented MOO} to address this issue, 
in the context where we can explicitly define the goal achievement for a task. 
Our principle is to only maintain the goal-achieved tasks, while letting the 
optimizer spend more effort on improving the goal-unachieved tasks. We conduct 
comprehensive experiments for our Task Oriented MOO on various adversarial 
example generation schemes. The experimental results firmly demonstrate 
the merit of our proposed approach. 
Our code is available at \url{https://github.com/tuananhbui89/TAMOO}.

\end{abstract}
\vspace{-5mm}
\section{Introduction}

Deep neural networks are powerful models that achieve impressive performance across various domains such as bioinformatics \citep{bio_infomatics}, speech recognition \citep{hinton_speech}, computer vision \citep{he2016deep}, and natural language processing \citep{vaswani2017attention}. Despite achieving state-of-the-art performance, these models are extremely fragile, as one can easily craft small and imperceptible adversarial perturbations of input data to fool them, hence resulting in high misclassifications \citep{szegedy2013intriguing, goodfellow2014explaining}. Accordingly, adversarial training (AT) \citep{madry2017towards, trades} has been proven to be one of the most efficient approaches to strengthen model robustness \citep{athalye2018obfuscated}. 
AT requires challenging models with divergent and qualified adversarial examples \citep{madry2017towards, trades, bui2021improving} so that the robustified models can defend against adversarial examples. 
Therefore, generating adversarial examples is an important research topic in Adversarial Machine Learning (AML).  
Several perturbation based attacks have been proposed, 
notably 
PGD \citep{madry2017towards}, 
CW \citep{carlini2017towards}, and AutoAttack \citep{croce2020reliable}. 
Most of them aim to optimize a single objective/goal, e.g., 
maximizing the cross-entropy (CE) loss w.r.t. the ground-truth label 
\citep{goodfellow2014explaining, madry2017towards}, maximizing the Kullback-Leibler 
(KL) divergence w.r.t. the predicted probabilities of a benign example \citep{trades}, 
or minimizing a combination of perturbation size and predicted loss to a targeted class as in \cite{carlini2017towards}. 

However, in many contexts, we need to find qualified adversarial examples satisfying multiple objectives/goals, e.g., 
finding an adversarial example that can \textit{attack simultaneously multiple models} in an 
ensemble model \citep{pang19a,bui2021improving}, finding an universal perturbation that 
can \textit{attack simultaneously multiple benign examples} \citep{moosavi2017universal}.
Obviously, these adversarial generations have a nature of multi-objective problem rather than a single-objective one. 
Consequently, using \textit{single-objective} adversarial examples leads to a much less adversarial robustness in 
ensemble learning as discussed in Section \ref{subsec:Adv-Train} and Appendix \ref{subsec:sup-adv-training}.

Multi-Objective Optimization (MOO) \citep{desideri2012multiple} is an optimization problem to find a Pareto optimality that aims to optimize multiple objective functions. In a nutshell, MOO is a natural tool for the aforementioned multi-objective adversarial generations. 
However, a direct and naive application of MOO to generating robust adversarial examples for multiple models or ensemble of transformations does not work satisfactorily (cf. Appendix \ref{sec:sup-discussions}).
Concretely, it can be observed that the tasks are not optimized equally. The optimizing process focuses too much on one dominating task and can be trapped easily by it, hence leading to downgraded attack performances. 

Intuitively, for multi-objective adversarial generations, we can explicitly investigate if an objective or a task achieves or fails to achieve its goal (e.g., the current adversarial example can fool a model successfully or unsuccessfully in multiple models). To avoid some tasks dominating others
during the optimization process,
we can favour more the tasks that are failing and pay less attention to the tasks that are performing well. For example, in the context of attacking multiple models, we update an adversarial example $x^a$ to favor the models that $x^a$ has not attacked successfully yet, while trying to maintain the attack capability of $x^a$ on the already successful models. In this way, we expect that no task really dominates others and all tasks can be updated equally to fulfill their goals.

Bearing this in mind, we propose a new framework named \textbf{\emph{TA}}\emph{sk Oriented }\textbf{\emph{M}}\emph{ulti-}\textbf{\emph{O}}\emph{bjective
}\textbf{\emph{O}}\emph{ptimization} (TA-MOO) with multi-objective adversarial generations as the demonstrating applications. Specifically, we learn a weight vector (i.e., each dimension is the  weight for a task) lying on a simplex corresponding to all tasks. 
To favor the unsuccessful tasks while maintaining the success of the successful ones, we propose a geometry-based regularization term that represents the distance between the original simplex and a reduced simplex which involves the weight vectors for the currently unsuccessful tasks only. Furthermore, along with the original quadratic term of the standard MOO helping to improve all tasks, minimizing our geometry-based regularization term encourages the weights of the goal-achieved tasks to be as small as possible, while inspiring those for the goal-unachieved ones to have a sum close to 1. 
By doing so, we aim to focus more on improving the goal-unachieved tasks, while still maintain the performance of goal-achieved tasks.

Most related work to ours is \cite{wang2021adversarial}, which considers the worst-case performance across all tasks. However, this original principle reduces the generalizability to other tasks. To mitigate this issue, a specific regularization was proposed to balance all tasks' weights. 
Our work, which casts an adversarial generation task as a multi-objective optimization problem, is conceptually different from that work, although both methods can be applied to similar tasks. 
Further discussion about relate work can be found in Appendix \ref{sec:related_work}. 

To summarize, our contributions in this work include:

\REVISE{\textbf{(C1)} We propose a novel framework called TA-MOO, which addresses the shortcomings of the original MOO when applied to multi-objective adversarial generation. Specifically, the TA-MOO framework incorporates a geometry-based regularization term that favors unsuccessful tasks, while simultaneously maintaining the performance of successful tasks.
This innovative approach improves the efficiency and efficacy of adversarial generation by promoting a more balanced exploration of the solution space.
}

\REVISE{
    \textbf{(C2)} We conduct comprehensive experiments for three adversarial generation tasks and one adversarial training task including attacking multiple models, learning universal perturbation, attacking over many data transformations, and adversarial training on ensemble learning setting. 
    The experimental results show that our TA-MOO outperforms the baselines by a wide margin on the three aforementioned adversarial generation tasks. 
    More importantly, our adversary brings a great benefit on improving adversarial robustness, highlighting the potential of our TA-MOO framework in adversarial machine learning.
}

\REVISE{\textbf{(C3)} Additionally, we provide a comprehensive analysis on different aspects of applying MOO and TA-MOO to adversarial generation tasks, such as the impact of the dominating issue in Appendix \ref{subsec:sup-discussion-dominating-issue}, the importance of the Task-Oriented regularization in Appendix \ref{subsec:sup-discussion-task-oriented-reg}, the impact of initialization of MOO in Appendix {subsec:optimal-init-moo}, and the limitations of MOO solver in Appendix {sec:sup-gradient-des-discuss}. We believe that our analysis would be beneficial for future research in this area.
}

\vspace{-3mm}
\section{Background}

We revisit the background of multi-objective optimization (MOO),
which lays the foundation for our task-oriented MOO in the sequel. Given
multiple objective functions $f\left(\delta\right):=\left[f_{1}\left(\delta\right),...,f_{m}\left(\delta\right)\right]$
where each $f_{i}:\mathbb{R}^{d}\goto\mathbb{R}$, we aim to find
the Pareto optimal solution that simultaneously 
maximizes all objective functions:
\begin{equation}
\max_{\delta}f(\delta):=\left[f_{1}\left(\delta\right),...,f_{m}\left(\delta\right)\right].\label{eq:moo}
\end{equation}

While there are a variety of MOO solvers \citep{miettinen2012nonlinear, ehrgott2005multicriteria}, in this paper, we adapt from the multi-gradient descent algorithm (MGDA) that was proposed suitably for end-to-end learning by \cite{desideri2012multiple}. 
Specifically, MGDA combines the gradients of individual objectives to a single optimal direction that increases all objectives simultaneously. 
The optimal direction corresponds to the minimum-norm point that can be found by solving the quadratic programming problem:
\begin{equation}
w^{*}=\text{argmin}_{w\in\simplex_{m}}w^{T}Qw, \label{eq:op_moo}
\end{equation}
where $\simplex_{m}=\left\{ \pi\in\mathbb{R}_{+}^{m}:\norm{\pi}_{1}=1\right\} $
is the $m$-simplex and $Q\in\mathbb{R}^{m\times m}$ is the matrix
with $Q_{ij}=\nabla_{\delta}f_{i}\left(\delta\right)^{T}\nabla_{\delta}f_{j}\left(\delta\right)$.
Finally, the solution of the problem \ref{eq:moo} can be found iteratively with each update step $\delta=\delta+\eta g$ where $g$ is the combined gradient $ g=\sum_{i=1}^{m}w_{i}^{*}\nabla_{\delta}f_{i}\left(\delta\right)$ and $\eta>0$ is a sufficiently small learning rate.
Furthermore, \cite{desideri2012multiple} also proved that by using
an appropriate learning rate at each step, we reach the Pareto optimality
point $\delta^{*}$ at which there exist $w\in\simplex_{m}$ such
that $\sum_{i=1}^{m}w_{i}\nabla_{\delta}f_{i}\left(\delta^{*}\right)=\bzero$.

\vspace{-2mm}
\section{Our Proposed Method}

\subsection{Task Oriented Multi-Objective Optimization}

We now present our \textbf{\emph{TA}}\emph{sk Oriented }\textbf{\emph{M}}\emph{ulti-}\textbf{\emph{O}}\emph{bjective
}\textbf{\emph{O}}\emph{ptimization }(TA-MOO). We consider the MOO problem in (1) 
where each task $\mathcal{T}_{i}\,(i=1,...,m)$ corresponds to the
objective function $f_{i}\left(\delta\right)\,(i=1,...,m)$. 
Additionally, assume that given a task $\mathcal{T}_{i}$, we can explicitly observe if this task has currently achieved its goal (e.g., the current adversarial example $x$ can fool successfully the model $f_{i}$), which is named a \emph{goal-achieved} task. We also name a task that has not achieved its goal a \emph{goal-unachieved} task.
Different from the standard MOO, which equally pays equal attention to all tasks, our TA-MOO focuses on improving the currently \emph{goal-unachieved} tasks, while trying to maintain the performance of the \emph{goal-achieved} tasks. By this principle, we expect all tasks would be equally improved to simultaneously achieve their goals. 

To be more precise, we depart from $\delta_{0}$ and consecutively
update in $L$ steps to obtain the sequence $\delta_{1},\delta_{2},...,\delta_{L}$
that approaches the optimal solution. Considering the $t$-th step (i.e., $1\le t \le L$),
we currently have $\delta_{t}$ and need to update it to obtain $\delta_{t+1}$.
We examine the tasks that have achieved their goals already and denote
them as $\mathcal{T}_{1},\mathcal{T}_{2},...,\mathcal{T}_{s}$ without
the loss of generalization. Here we note that the list of \emph{goal-achieved}
tasks is empty if $s=0$ and the list of \emph{goal-unachieved}
tasks is empty if $s=m$. 
Specifically, to find $\delta_{t+1}$, we first solve the following
optimization problem (OP):
\begin{equation}
w^{*}=\text{argmin}_{w\in\simplex_{m}}\left\{ w^{T}Qw+\lambda\Omega\left(w\right)\right\} ,\label{eq:op_reg}
\end{equation}
where $Q\in\mathbb{R}^{m\times m}$ with $Q_{ij}=\nabla_{\delta}f_{i}\left(\delta_{t}\right)^{T}\nabla_{\delta}f_{j}\left(\delta_{t}\right)$,
$\lambda>0$ is a trade-off parameter, and $\Omega\left(w\right)$
is a regularization term to let the weights focus more on the \emph{goal-unachieved}
tasks. We next compute the combined gradient $g_{t}$ and update $\delta_{t}$
as:
\[
g_{t}=\sum_{i=1}^{m}w_{i}^{*}\nabla_{\delta}f_{i}\left(\delta_{t}\right) \text{ and } \delta_{t+1}=\delta_{t}+\eta g_{t}.
\]

The OP in (\ref{eq:op_reg}) consists of two terms.
The first term $w^{T}Qw$ ensures that all tasks are improving, while
the second term $\Omega\left(w\right)$ serves as the regularization
to restrict the goal-achieved tasks $\mathcal{T}_{1},...,\mathcal{T}_{s}$
by setting the corresponding weights $w_{1},...,w_{s}$ as small
as possible. 

Before getting into the details of the regularization, we emphasize that to impose the constraint $w\in\simplex_m$, we parameterize $w=\text{softmax}\left(\alpha\right)$
with $\alpha\in\mathbb{R}^{m}$ and solve the OP in
(\ref{eq:op_reg}) using gradient descent. In what follows, we discuss
our proposed 
geometry-based regularization term $\Omega\left(w\right)$.

\paragraph{Simplex-based regularization.}

Let $\mathcal{S}_{u}=\left\{ \beta=\left[\beta_{i}\right]_{i=s+1}^{m}\in\mathbb{R}_{+}^{m-s}:\sum_{i=s+1}^{m}\beta_{i}=1\right\} $
be a simplex w.r.t. the \emph{goal-unachieved} tasks and $\mathcal{S}=\left\{ \bzero_{s}\right\} \times\mathcal{S}_{u}$
be the extended simplex, where $\bzero_{s}$ is the $s$-dimensional vector of all
zeros. We define the regularization term $\Omega\left(w\right)$ as
the distance from $w$ to the extended simplex $\mathcal{S}$:
\begin{equation}
\Omega\left(w\right)=d\left(w,\mathcal{S}\right)=\text{min}_{\pi\in\mathcal{S}}\norm{w-\pi}_{2}^{2}.\label{eq:simplex_reg}
\end{equation}

Because $\mathcal{S}$ is a compact and convex set and $\norm{w-\pi}_{2}^{2}$
is a differentiable and convex function, the optimization problem in
(\ref{eq:simplex_reg}) has a unique global minimizer $\Omega\left(w\right)=\norm{w-\text{proj}_{\mathcal{S}}\left(w\right)}_{2}^{2}$,
where the projection $\text{proj}_{\mathcal{S}}\left(w\right)$ is
defined as
\[
\text{proj}_{\mathcal{S}}\left(w\right)=\text{argmin}_{\pi\in\mathcal{S}}\norm{w-\pi}_{2}^{2}.
\]
The following lemma shows us how to find the projection $\text{proj}_{\mathcal{S}}\left(w\right)$
and evaluate $\Omega\left(w\right)$.
\begin{restatable}{lem}{lemmaone}\label{lem:projection}
    Sorting $w_{s+1:m}$ into $u_{s+1:m}$ such that $u_{s+1}\geq u_{s+2}\geq...\geq u_{m}$. Defining $\rho=\max\left\{ s+1\leq i\leq m:u_{i}+\frac{1}{i-s}\left(1-\sum_{j=s+1}^{i}u_{j}\right)>0\right\} $.
    Denoting $\gamma=\frac{1}{\rho}\left(1-\sum_{i=s+1}^{\rho}u_{i}\right)$,
    the projection $\text{proj}_{\mathcal{S}}\left(w\right)$ can be computed
    as
    \[
    \text{proj}_{\mathcal{S}}\left(w\right)_{i}=\begin{cases}
    0 & 1\leq i\leq s\\
    \max\left\{ w_{i}+\gamma,0\right\}  & \text{otherwise}
    \end{cases}
    \]
    \vspace{-2mm}
    Furthermore, the regularization $\Omega\left(w\right)$ has the form:
    \begin{equation}
    \Omega\left(w\right)=\sum_{i=1}^{s}w_{i}^{2}+\sum_{i=s+1}^{m}\left(w_{i}-\max\left\{ w_{i}+\gamma,0\right\} \right)^{2}.\label{eq:reg_simplex}
    \end{equation}    
\end{restatable}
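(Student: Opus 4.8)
The plan is to exploit the product structure $\mathcal{S}=\left\{ \bzero_{s}\right\} \times\mathcal{S}_{u}$. Since the squared Euclidean distance $\norm{w-\pi}_2^2=\sum_{i=1}^m (w_i-\pi_i)^2$ separates across coordinates and the constraints on the two coordinate blocks are independent, the projection decouples into two subproblems. On the first $s$ coordinates the only feasible choice is $\pi_i=0$, which contributes the fixed term $\sum_{i=1}^{s}w_i^2$ and yields $\text{proj}_{\mathcal{S}}(w)_i=0$ for $1\le i\le s$. On the remaining coordinates the problem reduces exactly to the Euclidean projection of the subvector $w_{s+1:m}$ onto the probability simplex $\mathcal{S}_u$. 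Thus the whole lemma reduces to the classical characterization of the projection onto a probability simplex, and existence/uniqueness is already granted by the compactness of $\mathcal{S}$ and strict convexity of the objective (stated above the lemma).

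For this core subproblem I would write the KKT conditions for minimizing $\tfrac12\sum_{i=s+1}^{m}(\pi_i-w_i)^2$ subject to $\sum_{i=s+1}^m \pi_i=1$ and $\pi_i\ge 0$, introducing a multiplier $\gamma$ for the equality and $\mu_i\ge 0$ for the inequalities. Stationarity gives $\pi_i=w_i+\gamma+\mu_i$, and complementary slackness $\mu_i\pi_i=0$ forces $\pi_i=\max\{w_i+\gamma,0\}$: if $\pi_i>0$ then $\mu_i=0$ and $\pi_i=w_i+\gamma>0$, while if $\pi_i=0$ then dual feasibility gives $w_i+\gamma\le 0$. Because the objective is strictly convex and Slater's condition holds (the uniform vector $\pi_i=\tfrac{1}{m-s}$ is strictly feasible), this KKT point is the unique global minimizer, so the $\max\{w_i+\gamma,0\}$ form is both necessary and sufficient.

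It remains to pin down $\gamma$, which is where the sorted values $u$ and the index $\rho$ enter, and this is the main technical step. The active set $\{i:w_i+\gamma>0\}$ is upward-closed in the value $w_i$, so after sorting into $u_{s+1}\ge\cdots\ge u_m$ it is a prefix $\{s+1,\dots,\rho\}$. Imposing the equality constraint on this prefix, $\sum_{i=s+1}^{\rho}(u_i+\gamma)=1$, gives the relation $(\rho-s)\gamma=1-\sum_{i=s+1}^{\rho}u_i$, i.e. $\gamma$ is $1-\sum_{i=s+1}^{\rho}u_i$ divided by the number $\rho-s$ of active coordinates (consistent with the factor $\tfrac{1}{i-s}$ in the test defining $\rho$). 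The delicate part is verifying that the threshold index produced by the stated test $u_i+\tfrac{1}{i-s}\big(1-\sum_{j=s+1}^{i}u_j\big)>0$ coincides with the true support size: by a monotonicity argument one shows the candidate quantity stays positive up to $i=\rho$ and turns nonpositive for $i>\rho$, so that $\rho=\max\{\,i:\ \text{test}>0\,\}$ exactly recovers the support and the corresponding $\gamma$ simultaneously satisfies the equality constraint and the sign conditions $u_i+\gamma>0$ on the prefix, $u_i+\gamma\le 0$ off it. I expect this consistency/monotonicity argument — the classical but fiddly Held--Wolfe--Crowder / Duchi-type simplex-projection lemma — to be the real obstacle, rather than the KKT derivation.

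Finally, I would substitute $\text{proj}_{\mathcal{S}}(w)$ back into $\norm{w-\text{proj}_{\mathcal{S}}(w)}_2^2$ and split the sum over the first $s$ coordinates, where the projection is $0$ and each term equals $w_i^2$, and the remaining coordinates, where the projection is $\max\{w_i+\gamma,0\}$. This directly yields the closed form $\Omega(w)=\sum_{i=1}^{s}w_i^2+\sum_{i=s+1}^{m}\big(w_i-\max\{w_i+\gamma,0\}\big)^2$ asserted in (\ref{eq:reg_simplex}), completing the proof.
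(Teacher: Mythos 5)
Your proposal is correct and follows essentially the same route as the paper's proof: fix the first $s$ coordinates to zero, reduce to the Euclidean projection of $w_{s+1:m}$ onto the probability simplex, obtain $\pi_i=\max\{w_i+\gamma,0\}$ from the KKT stationarity and complementary-slackness conditions, and determine $\gamma$ from the equality constraint on the sorted prefix of active coordinates. The only substantive difference is that you invoke the classical Held--Wolfe--Crowder/Duchi-type consistency argument for the claim that the stated test recovers the support size $\rho$, whereas the paper writes this verification out explicitly (showing the test quantity is positive for $i\le\rho$ and nonpositive for $i>\rho$); note also that your relation $(\rho-s)\gamma=1-\sum_{i=s+1}^{\rho}u_i$, i.e.\ $\gamma=\frac{1}{\rho-s}\left(1-\sum_{i=s+1}^{\rho}u_i\right)$, agrees with the paper's own derivation and with Theorem~\ref{lem:sim_op}, and thereby silently corrects the $\frac{1}{\rho}$ typo in the lemma statement.
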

\vspace{-0mm}
With further algebraic manipulations, $\Omega\left(w\right)$ can be significantly simplified as shown in Theorem \ref{lem:sim_op}.
\begin{restatable}{thm}{lemmatwo}\label{lem:sim_op}
    The regularization $\Omega\left(w\right)$ has the following closed-form:
    \begin{equation}
    \Omega\left(w\right)=\sum_{i=1}^{s}w_{i}^{2}+\frac{1}{m-s}\left(1-\sum_{i=s+1}^{m}w_{i}\right)^{2}.\label{eq:reg_simplex-1}
    \end{equation} 
\end{restatable}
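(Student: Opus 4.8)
The plan is to start from the expression for $\Omega(w)$ established in Lemma~\ref{lem:projection} and show that, because $w$ lies on the probability simplex $\simplex_m$, the clipping inside $\max\{w_i+\gamma,0\}$ never activates, after which the whole goal-unachieved sum collapses to a single square. Recall that we parameterize $w=\text{softmax}(\alpha)$, so $w_i\geq 0$ for every $i$ and $\sum_{i=1}^m w_i=1$; in particular the tail mass $\sigma:=\sum_{i=s+1}^m w_i$ satisfies $\sigma\leq 1$. This simplex membership is the one assumption I must use in an essential way.

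First I would argue that none of the nonnegativity constraints defining $\mathcal{S}$ is active at the projection, equivalently $\rho=m$ in Lemma~\ref{lem:projection}. Set $\gamma^{*}=\frac{1}{m-s}\left(1-\sigma\right)$; simplex membership gives $1-\sigma=\sum_{i=1}^{s}w_i\geq 0$, hence $\gamma^{*}\geq 0$, and since every $w_i\geq 0$ we get $w_i+\gamma^{*}\geq 0$ for all $i>s$. Consider the candidate $\pi$ with $\pi_i=0$ for $i\leq s$ and $\pi_i=w_i+\gamma^{*}$ for $i>s$: it is nonnegative and satisfies $\sum_{i>s}\pi_i=\sigma+(m-s)\gamma^{*}=1$, so $\pi\in\mathcal{S}$. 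Since the only binding structure is the single equality $\sum_{i>s}\pi_i=1$, this $\pi$ is exactly the Euclidean projection of $w$ onto the affine hyperplane carrying $\mathcal{S}$, and as it already lies in $\mathcal{S}$ it equals $\text{proj}_{\mathcal{S}}(w)$; this is the $\rho=m$ branch of Lemma~\ref{lem:projection} with shift $\gamma^{*}$.

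With no clipping, each goal-unachieved term simplifies to $w_i-\max\{w_i+\gamma,0\}=w_i-(w_i+\gamma^{*})=-\gamma^{*}$, so that $\sum_{i=s+1}^m\left(w_i-\max\{w_i+\gamma,0\}\right)^2=(m-s)(\gamma^{*})^2=\frac{1}{m-s}\left(1-\sum_{i=s+1}^m w_i\right)^2$. Substituting this into (\ref{eq:reg_simplex}) while leaving the goal-achieved block $\sum_{i=1}^s w_i^2$ untouched yields the claimed closed form (\ref{eq:reg_simplex-1}).

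The step I expect to be the crux is the no-clipping argument, because the identity is genuinely false for arbitrary $w$: for an off-simplex $w$ with large tail mass the projection hits a boundary face and $\Omega(w)$ no longer equals $\frac{1}{m-s}(1-\sigma)^2$. The proof therefore must invoke $w\in\simplex_m$ precisely to force $\gamma^{*}\geq 0$ and thereby inactive nonnegativity constraints. For completeness I would also record the boundary cases: $s=0$ makes the first sum empty and gives $\Omega(w)=\frac{1}{m}(1-1)^2=0$, consistent with $w$ already lying in $\mathcal{S}=\simplex_m$, while the formula tacitly assumes $s<m$ so that at least one goal-unachieved task exists and $\frac{1}{m-s}$ is well defined.
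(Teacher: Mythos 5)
Your proof is correct and takes essentially the same route as the paper's: both arguments hinge on the fact that simplex membership of $w$ gives $1-\sum_{i=s+1}^{m}w_{i}=\sum_{i=1}^{s}w_{i}\geq0$, hence $\gamma\geq0$, so the clipping $\max\{w_{i}+\gamma,0\}$ is inactive (equivalently $\rho=m$ in Lemma~\ref{lem:projection}) and each goal-unachieved residual equals $\gamma$, collapsing the sum to $(m-s)\gamma^{2}=\frac{1}{m-s}\bigl(1-\sum_{i=s+1}^{m}w_{i}\bigr)^{2}$. The only cosmetic difference is how the no-clipping claim is certified: the paper checks the defining maximality condition of $\rho$ from Lemma~\ref{lem:projection} at the index $i=m$, whereas you re-derive the projection directly by projecting onto the affine hull of $\mathcal{S}$ and observing that the result is already feasible; both reduce to the same inequality and the same final arithmetic.
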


The proof of Lemma \ref{lem:projection} and Theorem \ref{lem:sim_op} can be found in Appendix \ref{sec:sup-proof}.
Evidently, the regularization term in Eq. (\ref{eq:reg_simplex-1})
in Theorem \ref{lem:sim_op} encourages the weights $w_{1:s}$ associated with
the \emph{goal-achieved} tasks to be as small as possible and
the weights $w_{s+1:m}$associated with the \emph{goal-unachieved} tasks
to move closer to the simplex $\mathcal{S}_{u}$ (i.e., $\sum_{i=s+1}^{m}w_{i}$
is closer to $1$).

\paragraph{Parameterized TA-MOO.}
Algorithm \ref{alg:bilevel-tamoo} summarizes the key steps of our TA-MOO. 
We use gradient descent to find solution $\delta$ for the OP \ref{eq:moo} in $L$ steps and at each iteration we solve the OP in \ref{eq:op_reg} in $K$ steps using gradient descent solver with the parameterization $w=\text{softmax}\left(\alpha\right)$. 
To reduce computational cost, at each iteration we reuse the previous solution $\alpha$ and use a few steps $K$ (i.e., $K \leq 10$) to get new solution. 
We then compute the combined gradient $g_t$ and finally update $\delta_{t}$ to $\delta_{t+1}$ using the combined gradient $g_t$ (or $\text{sign}(g_t)$ in the case of $L_\infty$ norm).
The projecting operation in step 13 is to project $\delta$ to a valid space specifying to applications that we introduce hereon.

\begin{algorithm}[h]
\begin{algorithmic}[1]

\REQUIRE Multi-objective functions $f_{1:m}\left(\delta\right)$. $\delta$'s solver with L update steps and learning rate $\eta_\delta$. 
$w$'s Gradient Descent Solver (GD) with K update steps and learning rate $\eta_w$ and variable $\alpha$. The softmax function denotes by $\sigma$. Tradeoff parameter $\lambda$. 

\ENSURE The optimal solution $\delta^{*}$.

\STATE Initialize $\delta_{0}$ (e.g., $\delta_{0} \sim \mathcal{U}(-\epsilon, \epsilon)$).

\STATE Initialize $\alpha_0=[\alpha_0^i]_{i=1}^m$ with $\alpha_0^i=1/m$.

\FOR {$t=0$ to $L-1$}

\STATE Collect list of tasks' gradients $\{ \nabla_{\delta} f_i (\delta_t) \}_{i=1}^m$.

\STATE Compute $Q$ with $Q_{ij}=\nabla_{\delta}f_{i}\left(\delta_{t}\right)^{T}\nabla_{\delta}f_{j}\left(\delta_{t}\right)$.

\STATE Initialize $\alpha_{t+1}=\alpha_t$
\FOR {$k=0$ to $K-1$}
\STATE Compute $\mathcal{L}(\alpha_{t+1})=\sigma(\alpha_{t+1})^T Q \sigma(\alpha_{t+1}) + \lambda \Omega (\sigma(\alpha_{t+1}))$.  

\STATE Update $\alpha_{t+1} = \alpha_{t+1} - \eta_w \nabla_\alpha \mathcal{L}(\alpha_{t+1})$.
\ENDFOR

\STATE Compute the combined gradient $g_{t}=\sum_{i=1}^{m} \sigma(\alpha_{t+1,i}) \nabla_{\delta}f_{i}\left(\delta_{t}\right)$.

\STATE Update $\delta_{t+1}=\delta_{t} + \eta_\delta g_{t}$.

\STATE Project $\delta_{t+1}$ to a valid space (specific to domain, e.g., $\norm{\delta} \leq \epsilon$). 

\ENDFOR

\STATE Output $\delta^{*}=\delta_{L}$.

\end{algorithmic}

\caption{Pseudocode for Parameterized TA-MOO.\label{alg:bilevel-tamoo}}
\end{algorithm}

\subsection{Applications in Adversarial Generation}
Although TA-MOO is a general framework, we in this paper focus on its applications in adversarial generation. Following~\cite{wang2021adversarial}, we consider three tasks of generating adversarial examples.
\paragraph{Generating adversarial examples for an ensemble model.}
Considering an ensemble classifier with multiple classification models $h_{1},h_{2},...,h_{m}$,
where $h_{i}\left(x\right)\in\simplex_{M}=\left\{ \pi\in\mathbb{R}_{+}^{M}:\norm{\pi}_{1}=1\right\} $
with the number of classes $M$.
Given a data sample $x$, our aim is to find an adversarial example $x^{a}=x+\delta$ that can successfully attack all the models.
Specifically, we consider a set of tasks each of which, $\mathcal{T}_{i}$, is about whether $x+\delta$ can successfully attack model $h_{i}$, defined as:
\[
\mathbb{I}\left\{ \text{argmax}_{1\leq k\leq M}h_{i}\left(x+\delta,k\right)\neq y\right\},
\]
where $y$ is the ground truth label of $x$,
$\mathbb{I}$ is the indicator function and $h_{i}\left(x,k\right)$
returns the probability to predict $x$ to the class $k$.
To find a perturbation $\delta$ that can attack successfully all
models, we solve the following multi-objective optimization problem:
\[
\max_{\delta:\norm{\delta}\leq\epsilon}\left[f_{1}\left(\delta\right),...,f_{m}\left(\delta\right)\right],
\]
where $f_{i}\left(\delta\right)=\ell\left(h_{i}\left(x+\delta\right),y\right)$
with the loss function $\ell$ which could be the cross-entropy (CE) loss
\citep{madry2017towards}, the Kullback-Leibler (KL) loss \citep{trades},
or the Carlini-Wagner (CW) loss \citep{carlini2017towards}. 

\paragraph{Generating universal perturbations.}
Considering a single classification model $h$ with $h\left(x\right)\in\simplex_{M}$
and a batch of data samples $x_{1},x_{2},...,x_{B}$, we would like to
find a perturbation $\delta$ with $\norm{\delta}\leq\epsilon$
such that $x_{i}^{a}=x_{i}+\delta,i=1,...,B$, are adversarial examples. 
We define the task $\mathcal{T}_{i}$ as finding the adversarial example
$x_{i}^{a}=x_{i}+\delta$ for data sample $x_{i}$. For each task $\mathcal{T}_{i}$,
we can define its goal as finding successfully the adversarial
example $x_{i}^{a}$:
\[
\mathbb{I}\left\{ \text{argmax}_{1\leq k\leq M}h\left(x_{i}^{a},k\right)\neq\text{argmax}_{1\leq k\leq M}h\left(x_{i},k\right)\right\} .
\]
To find the perturbation $\delta$, we solve the following
multi-objective optimization problem:
\[
\max_{\delta:\norm{\delta}\leq\epsilon}\left[f_{1}\left(\delta\right),...,f_{m}\left(\delta\right)\right],
\]
where $f_{i}\left(\delta\right)=\ell\left(h\left(x_{i}^{a}\right),y_{i}\right)=\ell\left(h\left(x_{i}+\delta\right),y_{i}\right)$
with $y_{i}$ the ground-truth label of $x_{i}$.

\paragraph{Generating adversarial examples against transformations.}
Considering a single classification model $h$ and $m$ categories
of data transformation $\mathcal{P}_{1:m}$ (e.g., rotation, lighting,
and translation). Our goal is to find an adversarial attack that
is robust to these data transformations. Specifically, given a benign example
$x$, we would like to learn a perturbation $\delta$ with $\norm{\delta}\leq\epsilon$ that can successfully attack the model after any transformation $t_{i}\sim\mathcal{P}_{i}$ is applied.
To formulate as an MOO problem, we consider the task $\mathcal{T}_{i}$
as finding the adversarial example $x_{i}^{a}=t_{i}\left(x+\delta\right)$
with $t_{i}\sim\mathcal{P}_{i}$. For each task $\mathcal{T}_{i}$,
we can define the goal as finding successfully the adversarial
example $x_{i}^{a}$:
\[
\mathbb{I}\left\{ \text{argmax}_{1\leq k\leq M}h\left(x_{i}^{a},k\right)\neq\text{argmax}_{1\leq k\leq M}h\left(x,k\right)\right\} .
\]
To find the perturbation $\delta$, we solve the following multi-objective
optimization problem:
\[
\max_{\delta:\norm{\delta}\leq\epsilon}\left[f_{1}\left(\delta\right),...,f_{m}\left(\delta\right)\right],
\]
where $f_{i}\left(\delta\right)=\mathbb{E}_{t_{i}\sim\mathcal{P}_{i}}\left[\ell\left(h\left(t_{i}\left(x+\delta\right)\right),y\right)\right]$
with $y$ the ground-truth label of $x$.

\section{Experiments}

In this section, we provide extensive experiments across four settings:
(i) generating adversarial examples for ensemble of models (ENS, Sec \ref{subsec:Ens}), (ii) generating universal perturbation (UNI, Sec \ref{subsec:Uni})
, (iii) generating robust adversarial examples against Ensemble
of Transformations (EoT, Sec \ref{subsec:EoT}), 
and (iv) adversarial training for ensemble of models (AT, Sec \ref{subsec:Adv-Train}). 
The details of each setting can be found in Appendix \ref{sec:sup-exp-setting}.

\textbf{General settings.}
Through our experiments, we use six common architectures for the classifier including
ResNet18 \citep{he2016deep}, VGG16 \citep{simonyan2014very}, GoogLeNet
\citep{szegedy2015going}, EfficientNet \citep{tan2019efficientnet}, 
MobileNet \cite{howard2017mobilenets}, and WideResNet \cite{zagoruyko2016wide} 
with the implementation \footnote{https://github.com/kuangliu/pytorch-cifar}.
We evaluate on the full testing set of two benchmark datasets
which are CIFAR10 and CIFAR100 \citep{cifar10}. We observed that the
attack performance is saturated with standard training models. Therefore,
to make the job of adversaries more challenging, we use Adversarial
Training with PGD-AT \citep{madry2017towards} to robustify the models
and use these robust models as the victim models in our experiments. 

\begin{table}
    \centering{}\caption{Evaluation of Attacking Ensemble model on the CIFAR10 and CIFAR100 datasets.}
    \label{tab:ens-main-res}
    \begin{tabular}{llcccccc}
        &  & \multicolumn{2}{c}{CW} & \multicolumn{2}{c}{CE} & \multicolumn{2}{c}{KL}\tabularnewline
        &  & \textcolor{blue}{A-All} & A-Avg & \textcolor{blue}{A-All} & A-Avg & \textcolor{blue}{A-All} & A-Avg\tabularnewline
    \midrule 
    \multirow{4}{*}{CIFAR10} & Uniform & \textcolor{blue}{26.37} & \textbf{41.13} & \textcolor{blue}{28.21} & 48.34 & \textcolor{blue}{17.44} & \uuline{32.85}\tabularnewline
        & MinMax & \textcolor{blue}{\uuline{27.53}} & \uuline{41.20} & \textcolor{blue}{\uuline{35.75}} & \textbf{51.56} & \textcolor{blue}{\uuline{19.97}} & \textbf{33.13}\tabularnewline
        & MOO & \textcolor{blue}{18.87} & 34.24 & \textcolor{blue}{25.16} & 44.76 & \textcolor{blue}{15.69} & 29.54\tabularnewline
        & TA-MOO & \textbf{\textcolor{blue}{30.65}} & 40.41 & \textbf{\textcolor{blue}{38.01}} & \uuline{51.10} & \textbf{\textcolor{blue}{20.56}} & 31.42\tabularnewline
    \midrule
    \multirow{4}{*}{CIFAR100} & Uniform & \textcolor{blue}{52.82} & \textbf{67.39} & \textcolor{blue}{55.86} & 72.62 & \textcolor{blue}{38.57} & \textbf{54.88}\tabularnewline
        & MinMax & \textcolor{blue}{\uuline{54.96}} & 66.92 & \textcolor{blue}{\uuline{63.70}} & \uuline{75.44} & \textcolor{blue}{\uuline{40.67}} & \uuline{53.83}\tabularnewline
        & MOO & \textcolor{blue}{51.16} & 65.87 & \textcolor{blue}{58.17} & 73.19 & \textcolor{blue}{39.18} & 53.44\tabularnewline
        & TA-MOO & \textbf{\textcolor{blue}{55.73}} & \uuline{67.02} & \textbf{\textcolor{blue}{64.89}} & \textbf{75.85} & \textbf{\textcolor{blue}{41.97}} & 53.76\tabularnewline
    \bottomrule
    \end{tabular}
    \vspace{-0mm}
\end{table}

\textbf{Evaluation metrics.}
We use three metrics to evaluate the attack performance including
(i) $\text{A-All}$:
the Attack Success Rate (ASR) when an adversarial example can achieve goals
in all tasks. This is considered as the most important metric to indicate how well one method can achieve in all tasks; 
(ii)$\text{A-Avg}$: the average Attack Success Rate over all
tasks which indicate the average attacking performance; 
(iii)$\{\text{A-i}\}_{i=1}^{K}$: Attack Success Rate in each
individual task. 
For reading comprehension purposes, if necessary  the highest/second highest performance 
in each experimental setting is highlighted in \textbf{Bold}/\uuline{Underline} 
and the \textcolor{blue}{most important metric(s)} is emphasized in blue color.

\textbf{Baseline methods.}
We compare our method with the \textbf{Uniform} strategy which assigns the same weight for all tasks and the \textbf{MinMax} method \citep{wang2021adversarial} which examines only the worst-case performance across all tasks. To increase the generality to other tasks, MinMax requires a regularization to balance between the average and the worst-case performance. 
We use the same attack setting for all methods: the attack is the $L_\infty$ untargeted attack with 100 steps, step size $\eta_\delta=2/255$ and perturbation limitation $\epsilon=8/255$. 
The GD solver in TA-MOO uses 10 steps with learning rate $\eta_w=0.005$. 
Further detail can be found in Appendix \ref{sec:sup-exp-setting}. 
\subsection{Adversarial Examples for Ensemble of Models (ENS) \label{subsec:Ens}}

\textbf{Experimental setting.}
In our experiment, we use an ensemble of four adversarially trained
models: ResNet18, VGG16, GoogLeNet, and EfficientNet. The architecture
is the same for both the CIFAR10 and CIFAR100 datasets except for
the last layer which corresponds with the number of classes in each
dataset. The final output of the ensemble is an average of the probability
outputs (i.e., output of the softmax layer). We use three different
losses as an object for generating adversarial examples including
CE \citep{madry2017towards}, 
KL \citep{trades}, and CW \citep{carlini2017towards}.

\textbf{\textit{Results 1: TA-MOO achieves the best performance.}}
Table \ref{tab:ens-main-res} shows the results of attacking the ensemble model on the CIFAR10 and CIFAR100 datasets. 
It can be seen that TA-MOO significantly outperforms the baselines and achieves the best performance in all the settings. For example, the improvement over the Uniform strategy is around 10\% on both datasets with the CE loss. Comparing to the MinMax method, the biggest improvement is around 3\% for CIFAR10 with CW loss and the lowest one is around 0.6\% with the KL loss. The improvement can be observed in all the settings, showing the generality of the proposed method.

\textbf{\textit{Results 2: When does not MOO work?}}
It can be observed that MOO falls behind all other methods, even compared with the Uniform strategy. Our hypothesis for the failure of MOO is that in the original setting with an ensemble of 4 diverse architectures (i.e., ResNet18, VGG16, GoogLeNet, and EfficientNet) there is one task that dominates the others and makes MOO become trapped (i.e., focusing on improving the dominant task). 
To verify our hypothesis, we measure the gradient norm $\| \nabla_{\delta} f_i(\delta) \|$ corresponding to each model and the final weight $w$ of 1000 samples and report the results in Table \ref{tab:ENS-non-diverse}. 
It can be seen that the EfficientNet has a much lower gradient strength, therefore, it has a much higher weight. This explains the highest ASR observed in EfficientNet and the large gap of ~19\% (56.11\% in EfficientNet and 37.05\% in GoogLeNet). 
To further confirm our hypothesis, we provide an additional experiment on a non-diverse ensemble model which consists of 4 individual ResNet18 models. It can be observed that in the non-diverse setting, the gradient strengths are more balanced across models, indicating that no task dominates others. As a result, MOO shows its effectiveness by outperforming the Uniform strategy by 4.3\% in A-All. 

\textbf{\textit{Results 3: The importance of the Task-Oriented regularization.}}
It can be observed from Table \ref{tab:ENS-non-diverse} that in the diverse setting, TA-MOO has a much lower gap (4\%) between the highest ASR (53.4\% at EfficientNet) and the lowest one (49.29\% at GoogLeNet) compared to MOO (~19\%). Moreover, while the ASR of EfficientNet is lower by ~2.7\%, the ASRs of all other architectures have been improved considerably (i.e., ~12\% in GoogLeNet). This improvement shows the importance of the Task-Oriented regularization, which helps to avoid being trapped by one dominating task, as happened in MOO. For the non-diverse setting, when no task dominates others, TA-MOO still shows its effectiveness when improving the ASR in all tasks by around 5\%. The significant improvement can be observed in all settings (except the setting on EfficientNet with the CIFAR10 dataset) as shown in Table \ref{tab:ens-main-res}, and demonstrates the generality of the Task-Oriented regularization.

\begin{table}
\centering{}\caption{Attacking Ensemble model with a diverse set D=\{R-ResNet18, V-VGG16, G-GoogLeNet, E-EfficientNet\} and non-diverse
set ND=\{4 ResNets\}. $w$ represents the final $w$ of MOO (mean $\pm$ std). $\| \nabla_{\delta} f_i(\delta) \|$ represents the gradient norm of each model (mean $\pm$ std).}
\label{tab:ENS-non-diverse}
\begin{tabular}{llcccccc}
 &  & \textcolor{blue}{A-All} & A-Avg & R/R1 & V/R2 & G/R3 & E/R4\tabularnewline
\midrule 
\multirow{3}{*}{D} 
& $\| \nabla_{\delta} f_i(\delta) \| $ & - & - & 7.15 $\pm$ 6.87 & 4.29 $\pm$ 4.64 & 7.35 $\pm$ 7.21 & 0.98 $\pm$ 0.72\tabularnewline
& $w$ & - & - & 0.15 $\pm$ 0.14 & 0.17 $\pm$0.13 & 0.15 $\pm$ 0.14 & 0.53 $\pm$ 0.29\tabularnewline
& Uniform & \textcolor{blue}{28.21} & 48.34 & 48.89 & 49.08 & 48.38 & 47.03\tabularnewline
 & MOO & \textcolor{blue}{25.16} & 44.76 & 39.06 & 46.83 & 37.05 & 56.11\tabularnewline
 & TA-MOO & \textcolor{blue}{38.01} & 51.10 & 49.55 & 52.15 & 49.29 & 53.40\tabularnewline
 \midrule 
\multirow{3}{*}{ND} 
& $\| \nabla_{\delta} f_i(\delta) \| $ & - & - & 8.41 $\pm$ 8.22 & 6.68$\pm$ 6.95 & 7.36 $\pm$ 6.03 & 5.67 $\pm$ 6.09\tabularnewline
& $w$ & - & - & 0.23 $\pm$ 0.21 & 0.24$\pm$0.17 & 0.23 $\pm$ 0.19 & 0.30 $\pm$ 0.21\tabularnewline
& Uniform & \textcolor{blue}{28.17} & 48.75 & 51.94 & 45.55 & 54.15 & 43.34\tabularnewline
 & MOO & \textcolor{blue}{32.50} & 52.21 & 53.25 & 49.05 & 56.80 & 49.76\tabularnewline
 & TA-MOO & \textcolor{blue}{41.01} & 57.33 & 58.88 & 55.32 & 60.81 & 54.29\tabularnewline
\bottomrule
\end{tabular}
\end{table}

\textbf{\textit{Results 4: TA-MOO achieves the best transferability on a diverse set of ensembles.}}

Table \ref{tab:ens-transferability} reports the SAR-All metric of transferred adversarial examples 
crafted from a source ensemble (RME) on attacking target ensembles (e.g., RMEVW is an ensemble of 5 models).
A higher number indicates a higher success rate of attacking a target model, therefore, 
also implies a higher transferability of adversarial examples. 
It can be seen that our TA-MOO adversary achieves the highest attacking performance on the whitebox attack setting, 
with a huge gap of 9.24\% success rate over the Uniform strategy. Our method also achieves 
the highest transferability regardless diversity of a target ensemble. 
More specifically, on target models such as REV, MEV, and RMEV, where members in the source ensemble (RME) 
are also in the target ensemble, our TA-MOO significantly outperforms the Uniform strategy, 
with the highest improvement is 5.19\% observed on target model RMEV. 
On the target models EVW and MVW which are less similar to the source model, 
our method still outperforms the Uniform strategy by 1.46\% and 1.65\%. 
The superior performance of our adversary on the transferability shows another benefit 
of using multi-objective optimization in generating adversarial examples. 
By reaching the intersection of all members' adversarial regions, our adversary is capable  
to generate a common vulnerable pattern on an input image shared across architectures, 
therefore, increasing the transferability of adversarial examples. 
More discussion can be found in Appendix \ref{subsec:sup-ens-exp}. 

    \begin{table}
    \caption{Evaluation on the Transferability of adversarial examples. Each cell
    (row-ith, column-jth) reports SAR (higher is better) of adversarial
    examples from the same source architecture (RME) with an adversary
    at row-ith to attack an ensemble at column-jth. Each architecture
    has been denoted by symbols such as R: ResNet18, M: MobileNet, E:
    EfficientNet, V: VGG16, W: WideResNet. For examples, RME represents
    for an ensemble of ResNet18, MobileNet and EfficientNet.\label{tab:ens-transferability}}
    
    \centering{}%
    \begin{tabular}{lcccccccc}
     & RME & RVW & EVW & MVW & REV & MEV & RMEV & RMEVW\tabularnewline
    \midrule 
    Uniform & 31.73 & \uuline{25.03} & 22.13 & 22.73 & 29.50 & 28.44 & 26.95 & 20.50\tabularnewline
    MinMax & \uuline{40.01} & 23.75 & 22.39 & 23.34 & \uuline{32.57} & \uuline{32.75} & \uuline{31.85} & \uuline{21.99}\tabularnewline
    MOO & 35.20 & 24.25 & \uuline{22.94} & \uuline{23.76} & 30.65 & 32.28 & 29.49 & 21.77\tabularnewline
    TA-MOO & \textbf{40.97} & \textbf{25.13} & \textbf{23.59} & \textbf{24.38} & \textbf{33.00} & \textbf{33.05} & \textbf{32.14} & \textbf{23.04}\tabularnewline
    \bottomrule
    \end{tabular}
\end{table}

\subsection{Adversarial Training with TA-MOO for Ensemble of Models (ENS) \label{subsec:Adv-Train}}

We conduct adversarial training with adversarial examples generated
by MOO and TA-MOO attacks to verify the quality of these adversarial
examples and report results on Table \ref{tab:ens-adv-training}. The detailed setting and more experimental results can be found in Appendix \ref{subsec:sup-adv-training}.
\textbf{Result 1: Reducing transferability. } It can be seen that the SAR-All of MOO-AT and 
TA-MOO-AT are much lower than that on other methods. More specifically, the gap of SAR-All between 
PGD-AT and TA-MOO-AT is (5.33\%) 6.13\% on the (non) diverse setting. 
The lower SAR-All indicating that adversarial examples are harder to transfer among ensemble members on the TA-MOO-AT model than on the PGD-AT model.
\textbf{Result 2: Producing more robust single members. } The comparison of average SAR shows that adversarial training with TA-MOO 
produces more robust single models than PGD-AT does. More specifically, the average robust accuracy (measured by 100\% - \textit{A-Avg}) 
of TA-MOO-AT is 32.17\%, an improvement of 6.06\% over PGD-AT in the non-diverse setting, while 
there is an improvement of 4.66\% in the diverse setting.
\textbf{Result 3: Adversarial training with TA-MOO achieves the best robustness. }
More specifically, on the non-divese setting, TA-MOO-AT achives 38.22\% robust accuracy, an improvement 
of 1\% over MinMax-AT and 5.44\% over standard PGD-AT. On the diverse setting, the improvement
over MinMax-AT and PGD-AT are 0.9\% and 4\%, respectively. The root of the improvement is the ability to generate stronger adversarial examples in the
the sense that they can challenge not only the entire ensemble model but also
all single members. These adversarial examples lie in the joint insecure
region of members (i.e., the low confidence region of multiple classes),
therefore, making the decision boundaries more separate. 
As a result, adversarial training with TA-MOO produces more robust 
single models (i.e., lower SAR-Avg) and significantly reduces 
the transferability of adversarial examples among members (i.e., lower SAR-All). 
These two conditions explain the best ensemble adversarial robustness achieved by TA-MOO. 

\begin{table}
\caption{Robustness evaluation of Adversarial Training methods on the CIFAR10
dataset. RME represents an ensemble of ResNet18 (R), MobileNet (M)
and EfficientNet E), while MobiX3 represents an ensemble of three
MobileNets. NAT and ADV measure the natural accuracy and the robust
accuracy against PGD-Linf attack ($\uparrow$the higher the better).
Other metrics measure the success attack rate (SAR) of adversarial
examples generated by the same PGD-Linf attack on fooling each single
member and all members of the ensemble ($\downarrow$the lower the
better). \label{tab:ens-adv-training}}

\centering{}%
\begin{tabular}{lrrrrrrrrr}
 & \multicolumn{4}{c}{MobiX3} & \multicolumn{5}{c}{RME}\tabularnewline
\cline{2-5} \cline{3-5} \cline{4-5} \cline{5-5} \cline{7-10} \cline{8-10} \cline{9-10} \cline{10-10} 
 & NAT$\uparrow$ & \textcolor{blue}{ADV}$\uparrow$ & A-All$\downarrow$ & A-Avg$\downarrow$ &  & NAT$\uparrow$ & \textcolor{blue}{ADV}$\uparrow$ & A-All$\downarrow$ & A-Avg$\downarrow$\tabularnewline
\cline{1-5} \cline{2-5} \cline{3-5} \cline{4-5} \cline{5-5} \cline{7-10} \cline{8-10} \cline{9-10} \cline{10-10} 
PGD-AT & \textbf{80.43} & \textcolor{blue}{32.78} & 54.34 & 73.89 &  & \textbf{86.52} & \textcolor{blue}{37.36} & 49.01 & 69.75\tabularnewline
MinMax-AT & 79.01 & \textcolor{blue}{\uuline{37.28}} & 50.28 & \textbf{66.77} &  & \uuline{83.16} & \textcolor{blue}{\uuline{40.40}} & 46.91 & \uuline{65.73}\tabularnewline
MOO-AT & \uuline{79.38} & \textcolor{blue}{33.04} & \textbf{46.28} & 74.36 &  & 82.04 & \textcolor{blue}{37.48} & \uuline{45.24} & 70.11\tabularnewline
TA-MOO-AT & 79.22 & \textbf{\textcolor{blue}{38.22}} & \uuline{48.21} & \uuline{67.83} &  & 82.59 & \textbf{\textcolor{blue}{41.32}} & \textbf{43.68} & \textbf{65.09}\tabularnewline
\hline 
\end{tabular}
\end{table}

\subsection{Universal Perturbation (UNI)}
\label{subsec:Uni}
\paragraph{Experimental setting. }

We follow the experimental setup in \cite{wang2021adversarial}, where the full test set (10k images) is randomly divided into equal-size
groups (K images per group). The comparison has been conducted on
the CIFAR10 and CIFAR100 datasets, with an adversarially trained ResNet18
model and CW loss. We observed that the $\text{ASR-All}$ was mostly
zero, indicating that it is difficult to generate a general perturbation for all data points. Therefore, in Table \ref{tab:cf10-cf100-uni} we use $\text{ASR-Avg}$ to compare the performances of the methods. 
More experiments on VGG16 and EfficientNet models can be found in Appendix \ref{subsec:sup-uni-exp}.

\textbf{Results.}
Table \ref{tab:cf10-cf100-uni} shows the evaluation of generating universal perturbations on the CIFAR10 and CIFAR100 datasets, respectively. $K$ represents the number of images that are using the same perturbation. The larger the value of $K$, the harder it is to generate a universal perturbation that can be applied successfully to all images. 
It can be seen that with a small number of tasks (i.e., $K$=4), MOO and TA-MOO achieve lower performance than the MinMax method. However, with a large number of tasks (i.e, $K\geq8$), MOO and TA-MOO show their effectiveness and achieve the best performance. More specifically, on the CIFAR10 dataset, the improvements of MOO over the Uniform strategy are 5.6\%, 4\%, 3.2\%, and 2.5\% with $K=8$, $K=12$, $K=16$, and $K=20$, respectively. On the same setting, TA-MOO significantly improves MOO by around 4\% in all the $K$ settings and consistently achieves the best performance. 
Unlike the ENS setting, in the UNI setting, MOO consistently achieves better performance than the Uniform strategy 
. This improvement can be explained by the fact that in the UNI setting with the same architecture and data transformation, no task dominates the others. There will be a case (a group) when one sample is extremely close to/far from the decision boundary, and hence easier/harder to fool. However, in the entire test set with a large number of groups, the issue of dominating tasks is lessened.

\begin{table}
\begin{centering}
\caption{Evaluation of generating Universal Perturbation on the CIFAR10 and
CIFAR100 datasets.\label{tab:cf10-cf100-uni} }
\par\end{centering}
\centering{}\resizebox{\textwidth}{!}{ %
\begin{tabular}{lccccccccccc}
 & \multicolumn{5}{c}{CIFAR10} &  & \multicolumn{5}{c}{CIFAR100}\tabularnewline
 & K=4 & K=8 & K=12 & K=16 & K=20 &  & K=4 & K=8 & K=12 & K=16 & K=20\tabularnewline
\midrule 
Uniform & 37.52 & 30.34 & 27.41 & 25.52 & 24.31 &  & 65.40 & 58.99 & 55.33 & 53.02 & 51.49\tabularnewline
MinMax & \textbf{50.13} & 33.68 & 20.46 & 15.74 & 14.73 &  & \textbf{74.73} & 62.29 & 52.05 & 45.26 & 42.33\tabularnewline
MOO & 43.80 & \uuline{35.92} & \uuline{31.41} & \uuline{28.75} & \uuline{26.83} &  & 69.35 & \uuline{62.72} & \uuline{57.72} & \uuline{54.12} & \uuline{52.25}\tabularnewline
TA-MOO & \uuline{48.00} & \textbf{39.31} & \textbf{34.96} & \textbf{31.84} & \textbf{30.12} &  & \uuline{72.74} & \textbf{68.06} & \textbf{62.33} & \textbf{57.48} & \textbf{54.12}\tabularnewline
\bottomrule
\end{tabular}}\vspace{-2mm}
\end{table}

    \begin{table}
    \caption{Robust adversarial examples against transformations evaluation. I:
    Identity, H: Horizontal flip, V: Vertical flip, C: Center crop, G:
    Adjust gamma, B: Adjust brightness, R: Rotation. \label{tab:eot-main-res}}
    
    \centering{}%
    \begin{tabular}{llccccccccc}
     &  & \textcolor{blue}{A-All} & A-Avg & I & H & V & C & G & B & R\tabularnewline
    \midrule 
    \multirow{4}{*}{C10} & Uniform & \textcolor{blue}{25.98} & \textbf{55.33} & \textbf{44.85} & 41.58 & 82.90 & \textbf{72.56} & \textbf{45.92} & \textbf{49.59} & \textbf{49.93}\tabularnewline
     & MinMax & \textcolor{blue}{\uuline{30.54}} & 52.20 & 43.31 & \uuline{41.59} & 78.80 & 64.83 & 44.38 & 46.53 & 45.97\tabularnewline
     & MOO & \textcolor{blue}{21.25} & 49.81 & 36.23 & 33.93 & \textbf{87.47} & 71.05 & 37.68 & 40.21 & 42.12\tabularnewline
     & TA-MOO & \textbf{\textcolor{blue}{31.10}} & \uuline{55.26} & \uuline{44.15} & \textbf{41.86} & \uuline{85.19} & \uuline{71.86} & \uuline{45.53} & \uuline{48.70} & \uuline{49.54}\tabularnewline
    \midrule
    \multirow{4}{*}{C100} & Uniform & \textcolor{blue}{56.19} & \uuline{76.23} & \textbf{70.43} & 69.01 & 87.66 & \uuline{87.36} & 71.40 & \uuline{74.25} & \uuline{73.47}\tabularnewline
     & MinMax & \textcolor{blue}{\uuline{59.75}} & 75.72 & \uuline{70.13} & \uuline{69.26} & 87.45 & 86.03 & \uuline{71.54} & 73.30 & 72.32\tabularnewline
     & MOO & \textcolor{blue}{53.17} & 74.21 & 66.96 & 65.68 & \textbf{89.16} & 87.03 & 68.49 & 71.11 & 71.06\tabularnewline
     & TA-MOO & \textbf{\textcolor{blue}{60.88}} & \textbf{76.71} & \textbf{70.43} & \textbf{69.37} & \uuline{89.11} & \textbf{87.95} & \textbf{71.70} & \textbf{74.73} & \textbf{73.69}\tabularnewline
    \bottomrule
    \end{tabular}
    \end{table}

\subsection{Robust Adversarial Examples against Transformations (EoT)\label{subsec:EoT}}

\textbf{Results.}  Table \ref{tab:eot-main-res} shows the evaluation on the CIFAR10 and CIFAR100 datasets with 7 common data transformations. 
It can be observed that (i) MOO has a lower performance than the baselines, 
(ii) the Task Oriented regularization significantly boosts the performance, 
and (iii) our TA-MOO method achieves the best performance on both settings 
and outperforms the MinMax method 0.6\% and 1.1\% in the CIFAR10 and the CIFAR100 experiments, respectively. 
The low performance of MOO in observation (i) is again caused by the issue of one task dominating others. 
In the EoT setting, it is because of the V-vertical flip transformation as shown in Table \ref{tab:eot-main-res}. 
Observation (ii) provides another piece of evidence to support the effectiveness of the Task-Oriented regularization 
for MOO. This regularization boosts the ASRs in all the tasks (except V - the dominant one), increases the average ASR 
by 5.45\% and 2.5\% in the CIFAR10 and CIFAR100 experiments, respectively.

\subsection{Additional Experiments with Multi-Task Learning Methods} \label{sec:additional-moo} 

\REVISE{
In this section we would like to provide additional experiments with recent multi-task learning 
methods to explore how better constrained approaches can improve over the naive MOO. 
We applied three recent multi-task learning methods including PCGrad \cite{yu2020gradient}, CAGrad \cite{liu2021conflict}, and HVM \cite{albuquerque2019multi}
with implementation from their official repositories into our adversarial generation task.
We apply the best practice in \cite{albuquerque2019multi} which is adaptively updated the Nadir point based on the current tasks' losses.
For PCGrad we use the \textit{mean} as the reduction mode. 
For CAGrad we use parameter $\alpha=0.5$ and $\text{rescale}=1$ as in their default setting. 
We experiment on attacking ensemble of models setting with two settings, a diverse set \textit{D} with 4 different architectures including 
R-ResNet18, V-VGG16, G-GoogLeNet, E-EfficientNet and a non-diverse set \textit{ND} with 4 ResNet18 models.
}

\REVISE{
It can be seen from the Table \ref{tab:ENS-non-diverse-add-moo-with-hyper} that in the diverse ensemble setting, the three additional methods 
HVM, PCGrad and CAGrad significantly outperform the standard MOO method with the improvement gaps of SAR-All around 4.7\%, 3\% and 5\%, 
respectively. In the non-diverse ensemble setting, while HVM and PCGrad achieve lower performances than 
the standard MOO method, CAGrad can outperform the MOO method with a 2.7\% improvement. 
On comparison to the naive uniform method, the three methods also achieve better performance in both settings.
}

\REVISE{
The improvement on the diverse set of HVM, PCGrad and CAGrad over the standard MOO method is more 
noticeable than on the non-diverse set. It can be explained by the fact that on the diverse 
set of model architectures, there is a huge difference in gradients among architectures, therefore, 
requires a better multi-task learning method to handle the constraint between tasks. 
}

\REVISE{
On the other hand, on both ensemble settings, our TA-MOO still achieves the best performance, 
with a huge gap of (5.8\%) 7.8\% compared to the second best method on the (non) diverse setting.
It is because our method can leverage a supervised signal from knowing whether a task is achieved or not 
to focus on improving unsuccessful tasks. It is a huge advantage compared to unsupervised multi-task 
learning methods as MOO, HVM, PCGrad, and CAGrad.
}

\begin{center}
	\begin{table}
	\centering{}\caption{Attacking Ensemble model with a diverse set D=\{R-ResNet18, V-VGG16,
	G-GoogLeNet, E-EfficientNet\} and non-diverse set ND=\{4 ResNets\}.
	\label{tab:ENS-non-diverse-add-moo-with-hyper}}
	\begin{tabular}{llcccccc}
	 &  & \textcolor{blue}{A-All} & A-Avg & R/R1 & V/R2 & G/R3 & E/R4\tabularnewline
	\midrule 
	\multirow{6}{*}{D} & Uniform & \textcolor{blue}{28.21} & 48.34 & 48.89 & 49.08 & 48.38 & 47.03\tabularnewline
	 & HVM & \textcolor{blue}{29.88} & 46.98 & 48.97 & 48.10 & 46.88 & 43.96\tabularnewline
	 & PCGrad & \textcolor{blue}{28.25} & 48.28 & 48.81 & 49.03 & 48.13 & 47.14\tabularnewline
	 & CAGrad & \textcolor{blue}{30.23} & 48.34 & 47.03 & 48.22 & 45.92 & 52.20\tabularnewline
	 & MOO & \textcolor{blue}{25.16} & 44.76 & 39.06 & 46.83 & 37.05 & 56.11\tabularnewline
	 & TA-MOO & \textcolor{blue}{38.01} & 51.10 & 49.55 & 52.15 & 49.29 & 53.40\tabularnewline
	\midrule 
	\multirow{6}{*}{ND} & Uniform & \textcolor{blue}{28.17} & 48.75 & 51.94 & 45.55 & 54.15 & 43.34\tabularnewline
	 & HVM & \textcolor{blue}{28.46} & 49.87 & 51.64 & 50.03 & 50.72 & 47.10\tabularnewline
	 & PCGrad & \textcolor{blue}{28.30} & 48.75 & 52.02 & 45.42 & 54.35 & 43.21\tabularnewline
	 & CAGrad & \textcolor{blue}{35.22} & 51.07 & 54.22 & 47.84 & 55.24 & 46.97\tabularnewline
	 & MOO & \textcolor{blue}{32.50} & 52.21 & 53.25 & 49.05 & 56.80 & 49.76\tabularnewline
	 & TA-MOO & \textcolor{blue}{41.01} & 57.33 & 58.88 & 55.32 & 60.81 & 54.29\tabularnewline
	\bottomrule
	\end{tabular}
	\end{table}
\par\end{center}

\section{Additional Discussion} \label{sec:main-additional-discussions}
\REVISE{
In this section, we would like to summarize some important observations through all experiments while the complete discussion with detail can be found in Appendix \ref{sec:sup-discussions}.
}

\REVISE{
\textbf{Correlation between the objective loss and attack performance.} 
It is broadly accepted that to fool a model, a feasible approach is maximizing the objective loss (i.e., CE, KL, or CW loss), and the higher the loss, the higher the attack success rate. 
While it is true when observing the same architecture, we found that it is not necessarily true when comparing different architectures. 
As shown in Figure \ref{fig:ens-cf10-diverse-main},
with CW loss as the adversarial objective, it can be observed that there is a positive correlation between the loss value and the ASR, i.e., 
the higher the loss, the higher the ASR. However, there is no clear correlation observed when using CE and KL loss. 
Therefore, the higher weighted loss does not directly imply a higher success rate for attacking an ensemble of different architectures. 
The MinMax method \citep{wang2021adversarial} which solely weighs tasks' losses, therefore, is not always appropriate to achieve a good performance in all tasks. 
More discussion can be found in Appendix \ref{subsec:correlation-loss-attack-performance}.
}

\REVISE{
\textbf{When does MOO work?}
On the one hand, the dominating issue is observed in all three settings (ENS, UNI, EoT). The issue can be recognized by the gap of attack performance among tasks or by observing the dominating of one task's weight over others which is caused by a significant small gradient strength of one task on comparison with other tasks' strength as discussed in Section \ref{subsec:Ens}. 
The root of the dominating issue can be the natural of the setting (i.e., as in EoT setting, when the large gap can be observed in all methods) or the MOO solver.
}

\REVISE{
On the other hand, if overcoming this issue, MOO can outperform the Uniform strategy as shown in Section \ref{subsec:Ens}. As discussed in Appendix \ref{subsec:EoT}, a simple memory can helps to overcome the infinite gradient issue and significantly boosts the performance of MOO or TA-MOO. Therefore, we believe that developing a technique to lessen the dominating issue might be a potential extension. 
}

\REVISE{
\textbf{More efficient MOO solvers.} 
Inspired by \cite{sener2018multi}, in this paper we use multi-gradient descent algorithm \citep{deb2011multi} as a MOO solver which casts the multi-objective problem to a single-objective problem. However, while \cite{sener2018multi} used Frank-Wolfe algorithm to project the weight into the desired simplex, we use parameterization with softmax to do the job. While this technique is much faster than Frank-Wolfe algorithm, it has some weaknesses that might be target for future works. First, it cannot handle well the edge case which is the root of the dominating issue. Second, it does not work well in the case of a non-convex objective space as similar as other MOO scalarizing methods \citep{deb2011multi}.
}

\begin{figure}
    \begin{centering}
    \subfloat[CW]{
    \includegraphics[width=0.5\textwidth]{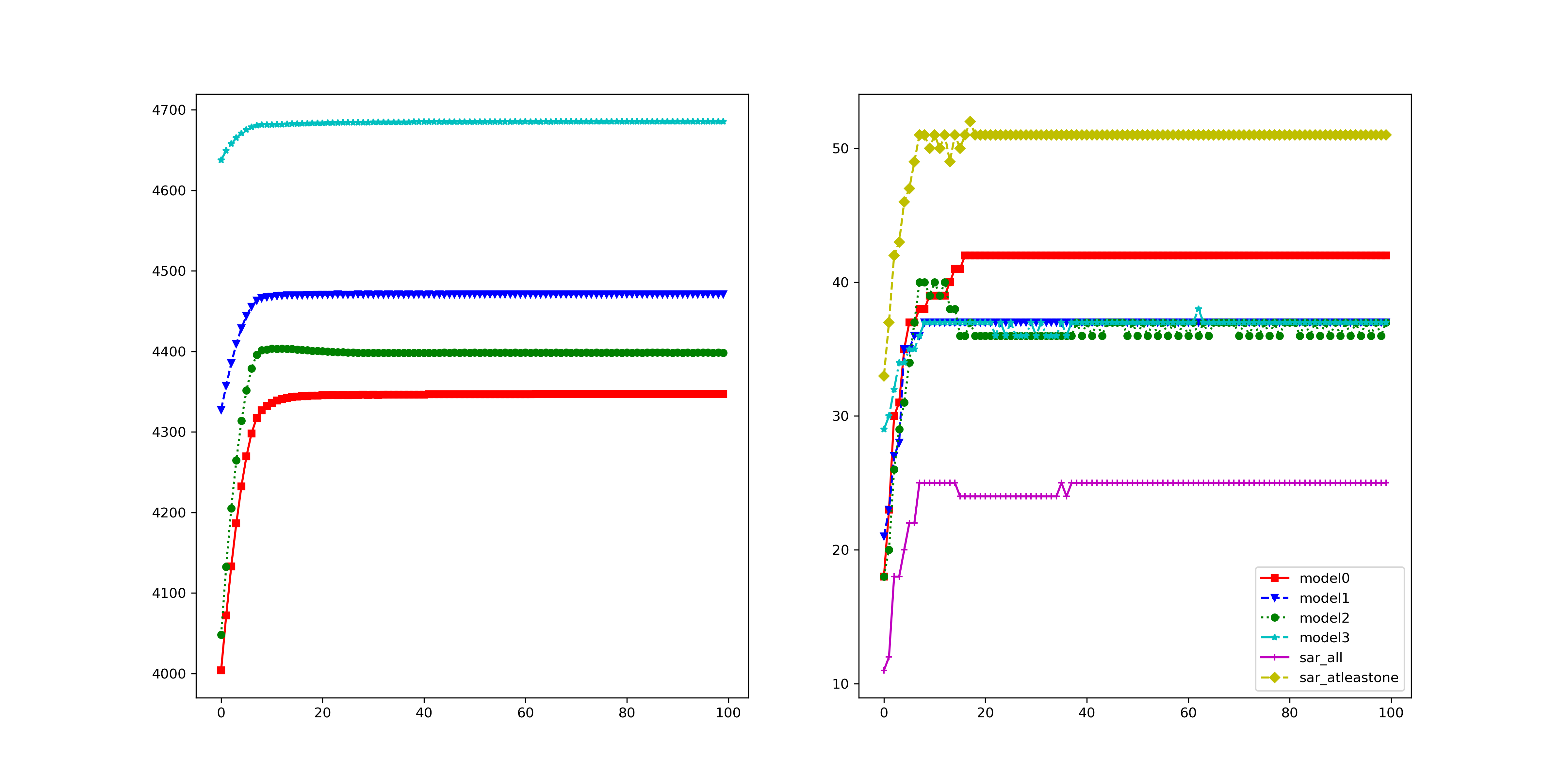}
    }
    \subfloat[CE]{
    \includegraphics[width=0.5\textwidth]{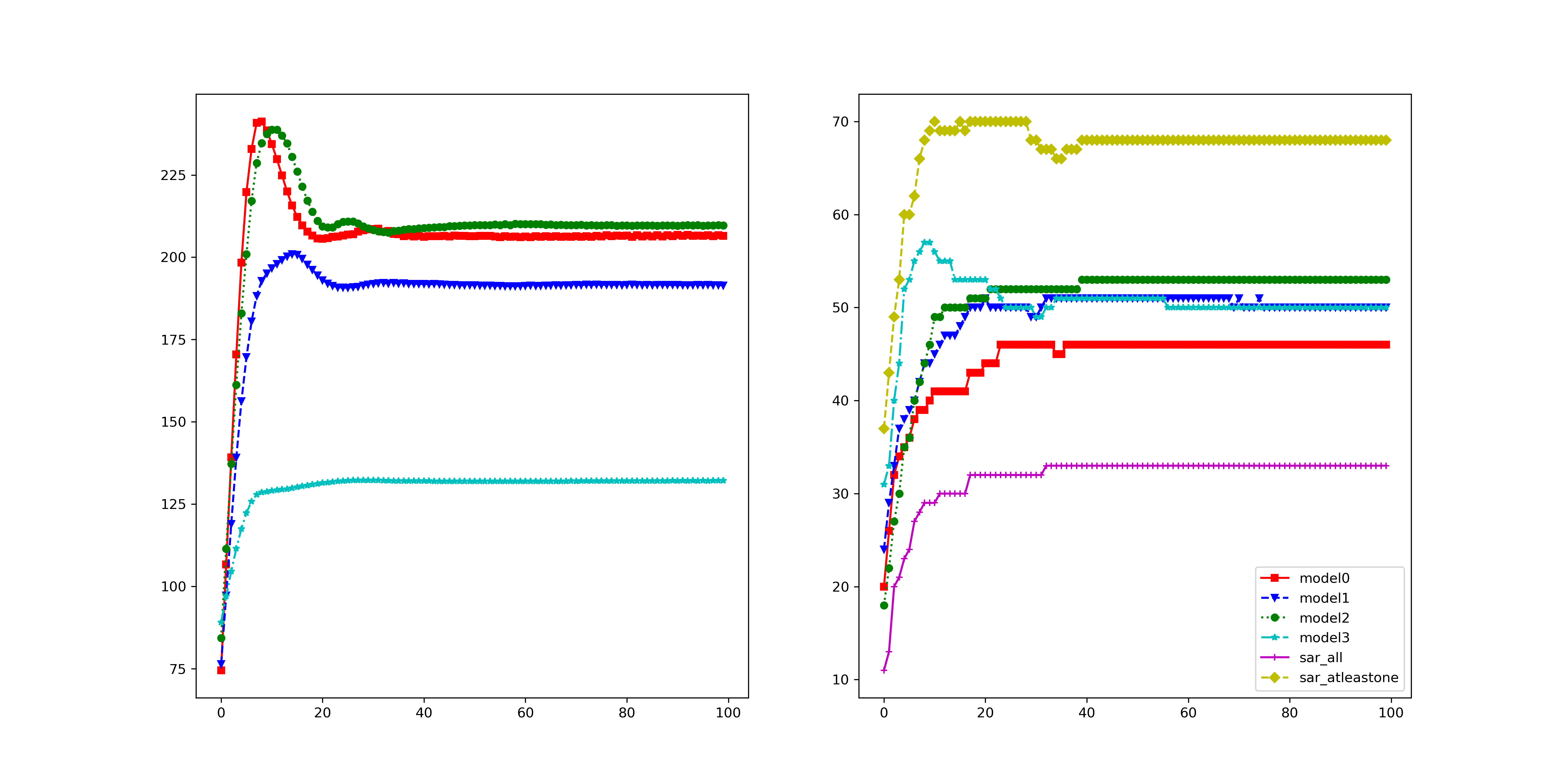}
    }
        
    \par\end{centering}
    \caption{Loss (left fig) and ASR (right fig) of each task over all attack iterations with the MinMax method. model0/1/2/3 represents R/V/G/E architecture, respectively. }
    \label{fig:ens-cf10-diverse-main}
\end{figure}

\section{Conclusion}
In this paper, we propose Task Oriented Multi-Objective Optimization (TA-MOO), with specific applications to adversarial generation tasks. We develop a geometry-based regularization term to favor the goal-unachieved tasks, while trying to maintain the the goal-achieved tasks. We conduct comprehensive experiments to showcase the merit of our proposed approach on generating adversarial examples and adversarial training. 
On the other hand, there are acknowledged limitations of our method such as weaknesses of the gradient-based solver and lacking theory on algorithm's convergence which might be target for future works. 

\section*{Acknowledgements}
This work was partially supported by the Australian Defence Science and Technology (DST) Group under the Next Generation Technology Fund (NGTF) scheme. 
The authors would like to thank the anonymous reviewers for their valuable comments and suggestions.

\bibliography{tmlr}
\bibliographystyle{tmlr}

\appendix
\clearpage
\section*{APPENDIX}

The Appendix provides technical and experimental details as well as auxiliary aspects 
to complement the main paper. Briefly, it contains the following: 

\begin{itemize}
    \item Appendix \ref{sec:related_work}: Discussion on related work. 
    \item Appendix \ref{sec:sup-method-detail}: Detailed proof and an illustration 
    of our methods.
    \item Appendix \ref{sec:sup-exp-setting}: Detailed description of experimental settings. 
    \item Appendix \ref{subsec:sup-ens-exp}: Additional experiments on transferability of adversarial examples in the ENS setting. 
    \item Appendix \ref{subsec:sup-adv-training}: Additional experiments on adversarial training with our methods.
    \item Appendix \ref{subsec:sup-uni-exp}: Additional experiments on the UNI setting. 
    \item Appendix \ref{subsec:sup-aug-exp}: Additional experiments on the EoT setting. 
    \item Appendix \ref{subsec:sup-speed-comparison}: Additional comparison on speed of generating adversarial examples. 
    \item Appendix \ref{sec:alb-sensitivity}: Additional experiments on sensitivity to hyper-parameters. 
    \item Appendix \ref{sec:alb-compare-attacks}: Additional comparison  with standard attacks on attacking performance.   
    \item Appendix \ref{subsec:attack-imagenet}: Additional experiments on attacking the ImageNet dataset. 
    \item Appendix \ref{subsec:sup-discussion-dominating-issue}: Additional discussions on the dominating issue and when MOO can work. 
    \item Appendix \ref{subsec:sup-discussion-task-oriented-reg}: A summary on the importance of Task-Oriented regularization.
    \item Appendix \ref{sec:sup-gradient-des-discuss}: Discussion on the limitation of MOO solver. 
    \item Appendix \ref{subsec:correlation-loss-attack-performance}: Discussion on correlation between the objective loss and attack performance. 
    \item Appendix \ref{subsec:conflicting}: Discussion on the conflicting between gradients in the adversarial generation task.
    \item Appendix \ref{subsec:convergence}: Discussion on the convergence of our methods.
    \item Appendix \ref{subsec:optimal-init-moo}: Additional experiments with MOO with different initializations.
    
\end{itemize}

\section{Related Work \label{sec:related_work}}

\paragraph{Multi-Objective Optimization for multi-task learning.}
\citep{desideri2012multiple} proposed a multi-gradient descent algorithm for multi-objective optimization (MOO) 
which opens the door for the applications of MOO in machine learning and deep learning. Inspired by \cite{desideri2012multiple}, 
MOO has been applied in multi-task learning (MTL) \citep{sener2018multi, mahapatra2020multi}, few-shot learning \citep{ye2021multi}, 
and knowledge distillation \citep{du2020agree}. Specifically, the work of \cite{sener2018multi} viewed multi-task learning as a 
multi-objective optimization problem, where a task network consists of a shared feature extractor and a task-specific predictor. 
The work of \cite{mahapatra2020multi} developed a gradient-based multi-objective
MTL algorithm to find a solution that satisfies the user preferences. The work of \cite{lin2019pareto} proposed a Pareto MTL to 
find a set of well-distributed Pareto solutions which
can represent different trade-offs among different tasks. Recently, the work of \cite{liu2021profiling} leveraged MOO with Stein 
Variational Gradient Descent \citep{liu2016stein} to diversify the solutions of MOO. Additionally, the work of \cite{ye2021multi} 
proposed a bi-level MOO which can be applied to few-shot learning. Finally, the work of \cite{du2020agree} applied MOO to enable 
knowledge distillation from multiple teachers.

\paragraph{Generating adversarial examples with single-objective and multi-objective optimizations.}
Generating qualified adversarial examples is crucial for adversarial training \citep{madry2017towards, trades, bui2021understanding, buiunified}. Many perturbation 
based attacks have been proposed, notably FGSM \citep{goodfellow2014explaining}, PGD \citep{madry2017towards}, TRADES \citep{trades}, 
CW \citep{carlini2017towards}, BIM \citep{kurakin2018adversarial}, and AutoAttack \citep{croce2020reliable}. Most adversarial attacks 
aim to maximize a single objective, e.g., maximizing the cross-entropy (CE) loss w.r.t. the ground-truth label \citep{madry2017towards}, 
maximizing the Kullback-Leibler (KL) divergence w.r.t. the predicted probabilities of a benign example \citep{trades}, or maximizing 
the CW loss \citep{carlini2017towards}. However, in some contexts, we need to generate adversarial examples maximizing multiple 
objectives or goals, e.g., attacking multiple models \citep{pang19a, bui2020improving} or finding universal perturbations 
\citep{moosavi2017universal}. 

The work of \cite{suzuki2019adversarial} was a pioneering attempt to consider the generation of adversarial examples as a multi-objective optimization problem. 
The authors proposed a non-adaptive method based on Evolutionary Multi-Objective Optimization (EMOO) \cite{deb2011multi} to generate sets of adversarial examples. 
However, the EMOO method is computationally expensive and requires a large number of evaluations, which limits its practicality. 
Additionally, the authors applied MOO without conducting an extensive study on the behavior of the algorithm, which could limit the effectiveness of the proposed method. 
Furthermore, the experimental results presented in the work are limited, which could weaken the evidence for the effectiveness of the proposed method.

To this end, the work of \cite{wang2021adversarial} examined the worst-case scenario by casting 
the problem of interest as a min-max problem for finding the weight of each task. However, this principle leads to a problem of 
lacking generality in other tasks. To mitigate the issue, \cite{wang2021adversarial} proposed a regularization to 
strike a balance between the average and the worst-case performance. 
The final optimization was formulated as follow: 
\begin{equation*}
    \underset{\delta : \| \delta \| \leq \epsilon }{\text{max}} \; \underset{w \in \simplex_{m}}{\text{min}} \sum_{i=1}^{K} w_i f_i(\delta) + \frac{\gamma}{2} \| w - 1/K \|_2^2,
\end{equation*}
Where $f_i(v)$ is the victim model's loss (i.e., cross entropy loss or KL divergence) and $\gamma > 0$ is the regularization parameter. 
The authors used the bisection method \citep{boyd2004convex} with project gradient descent for the inner minimization and project gradient ascent for the outer maximization. There are several major differences in comparison to MOO and TA-MOO methods: (i) In principle, MinMax considers the worst-case performance only while our methods improve performance of all tasks simultaneously. 
(ii) MinMax weighs the tasks' losses to find the minimal weighted sum loss in its inner minimization, however, as discussed in Section \ref{subsec:correlation-loss-attack-performance} the higher weighted loss does not directly imply the higher success rate in attacking multi-tasks simultaneously. In contrast, our methods use multi-gradient descent algorithm \citep{deb2011multi} in order to increase losses of all tasks simultaneously. (iii) The original principle of MinMax leads to the biasing problem to the worst-case task. The above regularization has been used to mitigate the issue, however, it considers all tasks equally. In contrast, our TA-MOO takes goal-achievement status of each task into account and focuses more on the goal-unachieved tasks.

Recently, \cite{guo2020multi} proposed a multi-task adversarial attack and demonstrated on the universal perturbation problem. 
However, while \cite{wang2021adversarial} and ours can be classified as an iterative optimization-based attack, 
\cite{guo2020multi} requires a generative model in order to generate adversarial examples. 
While this line of attack is faster than optimization-based attacks at the inference phase, it requires to 
train a generator on several tasks beforehand. 
Due to the difference in setting, we do not compare with that work in this paper.

More recently, \cite{qiu2022framework} proposed a framework to attack a generative Deepfake model using the multi-gradient descent algorithm in their backpropagation step. While their method also use 
the multi-objective optimization for generating adversarial examples, there are several major differences to ours. 
Firstly, their method aims for a generative Deepfake model while our method aims for the standard classification problem which is the most common and important setting in AML. 
Secondly, we conduct comprehensive experiments to show that a direct and naive application of MOO to adversarial generation tasks does not work satisfactorily because of the gradient dominating problem. 
Most importantly, we propose the TA-MOO method which employs a geometry-based regularization term to favor the unsuccessful tasks,
while trying to maintain the performance of the already successful tasks. 
We have conducted extensive experiments to show that our TA-MOO consistently achieves the best 
attacking performance across different settings. 
We also conducted additional experiments with SOTA multi-task learning methods which are PCGrad \citep{yu2020gradient} and CAGrad \citep{liu2021conflict} in Section \ref{sec:additional-moo}. 
Compared to these methods, our TA-MOO still achieves the best attack performance thanks to the Task Oriented regularization.

\section{Further Details of the Proposed Method \label{sec:sup-method-detail}}

\subsection{Proofs}\label{sec:sup-proof}
\lemmaone*
\begin{proof}
The proof is based on \cite{wang2013projection} with modifications.
We need to solve the following OP:
\begin{align*}
 & \min_{\pi}\frac{1}{2}\norm{w-\pi}_{2}^{2}\\
\text{s.t.}: & \pi\geq\bzero\\
 & \norm{\pi}_{1}=1.
\end{align*}

We note that $\pi_{1}=...=\pi_{s}=0$. The OP of interest reduces
to 
\begin{align*}
 & \min_{\pi_{s+1:m}}\frac{1}{2}\sum_{i=s+1}^{m}\left(\pi_{i}-w_{i}\right)^{2}\\
\text{s.t.}: & \pi_{s+1:m}\geq\bzero\\
 & \sum_{i=s+1}^{m}\pi_{i}=1.
\end{align*}

Using the Karush-Kuhn-Tucker (KKT) theorem, we construct the following
Lagrange function:
\[
\mathcal{L}\left(\pi,\gamma,\beta\right)=\frac{1}{2}\sum_{i=s+1}^{m}\left(\pi_{i}-w_{i}\right)^{2}-\gamma\left(\sum_{i=s+1}^{m}\pi_{i}-1\right)-\sum_{i=s+1}^{m}\beta_{i}\pi_{i}.
\]

Setting the derivative w.r.t. $\pi_{i}$ to zeros and using the KKT
conditions, we obtain:
\begin{align*}
\pi_{i}-w_{i}-\gamma-\beta_{i} & =0,\forall i=s+1,...,m\\
\sum_{i=s+1}^{m}\pi_{i}= & 1\\
\beta_{i}\geq0,\pi_{i}\geq0, & \beta_{i}\pi_{i}=0,\forall i=s+1,...,m.
\end{align*}

If $\pi_{i}>0$, $\beta_{i}=0$, hence $\pi_{i}=w_{i}+\gamma>0$.
Otherwise, if $\pi_{i}=0$, $w_{i}+\gamma=-\beta_{i}\leq0$. Therefore,
$w_{s+1:m}$ has the same order as $\pi_{s+1:m}$ and we can arrange
them as:
\[
\pi_{s+1}\geq\pi_{s+2}\geq...\geq\pi_{\rho}>\pi_{\rho-1}=...=\pi_{m}=0.
\]
\[
u_{s+1}=w_{s+1}\geq u_{s+2}=w_{s+2}\geq....\geq u_{p}=w_{p}\geq u_{\rho-1}=w_{\rho-1}\geq...\geq u_{m}=w_{m}\geq0.
\]

It appears that $1=\sum_{i=s+1}^{m}\pi_{i}=\sum_{i=s+1}^{\rho}\pi_{i}=\sum_{i=s+1}^{\rho}\left(w_{i}+\gamma\right)=\sum_{i=s+1}^{\rho}w_{i}+\left(\rho-s\right)\gamma$.
Hence, we gain $\gamma=\frac{1}{\rho-s}\left[1-\sum_{i=s+1}^{\rho}w_{i}\right]=\frac{1}{\rho-s}\left[1-\sum_{i=s+1}^{\rho}u_{i}\right]$.
We now prove that $\rho=\max\left\{ s+1\leq i\leq m:u_{i}+\frac{1}{i-s}\left(1-\sum_{j=s+1}^{i}u_{j}\right)>0\right\} $.
\end{proof}
\begin{itemize}
\item For $i=\rho$, we have 
\[
u_{\rho}+\frac{1}{\rho-s}\left(1-\sum_{j=s+1}^{\rho}u_{j}\right)=u_{\rho}+\gamma=w_{\rho}+\gamma>0.
\]
\item For $i<\rho$, we have\textbf{
\begin{align*}
u_{i}+\frac{1}{i-s}\left(1-\sum_{j=s+1}^{i}u_{j}\right) & =\frac{1}{i-s}\left((i-s)u_{i}+1-\sum_{j=s+1}^{i}u_{j}\right)\\
= & \frac{1}{i-s}\left[(i-s)w_{i}+\sum_{j=s+1}^{\rho-1}\pi_{j}-\sum_{j=s+1}^{i}w_{j}\right]\\
= & \frac{1}{i-s}\left[(i-s)w_{i}+\sum_{j=i+1}^{\rho-1}\pi_{j}+\sum_{j=s+1}^{i}\left(\pi_{j}-w_{j}\right)\right]\\
= & \frac{1}{i-s}\left[(i-s)\left(w_{i}+\gamma\right)+\sum_{j=i+1}^{\rho-1}\pi_{j}\right]\\
= & \frac{1}{i-s}\left[(i-s)\pi_{i}+\sum_{j=i+1}^{\rho-1}\pi_{j}\right]>0.
\end{align*}
}
\item For $i>\rho$, we have 
\begin{align*}
u_{i}+\frac{1}{i-s}\left(1-\sum_{j=s+1}^{i}u_{j}\right) & =\frac{1}{i-s}\left((i-s)u_{i}+1-\sum_{j=s+1}^{i}u_{j}\right)\\
 & =\frac{1}{i-s}\left((i-s)w_{i}+\sum_{j=s+1}^{\rho-1}\pi_{j}-\sum_{j=s+1}^{i}w_{j}\right)\\
 & =\frac{1}{i-s}\left((i-s)w_{i}+\sum_{j=s+1}^{\rho-1}(\pi_{j}-w_{j})-\sum_{j=\rho}^{i}w_{j}\right)\\
 & =\frac{1}{i-s}\left((i-s)w_{i}+(\rho-s-1)\gamma-\sum_{j=\rho}^{i}w_{j}\right)\\
 & =\frac{1}{i-s}\left((\rho-s-1)(w_{i}+\gamma)+\sum_{j=\rho}^{i}(w_{i}-w_{j})\right)\leq0.
\end{align*}
\end{itemize}
Therefore, $\rho=\max\left\{ s+1\leq i\leq m:u_{i}+\frac{1}{i-s}\left(1-\sum_{j=s+1}^{i}u_{j}\right)>0\right\} $.
Finally, we also have $\pi_{i}=\max\{w_{i}+\gamma,0\},i=s+1,...,m$
and $\pi_{i}=0,i=1,...,s$.

\lemmatwo*
\begin{proof}
Recall $\rho=\max\left\{ s+1\leq i\leq m:u_{i}+\frac{1}{i-s}\left(1-\sum_{j=s+1}^{i}u_{j}\right)>0\right\} $.
Therefore, $\rho=m$ because we have
\[
u_{m}+\frac{1}{m-s}\left(1-\sum_{j=s+1}^{m}u_{j}\right)=w_{m}+\frac{1}{m-s}\left(1-\sum_{j=s+1}^{m}w_{j}\right)=w_{m}+\frac{\sum_{j=1}^{s}w_{j}}{m-s}>0.
\]

It follows that 
\[
\gamma=\frac{1}{m-s}\left(1-\sum_{i=s+1}^{m}u_{i}\right)=\frac{1}{m-s}\left(1-\sum_{i=s+1}^{m}w_{i}\right)\geq0.
\]
\[
\text{proj}_{\mathcal{S}}\left(w\right)_{i}=\begin{cases}
0 & 1\leq i\leq s\\
\max\left\{ w_{i}+\gamma,0\right\} =w_{i}+\gamma & \text{otherwise}
\end{cases}
\]
\begin{align*}
\Omega\left(w\right) & =\sum_{i=1}^{s}w_{i}^{2}+\sum_{i=s+1}^{m}\left(w_{i}-\max\left\{ w_{i}+\gamma,0\right\} \right)^{2}\\
 & =\sum_{i=1}^{s}w_{i}^{2}+\sum_{i=s+1}^{m}\gamma^{2}=\sum_{i=1}^{s}w_{i}^{2}+(m-s)\gamma^{2}\\
 & =\sum_{i=1}^{s}w_{i}^{2}+\frac{1}{m-s}\left(1-\sum_{i=s+1}^{m}w_{i}\right)^{2}.
\end{align*}
\end{proof}

\subsection{Illustrations of How MOO and TA-MOO Work}

\begin{figure}
    \centering
    \includegraphics[width=\textwidth]{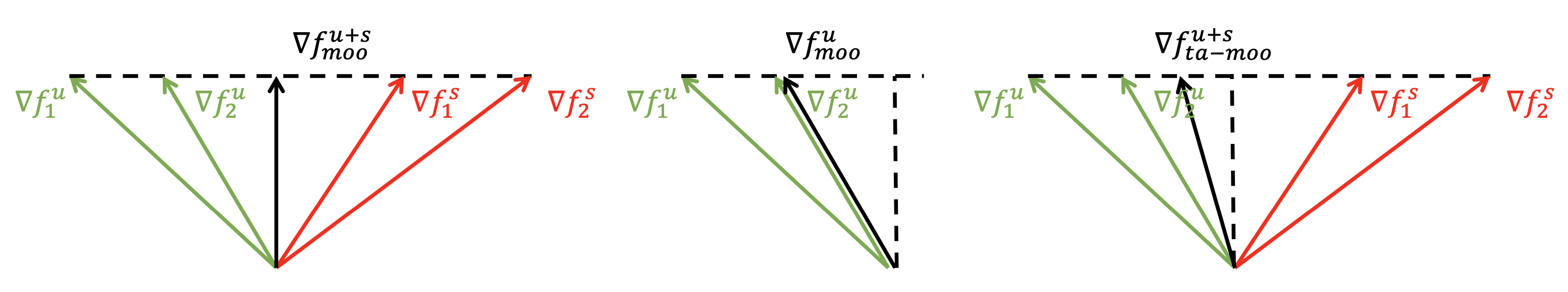}
    \caption{Visualization of standard MOO and TA-MOO solutions in a scenario of 2 goal-achieved tasks (\textcolor{blue}{$\nabla f^s_{1,2}$}) and 2 goal-unachieved tasks (\textcolor{teal}{$\nabla f^u_{1,2}$}). 
    (left) MOO; 
    (middle) MOO on the set of goal-unachieved tasks only; 
    (right) TA-MOO with a solution focuses more on the goal-unachieved tasks.}
    \label{fig:ta-moo-illus}
\end{figure}

Figure \ref{fig:ta-moo-illus} illustrates solutions of MOO and TA-MOO in a scenario of 2 goal-achieved tasks 
(with corresponding gradients \textcolor{blue}{$\nabla f^s_{1,2}$}) and 2 goal-unachieved tasks (with corresponding gradients \textcolor{teal}{$\nabla f^u_{1,2}$}). 
As illustrated in the left figure, with standard MOO method, where all the tasks' gradients have been considered regardless their status and  
the solution associated with the minimal norm is the perpendicular vector as suggested by geometry \citep{sener2018multi}. 
If considering the goal-unachieved tasks only as in the middle case, the MOO solution is the edge case. 
However, this extreme strategy ignores all the goal-achieved tasks which might lead to the instability. 
The Task Oriented regularization strikes a balance between the two aforementioned strategies as illustrated in the right figure. 
The method focuses more on improving the goal-unachieved tasks while spend less effort to maintain the goal-achieved tasks. 
With $\lambda=0$ the TA-MOO optimal solution is equivalent to the standard MOO optimal solution while 
it becomes the MOO solution in the case of the goal-unachieved tasks only when $\lambda \to \infty$.

\section{Experimental settings \label{sec:sup-exp-setting}}

\paragraph{General settings. }

Through our experiments, we use six common architectures including
ResNet18 \citep{he2016deep}, VGG16 \citep{simonyan2014very}, GoogLeNet
\citep{szegedy2015going}, EfficientNet (B0) \citep{tan2019efficientnet}, 
MobileNet \cite{howard2017mobilenets}, and WideResNet (with depth 34 and widen factor 10) \cite{zagoruyko2016wide} 
with the implementation of \url{https://github.com/kuangliu/pytorch-cifar}.
We evaluate on the full testing set (10k) of two benchmark datasets
which are CIFAR10 and CIFAR100 \citep{cifar10}. More specifically,
the two datasets have 50k training images and 10k testing images, respectively,
with the same image resolution of $32\times32\times3$. However, while
the CIFAR10 dataset has 10 classes, the CIFAR100 dataset has 100 classes
and fewer images per class. Therefore, in general, an adversary is
easier to attack a CIFAR100 model than a CIFAR10 one as shown in
Table \ref{tab:models-acc}. We observed that the attack performance
is saturated with standard training models. Therefore, to make the
job of adversaries more challenging, we use Adversarial Training with
PGD-AT \citep{madry2017towards} to robustify the models and use these
robust models as victim models in our experiments. Specifically, we
use the SGD optimizer (momentum 0.9 and weight decay $5\times10^{-4}$)
and Cosine Annealing Scheduler to adjust the learning rate with an
initial value of 0.1 and train a model in 200 epochs as suggested
in the implementation above. We use PGD-AT $L_{\infty}$ \citep{madry2017towards}
with the same setting for both CIFAR10 and CIFAR100 datasets, i.e.,
perturbation limitation $\epsilon=8/255$, $k=20$ steps, and step
size $\eta=2/255$. 

\begin{center}
\begin{table}
\caption{Robustness performance of models in the experiments\label{tab:models-acc}}

\centering{}%
\begin{tabular}{lccccc}
\hline 
 & \multicolumn{2}{c}{CIFAR10} &  & \multicolumn{2}{c}{CIFAR100}\tabularnewline
\cline{2-3} \cline{3-3} \cline{5-6} \cline{6-6} 
 & Nat-Acc & Adv-Acc &  & Nat-Acc & Adv-Acc\tabularnewline
\hline 
ResNet18 & 86.47 & 42.14 &  & 59.64 & 18.62\tabularnewline
VGG16 & 84.24 & 40.88 &  & 55.27 & 16.41\tabularnewline
GoogLeNet & 88.26 & 41.26 &  & 63.10 & 19.16\tabularnewline
EfficientNet & 74.52 & 41.36 &  & 57.67 & 19.90\tabularnewline
MobileNet & 76.52 & 31.12 &  & - & -\tabularnewline
WideResNet & 88.13 & 48.62 &  & - & -\tabularnewline
\hline 
\end{tabular}
\end{table}
\par\end{center}

\paragraph{Method settings. }
In this work, we evaluate all the methods in the untargeted attack setting with $L_\infty$ norm. 
The attack parameters are the same among methods, i.e., number of attack steps 100, attack budget $\epsilon=8/255$ and step size $\eta_\delta=2/255$. 
In our method, we use $K$=10 to update the weight in each step with learning rate $\eta_w=0.005$. Tradeoff parameter $\lambda=100$ in all experiments. 
In MinMax~\citep{wang2021adversarial}, we use the same $\gamma=3$ for all settings and use the authors' implementation \footnote{https://github.com/wangjksjtu/minmax-adv}. 

\paragraph{Attacking ensemble model settings. }

In our experiment, we use an ensemble of four adversarially trained
models: ResNet18, VGG16, GoogLeNet, and EfficientNet. The architecture
is the same for both the CIFAR10 and CIFAR100 datasets except for
the last layer which corresponds with the number of classes in each
dataset. The final output of the ensemble is an average of the probability
outputs (i.e., output of the softmax layer). We use three different
losses as an object for generating adversarial examples including
Cross Entropy (CE) \citep{madry2017towards}, Kullback-Leibler divergence
(KL) \citep{trades}, and CW loss \citep{carlini2017towards}. 

\paragraph{Universal perturbation settings. }

We follow the experimental setup in \cite{wang2021adversarial}, such
that the full test set (10k images) is randomly divided into equal-size
groups (K images per group). The comparison has been conducted on
the CIFAR10 and CIFAR100 datasets, and CW loss. We use adversarial trained ResNet18, VGG16 and EfficientNet as base models. We observed that the $\text{ASR-All}$ was mostly
zero, indicating that it is difficult to generate a general perturbation for all data points. 
Therefore, we use $\text{ASR-Avg}$ to compare the performances of the methods. 

\paragraph{Robust adversarial examples against transformations settings. }

In our experiment, we use 7 common data transformations including
I-Identity, H-Horizontal flip, V-Vertical flip, C-Center crop, B-Adjust
brightness, R-Rotation, and G-Adjust gamma. The parameter setting
for each transformation has been shown in Table \ref{tab:eot-setting}.
In the deterministic setting, a transformation has been fixed with
one specific parameter, e.g., center cropping with a scale of 0.6
or adjusting brightness with a factor of 1.3. While in the stochastic
setting, a transformation has been uniformly sampled from its family,
e.g., center cropping with a random scale in range (0.6, 1.0) or adjusting
brightness with a random factor in range (1.0, 1.3). The experiment
has been conducted on adversarially trained ResNet18 model with the  CW loss. 

\begin{table}
\caption{Data transformation setting. $U$ represents uniform sampling function
and $p$ represents probability to excuse a transformation (e.g.,
flipping). \label{tab:eot-setting}}

\centering{}%
\begin{tabular}{lcc}
 & Deterministic & Stochastic\tabularnewline
\hline 
Identity & Identity & Identity\tabularnewline
Horizontal flip & $p=1$ & $p=0.5$\tabularnewline
Vertical flip & $p=1$ & $p=0.5$\tabularnewline
Center crop & $\text{scale}=0.6$ & $\text{scale}=U(0.6,1.0)$\tabularnewline
Adjust brightness & $\text{factor}=1.3$ & $\text{factor}=U(1.0,1.3)$\tabularnewline
Rotation & $\text{angle}=\text{10°}$ & $\text{angle}=U(-\text{10°},\text{10°})$\tabularnewline
Adjust gamma & $\text{gamma}=1.3$ & $gamma=U(0.7,1.3)$\tabularnewline
\hline 
\end{tabular}
\end{table}

\section{Additional Experiments \label{sec:sup-additional-exp}}

\subsection{Transferability of adversarial examples in the ENS setting \label{subsec:sup-ens-exp}}
We conduct an additional experiment to evaluate the transferability of our adversarial examples. 
We use an ensemble (RME) of three models: ResNet18, MobileNet, and EfficientNet as a source model
and apply different adversaries to generate adversarial examples to this ensemble. 
We then use these adversarial examples to attack other ensemble architectures (target models), for example, 
RMEVW is an ensemble of 5 models including ResNet18, MobileNet, EfficientNet, VGG16 and WideResNet. 
Table \ref{tab:ens-transferability-sup} reports the SAR-All metric of transferred adversarial examples, 
where a higher number indicates a higher success rate of attacking a target model, therefore, 
also implies a higher transferability of adversarial examples. 
The first column (heading RME) shows SAR-All when adversarial examples attack the source model (i.e., the whitebox attack setting). 

\paragraph{The Uniform strategy achieves the lowest transferability. }
It can be observed from Table \ref{tab:ens-transferability-sup} that the Uniform strategy achieves the lowest SAR in the whitebox attack setting. 
This strategy also has the lowest transferability in attacking other ensembles (except an ensemble RVW). 

\paragraph{MinMax's transferability drops on dissimilar target models. }
While MinMax achieves the second-best performance in the whitebox attack setting, its adversarial examples have a 
low transferability when target models are different from the source model. For example, in the target model RVW  
where there is only one member of the target model from the source model (RME) (i.e., R or ResNet18), 
MinMax achieves a 23.75\% success rate which is lower than the Uniform strategy by 1.28\%. 
Similar observation can be observed on target models EVW and MVW, where MinMax outperforms the 
Uniform strategy by just 0.2\% and 0.6\%, respectively.  

\paragraph{TA-MOO achieves the highest transferability on a diverse set of ensembles}. 
Our TA-MOO adversary achieves the highest attacking performance on the whitebox attack setting, 
with a huge gap of 9.24\% success rate over the Uniform strategy. Our method also achieves 
the highest transferability regardless diversity of a target ensemble. 
More specifically, on target models such as REV, MEV, and RMEV, where members in the source ensemble (RME) 
are also in the target ensemble, our TA-MOO significantly outperforms the Uniform strategy, 
with the highest improvement is 5.19\% observed on target model RMEV. 
On the target models EVW and MVW which are less similar to the source model, 
our method still outperforms the Uniform strategy by 1.46\% and 1.65\%. 
The superior performance of our adversary on the transferability shows another benefit 
of using multi-objective optimization in generating adversarial examples. 
By reaching the intersection of all members' adversarial regions, our adversary is capable  
to generate a common vulnerable pattern on an input image shared across architectures, 
therefore, increasing the transferability of adversarial examples. 

\begin{center}
    \begin{table}
    \caption{Evaluation on the Transferability of adversarial examples. Each cell
    (row-ith, column-jth) reports SAR (higher is better) of adversarial
    examples from the same source architecture (RME) with an adversary
    at row-ith to attack an ensemble at column-jth. Each architecture
    has been denoted by symbols such as R: ResNet18, M: MobileNet, E:
    EfficientNet, V: VGG16, W: WideResNet. For examples, RME represents
    for an ensemble of ResNet18, MobileNet and EfficientNet. The highest/second
    highest performance is highlighted in \textbf{Bold}/\uuline{Underline}.
    The table is copied from Table \ref{tab:ens-transferability} in the main paper for reading 
    comprehension purpose.
    \label{tab:ens-transferability-sup}}
    
    \centering{}%
    \begin{tabular}{lcccccccc}
     & RME & RVW & EVW & MVW & REV & MEV & RMEV & RMEVW\tabularnewline
    \midrule 
    Uniform & 31.73 & \uuline{25.03} & 22.13 & 22.73 & 29.50 & 28.44 & 26.95 & 20.50\tabularnewline
    MinMax & \uuline{40.01} & 23.75 & 22.39 & 23.34 & \uuline{32.57} & \uuline{32.75} & \uuline{31.85} & \uuline{21.99}\tabularnewline
    MOO & 35.20 & 24.25 & \uuline{22.94} & \uuline{23.76} & 30.65 & 32.28 & 29.49 & 21.77\tabularnewline
    TA-MOO & \textbf{40.97} & \textbf{25.13} & \textbf{23.59} & \textbf{24.38} & \textbf{33.00} & \textbf{33.05} & \textbf{32.14} & \textbf{23.04}\tabularnewline
    \bottomrule
    \end{tabular}
    \end{table}
\par\end{center}

\subsection{Adversarial Training with TA-MOO} \label{subsec:sup-adv-training}

\paragraph{Setting.}
We conduct adversarial training with adversarial examples generated
by MOO and TA-MOO attacks to verify the quality of these adversarial
examples. We choose an ensemble of 3 MobileNet
architectures (non-diverse set) and ensemble of 3 different architectures
including ResNet18, MobileNet and EfficientNet (diverse set). To evaluate the adversarial robustness,
we compare natural accuracy (NAT) and robust accuracy (ADV) against
PGD-Linf attack of these adversarial training methods (the higher
the better). We also measure the success attack rate (SAR) of adversarial
examples generated by the same PGD-Linf attack on fooling each single
member and all members of the ensemble (the lower the better). 
We use ${k=10,\epsilon=8/255,\eta=2/255}$ for adversarial training and PGD-Linf 
with $k=20, \epsilon=8/255,\eta=2/255$ for robustness evaluation. 
We use SGD optimizer with momentum 0.9 and weight decay 5e-4. 
Initial learning rate is 0.1 with Cosine Annealing scheduler and train on 100 epochs.

\paragraph{Result 1. Reducing transferability. } It can be seen that the SAR-All of MOO-AT and 
TA-MOO-AT are much lower than that on other methods. More specifically, the gap of SAR-All between 
PGD-AT and TA-MOO-AT is (5.33\%) 6.13\% on the (non) diverse setting. 
The lower SAR-All indicating that adversarial examples are harder to transfer among ensemble members on the TA-MOO-AT model than on the PGD-AT model.

\paragraph{Result 2. Producing more robust single members. } The comparison of average SAR shows that adversarial training with TA-MOO 
produces more robust single models than PGD-AT does. More specifically, the average robust accuracy (measured by 100\% - \textit{A-Avg}) 
of TA-MOO-AT is 32.17\%, an improvement of 6.06\% over PGD-AT in the non-diverse setting, while 
there is an improvement of 4.66\% in the diverse setting.

\paragraph{Result 3. Adversarial training with TA-MOO achieves the best robustness. }
More specifically, on the non-divese setting, TA-MOO-AT achives 38.22\% robust accuracy, an improvement 
of 1\% over MinMax-AT and 5.44\% over standard PGD-AT. On the diverse setting, the improvement
over MinMax-AT and PGD-AT are 0.9\% and 4\%, respectively. The root of the improvement is the ability to generate stronger adversarial examples in the
the sense that they can challenge not only the entire ensemble model but also
all single members. These adversarial examples lie in the joint insecure
region of members (i.e., the low confidence region of multiple classes),
therefore, making the decision boundaries more separate. 
As a result, adversarial training with TA-MOO produces more robust 
single models (i.e., lower SAR-Avg) and significantly reduces 
the transferability of adversarial examples among members (i.e., lower SAR-All). 
These two conditions explain the best ensemble adversarial robustness achieved by TA-MOO. 

\begin{center}
\begin{table}
\caption{Robustness evaluation of Adversarial Training methods on the CIFAR10
dataset. RME represents an ensemble of ResNet18 (R), MobileNet (M)
and EfficientNet E), while MobiX3 represents an ensemble of three
MobileNets. NAT and ADV measure the natural accuracy and the robust
accuracy against PGD-Linf attack ($\uparrow$the higher the better).
Other metrics measure the success attack rate (SAR) of adversarial
examples generated by the same PGD-Linf attack on fooling each single
member and all members of the ensemble ($\downarrow$the lower the
better). The highest/second highest \textbf{robustness} is highlighted
in \textbf{Bold}/\uuline{Underline.} The most important metric
is emphasized in blue. \label{tab:ens-adv-training-full}}

\centering{}%
\begin{tabular}{llrrrrrrr}
 & Arch & NAT$\uparrow$ & \textcolor{blue}{ADV}$\uparrow$ & A-All$\downarrow$ & A-Avg$\downarrow$ & R/M1$\downarrow$ & M/M2$\downarrow$ & E/M3$\downarrow$\tabularnewline
\hline 
PGD-AT & MobiX3 & \textbf{80.43} & \textcolor{blue}{32.78} & 54.34 & 73.89 & 76.17 & 74.35 & 71.14\tabularnewline
MinMax-AT & MobiX3 & 79.01 & \textcolor{blue}{\uuline{37.28}} & 50.28 & \textbf{66.77} & \textbf{65.27} & \uuline{70.27} & \textbf{64.78}\tabularnewline
MOO-AT & MobiX3 & \uuline{79.38} & \textcolor{blue}{33.04} & \textbf{46.28} & 74.36 & 71.25 & 74.53 & 77.29\tabularnewline
TA-MOO-AT & MobiX3 & 79.22 & \textbf{\textcolor{blue}{38.22}} & \uuline{48.21} & \uuline{67.83} & \uuline{68.04} & \textbf{67.37} & \uuline{68.07}\tabularnewline
\hline 
PGD-AT & RME & \textbf{86.52} & \textcolor{blue}{37.36} & 49.01 & 69.75 & 65.81 & 75.24 & 68.21\tabularnewline
MinMax-AT & RME & \uuline{83.16} & \textcolor{blue}{\uuline{40.40}} & 46.91 & \uuline{65.73} & \uuline{65.22} & \textbf{68.28} & \uuline{63.70}\tabularnewline
MOO -AT & RME & 82.04 & \textcolor{blue}{37.48} & \uuline{45.24} & 70.11 & 69.00 & 75.43 & 65.90\tabularnewline
TA-MOO-AT & RME & 82.59 & \textbf{\textcolor{blue}{41.32}} & \textbf{43.68} & \textbf{65.09} & \textbf{63.77} & \uuline{68.98} & \textbf{62.51}\tabularnewline
\hline 
\end{tabular}
\end{table}
\par\end{center}

\begin{figure}
    \begin{centering}
    \subfloat[MobiX3]{
    \begin{centering}
    \includegraphics[width=0.5\textwidth]{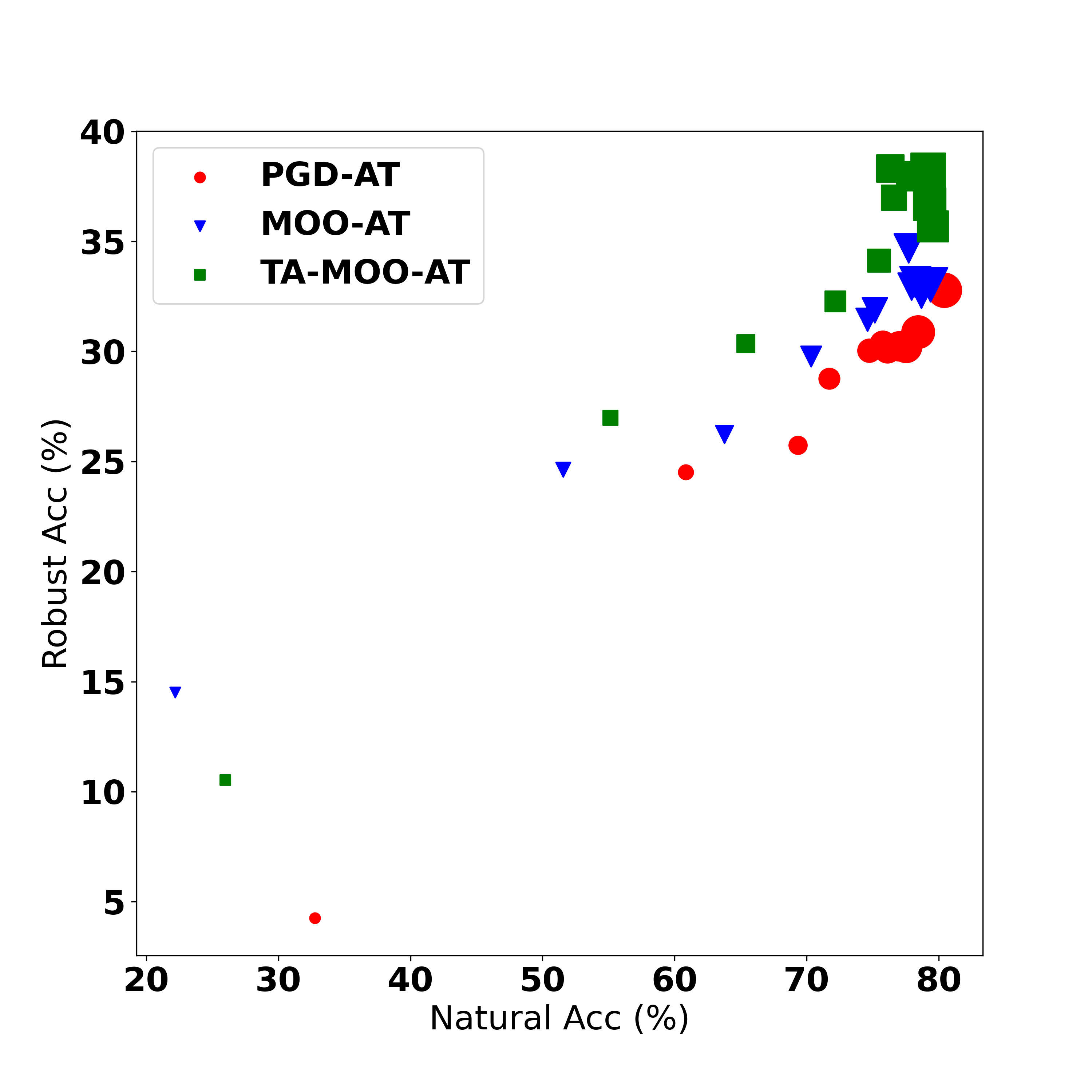}
    \par\end{centering}
    \label{fig:ens-adv-training-progress-mobix3}
    }
    \subfloat[RME]{\begin{centering}
    \includegraphics[width=0.5\textwidth]{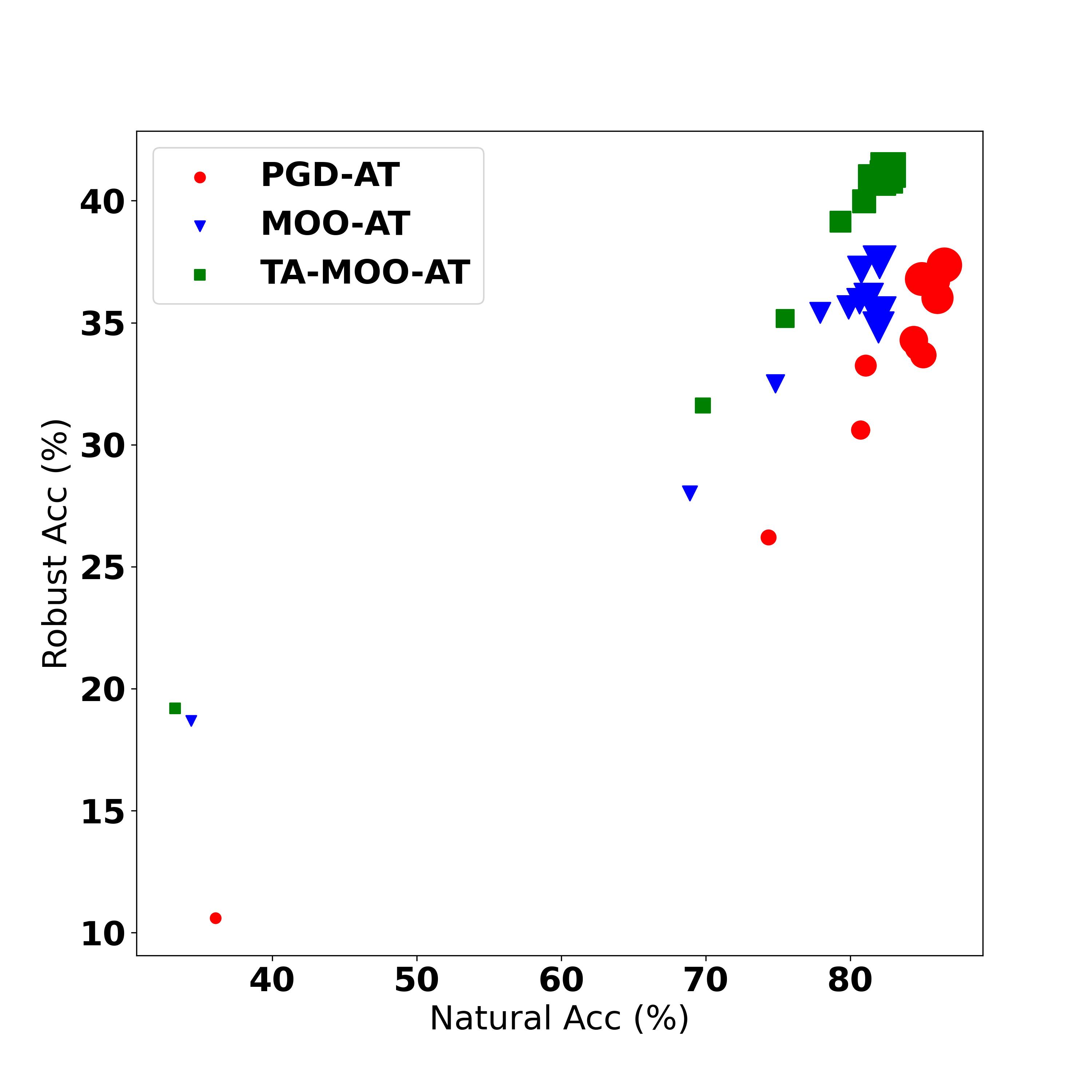}
    \par\end{centering}
    \label{fig:sensitivity-eot}
    }
    \par\end{centering}
    \caption{Comparison progress of three adversarial training methods. The bigger marker size represents the later epoch. Each point represents the natural accuracy and robust accuracy against PGD-Linf attack on the testing set.}
    \label{fig:ens-adv-training-progress-rme}
\end{figure}

\subsection{Universal Perturbation (UNI)\label{subsec:sup-uni-exp}}
\paragraph{Additional experimental results.}
In addition to the experiments on ResNet18 as reported in Table \ref{tab:cf10-cf100-uni}, we would like to provide additional experimental results on two other adversarial trained models VGG16 and EfficientNet as shown in Table \ref{tab:cf10-cf100-uni-full}. It can be seen that, TA-MOO consistently achieves the best attacking performance on ResNet18 and VGG16, on both CIFAR10 and CIFAR100 datasets, with $K \geq 8$.

\begin{center}
\begin{table}
\begin{centering}
\caption{Evaluation of generating Universal Perturbation on the CIFAR10 and
CIFAR100 datasets. R: ResNet18, V: VGG16, E: EfficientNet. \label{tab:cf10-cf100-uni-full}}
\resizebox{\textwidth}{!}{ %
\begin{tabular}{llccccccccccc}
 &  & \multicolumn{5}{c}{CIFAR10} &  & \multicolumn{5}{c}{CIFAR100}\tabularnewline
 &  & K=4 & K=8 & K=12 & K=16 & K=20 &  & K=4 & K=8 & K=12 & K=16 & K=20\tabularnewline
\midrule 
\multirow{4}{*}{R} & Uniform & 37.52 & 30.34 & 27.41 & 25.52 & 24.31 &  & 65.40 & 58.99 & 55.33 & 53.02 & 51.49\tabularnewline
 & MinMax & \textbf{50.13} & 33.68 & 20.46 & 15.74 & 14.73 &  & \textbf{74.73} & 62.29 & 52.05 & 45.26 & 42.33\tabularnewline
 & MOO & 43.80 & \uuline{35.92} & \uuline{31.41} & \uuline{28.75} & \uuline{26.83} &  & 69.35 & \uuline{62.72} & \uuline{57.72} & \uuline{54.12} & \uuline{52.25}\tabularnewline
 & TA-MOO & \uuline{48.00} & \textbf{39.31} & \textbf{34.96} & \textbf{31.84} & \textbf{30.12} &  & \uuline{72.74} & \textbf{68.06} & \textbf{62.33} & \textbf{57.48} & \textbf{54.12}\tabularnewline
\midrule 
\multirow{4}{*}{V} & Uniform & 37.76 & 30.81 & 27.49 & 25.94 & 24.46 &  & 66.87 & 61.49 & 58.53 & 56.29 & 54.98\tabularnewline
 & MinMax & \textbf{47.96} & 30.88 & 20.20 & 16.93 & 16.25 &  & \textbf{78.58} & \uuline{69.14} & 58.85 & 51.81 & 48.09\tabularnewline
 & MOO & 43.04 & \uuline{34.56} & \uuline{30.07} & \uuline{27.43} & \uuline{25.42} &  & 73.46 & 66.51 & \uuline{61.28} & \uuline{57.88} & \uuline{56.09}\tabularnewline
 & TA-MOO & \uuline{46.58} & \textbf{38.33} & \textbf{32.32} & \textbf{29.16} & \textbf{26.56} &  & \uuline{75.57} & \textbf{71.86} & \textbf{67.22} & \textbf{62.99} & \textbf{59.19}\tabularnewline
\midrule 
\multirow{4}{*}{E} & Uniform & 44.86 & \uuline{39.03} & \uuline{36.37} & \uuline{34.65} & \uuline{33.49} &  & 67.55 & \uuline{60.99} & \uuline{57.35} & \textbf{54.84} & \textbf{53.57}\tabularnewline
 & MinMax & 44.47 & 32.96 & 28.86 & 27.01 & 26.47 &  & \uuline{69.69} & 57.99 & 50.93 & 45.59 & 43.87\tabularnewline
 & MOO & \uuline{45.31} & \textbf{39.28} & \textbf{36.44} & \textbf{34.72} & \textbf{33.51} &  & 66.68 & 59.69 & 54.95 & 53.20 & \uuline{51.43}\tabularnewline
 & TA-MOO & \textbf{46.74} & 37.95 & 33.95 & 31.71 & 30.41 &  & \textbf{70.40} & \textbf{63.78} & \textbf{58.17} & \uuline{53.26} & 50.66\tabularnewline
\bottomrule
\end{tabular}}
\par\end{centering}
\centering{}\vspace{-4mm}
\end{table}
\par\end{center}

\paragraph{Why does MOO work?}
As shown in Table \ref{tab:cf10-cf100-uni-full}, MOO consistently achieves better performance than the Uniform strategy (except for the setting with EfficientNet on the CIFAR100 dataset). 
To find out the reason for the improvement, we investigate the gradient norm $\| \nabla_{\delta} f(\delta) \|$ and weight $w$ for the first, and second groups (as an example) and the average over 100 groups of the testset as shown in Table \ref{tab:uni-moouni-grad-strenth}. 
It can be seen that in the first and second groups, there are some tasks that have significantly low gradient strengths than other tasks. The gap of the strongest/weakest gradient strength can be a magnitude of $10^6$ indicating the domination of one task over others. While this issue can cause the failure as in the ENS setting, however, in the UNI setting, the lowest gradient strengths in each group correspond to unsuccessful tasks (unsuccessful adversarial examples) and vice versa. 
Recall that we use the multi-gradient descent algorithm to solve MOO, which in principle assigns a higher weight for a weaker gradient vector. Therefore, in the UNI setting, while the dominating issue still exists, fortunately, the result still fits our desired weighting strategy (i.e., higher weight for an unsuccessful task and vice versa). Moreover, when there are a large number of groups (i.e., 100 groups), the issue of dominating tasks is alleviated. The average gradient strength is more balanced as shown in Table \ref{tab:uni-moouni-grad-strenth}. This explains the improvement of MOO over the Uniform strategy in the UNI setting.

\begin{center}
\begin{table}
\caption{Evaluation of generating Universal Perturbation (K=8) on the CIFAR10 dataset with ResNet18 architecture and MOO method. $\{T_i\}_{i=1}^K$ represents value for each task (i.e., a sample in a group). $w_1/w_2$ represents the weight of the first/second group of K samples, while $w$ represents the the statistic of weight over all groups (mean$\pm$std). $\| \nabla_{\delta_1} f_i(\delta_1)\|$ / $\|  \nabla_{\delta_2} f_i(\delta_2) \| $ represents the gradient norm of the first/second group of K samples, while $\| \nabla_{\delta} f_i(\delta) \|$ represents the statistic of gradient norm over all groups (mean$\pm$std). 
$\mathbb{I}_0 / \mathbb{I}_1$ represents the indicator function for a successful (1) or unsuccessful (0) task, while $\mathbb{I}$ represents the the statistic of successful rate over all groups.} 

\centering{}%
\resizebox{\columnwidth}{!}{%
\begin{tabular}{lcccccccc}
 & $T_{1}$ & $T_{2}$ & $T_{3}$ & $T_{4}$ & $T_{5}$ & $T_{6}$ & $T_{7}$ & $T_{8}$\tabularnewline
\midrule 
$\| \nabla_{\delta_1} f_i(\delta_1) \| $ & 1.15e1 & 3.45e-5 & 1.97e-2 & 1.26e-4 & 1.27e0 & 1.04e-1 & 1.04e1 & 9.91e0\tabularnewline
$w_0$ &  0.0238 & 0.1861  & 0.1859  &  0.1861 & 0.1763 & 0.1862 & 0.0257  & 0.0299 \tabularnewline
$\mathbb{I}_0$ & 1  & 0  & 0  & 0  & 0 & 0 & 1  & 1 \tabularnewline
$\| \nabla_{\delta_2} f_i(\delta_2) \| $ & 9.70e0 & 1.59e1 & 4.32e-4 & 4.27e-4 & 1.25e1 & 6.23e-5 & 2.91e-5 & 6.17e-6\tabularnewline
$w_1$ & 0.0341 & 0.0167 & 0.1854 & 0.1854 & 0.0222 & 0.1854 & 0.1854 & 0.1854\tabularnewline
$\mathbb{I}_1$ & 1  & 1  & 0  & 0  & 1 & 0 & 0  & 0 \tabularnewline
$\| \nabla_{\delta} f_i(\delta) \| $ & 4.93$\pm$6.63 & 4.23$\pm$6.97 & 5.18$\pm$7.42 & 3.84$\pm$5.83 & 4.39$\pm$6.04 & 6.66$\pm$7.64 & 4.82$\pm$7.48 & 5.25$\pm$7.17\tabularnewline
$w$ & 0.12$\pm$0.08 & 0.14$\pm$0.09 & 0.12$\pm$0.08 & 0.13$\pm$0.08 & 0.12$\pm$0.09 & 0.10$\pm$0.08 & 0.14$\pm$0.10 & 0.11$\pm$0.08\tabularnewline
$\mathbb{I}$ &  0.38$\pm$0.49 & 0.28$\pm$0.46 & 0.36$\pm$0.48 & 0.32$\pm$0.48 & 0.38$\pm$0.48 & 0.48$\pm$0.50 & 0.32$\pm$0.47 & 0.40$\pm$0.49 \tabularnewline
\bottomrule
\end{tabular}
}
\label{tab:uni-moouni-grad-strenth}
\end{table}
\par\end{center}

\subsection{Robust Adversarial Examples against Transformations (EoT)\label{subsec:sup-aug-exp}}

We observed that in EoT with the stochastic setting, adjusting
gamma sometimes has the overflow issue resulting in an infinite gradient.
Recall that our method using MGDA to solve MOO which relies on the stability of gradient strengths. Therefore, in the case of having infinite gradients, learning weight $w$ is unstable, resulting to lower performance in both MOO and TA-MOO. 

To overcome the overflow issue, we allocate memory to cache the valid gradient of each task in the previous iteration and replace the infinite value in the current iteration with the valid one in the memory. The storage only requires a tensor with the same shape as the gradient (i.e., as the exact size of the input), therefore, it does not increase the computation resource significantly. 
As shown in Table \ref{tab:eot-det-sto}, this simple technique helps to improve performance of TA-MOO by 5.3\% on both the CIFAR10 and CIFAR100 datasets. It also helps to improve performance of MOO by  0.8\% and 4.8\%, respectively. Finally, after overcoming the gradient issue, the TA-MOO achieves the best performance on the CIFAR100 dataset and the second best performance on the CIFAR10 dataset (0.4\% lower in ASR-All but 0.8\% higher in ASR-Avg when comparing to MinMax). This result provides additional evidence of the advantage of our method.   

\begin{center}
    \begin{table}
    \caption{Robust adversarial examples against transformations evaluation. The
    highest/second highest performance is highlighted in \textbf{Bold}/\uuline{Underline}.
    MOO$^{\star}$ and TA-MOO$^{\star}$ represent version with memory
    to overcome the infinite gradient issue in the stochastic setting.
    \label{tab:eot-det-sto}}
    
    \centering{}%
    \begin{tabular}{llcccc}
     &  & \multicolumn{2}{c}{Deterministic} & \multicolumn{2}{c}{Stochastic}\tabularnewline
     &  & A-All & A-Avg & A-All & A-Avg\tabularnewline
    \midrule 
    \multirow{6}{*}{C10} & Uniform & \textcolor{blue}{25.98} & \textbf{55.33} & \textcolor{blue}{31.47} & \textbf{50.55}\tabularnewline
     & MinMax & \textcolor{blue}{\uuline{30.54}} & 52.20 & \textbf{\textcolor{blue}{33.35}} & 49.44\tabularnewline
     & MOO & \textcolor{blue}{21.25} & 49.81 & \textcolor{blue}{26.97} & 43.84\tabularnewline
     & TA-MOO & \textbf{\textcolor{blue}{31.10}} & \uuline{55.26} & \textcolor{blue}{28.26} & 45.67\tabularnewline
     & MOO$^{\star}$ & - & - & \textcolor{blue}{27.79} & 45.91\tabularnewline
     & TA-MOO$^{\star}$ & - & - & \textcolor{blue}{\uuline{32.96}} & \uuline{50.27}\tabularnewline
    \midrule 
    \multirow{6}{*}{C100} & Uniform & \textcolor{blue}{56.19} & \uuline{76.23} & \textcolor{blue}{59.89} & \uuline{73.73}\tabularnewline
     & MinMax & \textcolor{blue}{\uuline{59.75}} & 75.72 & \textcolor{blue}{\uuline{61.30}} & 73.59\tabularnewline
     & MOO & \textcolor{blue}{53.17} & 74.21 & \textcolor{blue}{54.96} & 69.26\tabularnewline
     & TA-MOO & \textbf{\textcolor{blue}{60.88}} & \textbf{76.71} & \textcolor{blue}{56.23} & 69.91\tabularnewline
     & MOO$^{\star}$ & - & - & \textcolor{blue}{58.79} & 72.81\tabularnewline
     & TA-MOO$^{\star}$ & - & - & \textbf{\textcolor{blue}{61.54}} & \textbf{74.07}\tabularnewline
    \bottomrule
    \end{tabular}
    \end{table}
\par\end{center}

\begin{center}
    \begin{table}
    \centering{}\caption{Robust adversarial examples against transformations evaluation. The
    highest/second highest performance is highlighted in \textbf{Bold}/\uuline{Underline}. 
    The most important metric is emphasized in blue color. 
    MOO$^{\star}$ and TA-MOO$^{\star}$ represent version with memory
    to overcome the infinite gradient issue in the stochastic setting.
    I: Identity, H: Horizontal flip, V: Vertical flip, C: Center crop,
    G: Adjust gamma, B: Adjust brightness, R: Rotation.\label{tab:eot-det-sto-all}}
    \begin{tabular}{llccccccccc}
     &  & \textcolor{blue}{A-All} & A-Avg & I & H & V & C & G & B & R\tabularnewline
    \midrule 
    \multirow{4}{*}{D-C10} & Uniform & \textcolor{blue}{25.98} & \textbf{55.33} & \textbf{44.85} & 41.58 & 82.90 & \textbf{72.56} & \textbf{45.92} & \textbf{49.59} & \textbf{49.93}\tabularnewline
     & MinMax & \textcolor{blue}{\uuline{30.54}} & 52.20 & 43.31 & \uuline{41.59} & 78.80 & 64.83 & 44.38 & 46.53 & 45.97\tabularnewline
     & MOO & \textcolor{blue}{21.25} & 49.81 & 36.23 & 33.93 & \textbf{87.47} & 71.05 & 37.68 & 40.21 & 42.12\tabularnewline
     & TA-MOO & \textbf{\textcolor{blue}{31.10}} & \uuline{55.26} & \uuline{44.15} & \textbf{41.86} & \uuline{85.19} & \uuline{71.86} & \uuline{45.53} & \uuline{48.70} & \uuline{49.54}\tabularnewline
    \midrule 
    \multirow{4}{*}{D-C100} & Uniform & \textcolor{blue}{56.19} & \uuline{76.23} & \textbf{70.43} & 69.01 & 87.66 & \uuline{87.36} & 71.40 & \uuline{74.25} & \uuline{73.47}\tabularnewline
     & MinMax & \textcolor{blue}{\uuline{59.75}} & 75.72 & \uuline{70.13} & \uuline{69.26} & 87.45 & 86.03 & \uuline{71.54} & 73.30 & 72.32\tabularnewline
     & MOO & \textcolor{blue}{53.17} & 74.21 & 66.96 & 65.68 & \textbf{89.16} & 87.03 & 68.49 & 71.11 & 71.06\tabularnewline
     & TA-MOO & \textbf{\textcolor{blue}{60.88}} & \textbf{76.71} & \textbf{70.43} & \textbf{69.37} & \uuline{89.11} & \textbf{87.95} & \textbf{71.70} & \textbf{74.73} & \textbf{73.69}\tabularnewline
    \midrule 
    \multirow{6}{*}{S-C10} & Uniform & \textcolor{blue}{31.47} & \textbf{50.55} & \textbf{48.58} & \uuline{44.70} & \textbf{65.52} & 51.14 & \textbf{47.43} & \textbf{48.76} & \uuline{47.70}\tabularnewline
     & MinMax & \textbf{\textcolor{blue}{33.35}} & 49.44 & 47.35 & 44.45 & 62.78 & \uuline{51.75} & 46.32 & 47.13 & 46.34\tabularnewline
     & MOO & \textcolor{blue}{26.97} & 43.84 & 40.62 & 38.45 & 57.65 & 48.55 & 40.41 & 40.71 & 40.47\tabularnewline
     & TA-MOO & \textcolor{blue}{28.26} & 45.67 & 42.80 & 39.66 & 61.98 & 47.92 & 41.80 & 43.01 & 42.54\tabularnewline
     & MOO$^{\star}$ & \textcolor{blue}{27.79} & 45.91 & 42.43 & 39.65 & 62.11 & 51.44 & 41.62 & 42.21 & 41.92\tabularnewline
     & TA-MOO$^{\star}$ & \textcolor{blue}{\uuline{32.96}} & \uuline{50.27} & \uuline{48.18} & \textbf{45.26} & \uuline{62.97} & \textbf{52.49} & \uuline{47.03} & \uuline{48.22} & \textbf{47.76}\tabularnewline
    \midrule 
    \multirow{6}{*}{S-C100} & Uniform & \textcolor{blue}{59.89} & \uuline{73.73} & \textbf{73.19} & \textbf{71.15} & 79.73 & 74.81 & \uuline{72.05} & \uuline{73.10} & \textbf{72.10}\tabularnewline
     & MinMax & \textcolor{blue}{\uuline{61.30}} & 73.59 & 72.44 & 70.55 & 80.04 & \uuline{75.55} & 71.99 & 72.49 & \textbf{72.10}\tabularnewline
     & MOO & \textcolor{blue}{54.96} & 69.26 & 67.62 & 66.11 & 75.88 & 72.72 & 66.87 & 68.11 & 67.49\tabularnewline
     & TA-MOO & \textcolor{blue}{56.23} & 69.91 & 68.52 & 66.92 & 76.70 & 72.71 & 67.57 & 68.97 & 67.97\tabularnewline
     & MOO$^{\star}$ & \textcolor{blue}{58.79} & 72.81 & 71.58 & 69.08 & \uuline{80.17} & 75.01 & 70.78 & 71.71 & 71.33\tabularnewline
     & TA-MOO$^{\star}$ & \textbf{\textcolor{blue}{61.54}} & \textbf{74.07} & \uuline{72.95} & \uuline{70.95} & \textbf{80.94} & \textbf{76.22} & \textbf{72.22} & \textbf{73.21} & \uuline{72.00}\tabularnewline
    \bottomrule
    \end{tabular}
    \end{table}
\par\end{center}

\subsection{Generating Speed Comparison and Experiments' Stability \label{subsec:sup-speed-comparison}}

\paragraph{Generating Speed Comparison. }
Table \ref{tab:compute-time} shows the average time to generate one
adversarial example in each setting. The results are measured on the
CIFAR10 dataset with ResNet18 architecture in the Ensemble of Transformations
(EoT) and Universal Perturbation (Uni) settings. We use 1 Titan RTX
24GB for the EoT experiment and 4 Tesla V100 16GB each for the other experiments.
It is worth mentioning that our primary focus in this paper is showing 
the advantage of MOO and the Task-Oriented regularization in generating adversarial examples. Therefore, we did not try to optimize our implementation in terms of generating time. 

\begin{table}
\caption{Average time per sample for generating adversarial example. All experiments
are measured on the CIFAR10 dataset, EoT and Uni are with ResNet18
architecture.\label{tab:compute-time}}

\centering{}%
\begin{tabular}{ccccc}
 & Ensemble (K=4) & EoT (K=7) & Uni@K=12 & Uni@K=20\tabularnewline
\hline 
Uniform & 640ms & 350ms & 1850ms & 3030ms\tabularnewline
MinMax & 1540ms & 610ms & 1210ms & 2080ms\tabularnewline
MOO & 1770ms & 1130ms & 5600ms & 9280ms\tabularnewline
TA-MOO & 1960ms & 1200ms & 5870ms & 9500ms\tabularnewline
\hline 
\end{tabular}
\end{table}

\paragraph{Experiments' Stability. }
We conduct an experiment with 5 different random seeds to generate adversarial examples for the ENS setting 
to evaluate the stability of experimental results on choosing of random seed. The experiment is on the CIFAR10 dataset, 
with an ensemble of 4 architectures including ResNet18, VGG16, GoogLeNet, and EfficientNet. 
We report mean and variation values in Table \ref{tab:sup-stability-study}. 
It can be observed that there is a slight variation in attack performances across
methods. The variation is small enough compared to the gap between methods (i.e., the biggest
variation is 0.32\% in SAR-All while the smallest gap is 2.51\% between MOO and the Uniform approach),
therefore, making the comparison still reliable. 

\begin{table}
    \caption{Stability of experiments' evaluation on different random seeds. Experiment
    on the ENS setting, with an ensemble of 4 models: Resnet18, VGG16,
    GoogleNet and EfficientNet. \label{tab:sup-stability-study}}
    
    \begin{centering}
    \resizebox{\textwidth}{!}{
    \begin{tabular}{lcccccc}
     & A-All & A-Avg & R & V & G & E\tabularnewline
    \hline 
    Uniform & 28.12 $\pm$ 0.09 & 48.29 $\pm$ 0.05 & 48.81 $\pm$ 0.08 & 49.06 $\pm$ 0.08 & 48.27 $\pm$ 0.10 & 47.06 $\pm$ 0.03\tabularnewline
    MOO & 25.61 $\pm$ 0.36 & 45.13 $\pm$ 0.30 & 39.84 $\pm$ 0.62  & 47.29 $\pm$ 0.36 & 37.51 $\pm$ 0.36 & 55.90 $\pm$ 0.17\tabularnewline
    TA-MOO & 37.56 $\pm$ 0.32 & 51.15 $\pm$ 0.21 & 49.37 $\pm$ 0.15  & 52.80 $\pm$ 0.45 & 48.98 $\pm$ 0.25 & 53.24 $\pm$ 0.13\tabularnewline
    \hline 
    \end{tabular}}
    \par\end{centering}
\end{table}

\subsection{Sensitivity to Hyper-parameters} \label{sec:alb-sensitivity}
In this section we provide an analytical experiment on the sensitivity of our TA-MOO method to the tradeoff $\lambda$. 
The study has been conducted with the ENS setting with CE loss and the EoT setting with deterministic transformations using ResNet18 architecture. 
All experiments are on the CIFAR10 dataset. The value of $\lambda$ is changed from 1 to 1000. 
It can be observed from Figure \ref{fig:sensitivity-ens} (the ENS setting) that 
(i) increasing $\lambda$ reduces the performance of dominated task (i.e., ASR on the EfficientNet decreases from 54.49\% at $\lambda=1$ to 53.40\% at $\lambda=100$) while increases performances of other tasks. 
In overall, it significantly increases the ASR-All performance of the entire ensemble from 29.14\% at $\lambda=1$ to 38.01\% at $\lambda=100$. 
(ii) However, over-high $\lambda$ (i.e., $\lambda > 200$) leads to the drop of performance in all tasks, resulting in a lower overall performance.   

A similar observation can be seen in the EoT setting in Figure \ref{fig:sensitivity-eot}. 
The attack performance on the dominated task (V-Vertical flipping) decreases from 86.11\% at $\lambda=50$ to 83.67\% at $\lambda=200$. 
In contract, in the same range of $\lambda$ the overall performance increases from 32.85\% to 34.36\%. 
The performances of all tasks decrease when using too large $\lambda$ (i.e., $\lambda > 200$). 
Based on the result of this study, we choose $\lambda=100$ in all the other experiments.  

\begin{figure}
    \begin{centering}
    \subfloat[ENS setting]{
    \begin{centering}
    \includegraphics[width=0.5\textwidth]{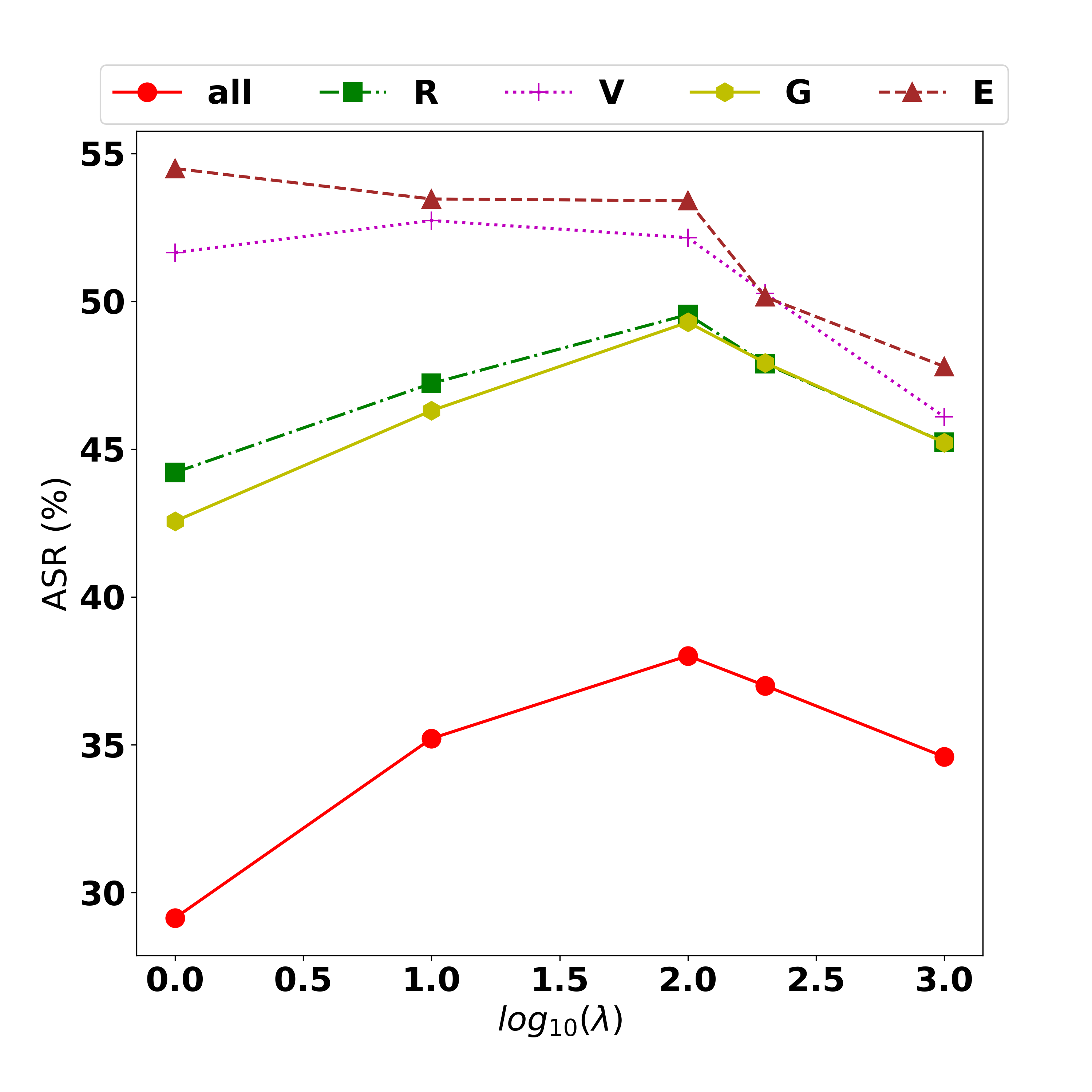}
    \par\end{centering}
    \label{fig:sensitivity-ens}
    }
    \subfloat[EoT setting]{\begin{centering}
    \includegraphics[width=0.5\textwidth]{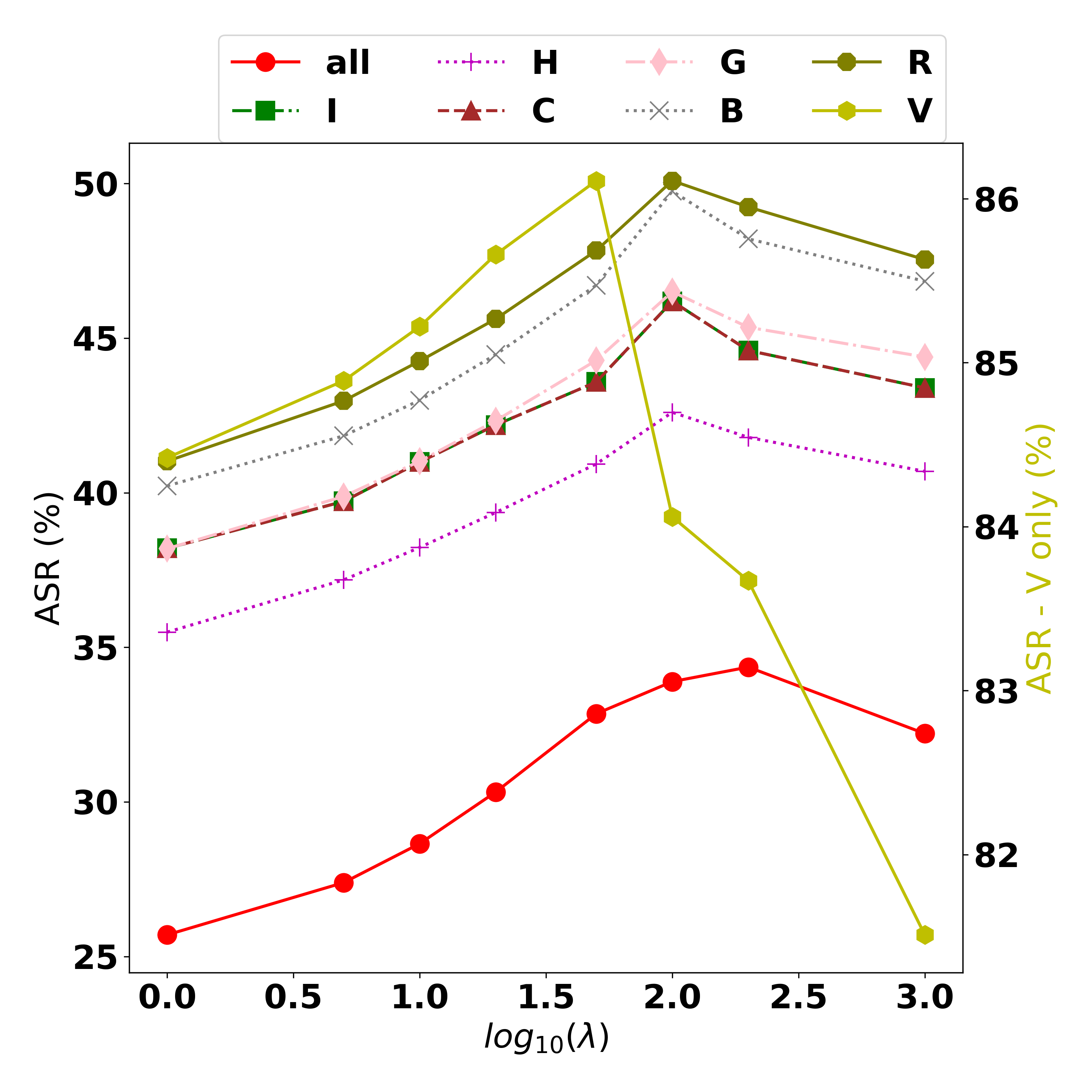}
    \par\end{centering}
    \label{fig:sensitivity-eot}
    }
    \par\end{centering}
    \caption{Sensitivity to the parameter $\lambda$.}
    \label{fig:sensitivity-lambda}
\end{figure}

\subsection{Comparison with Standard Attacks} \label{sec:alb-compare-attacks}
We conducted an additional comparison on the ENS setting to further
confirm the effectiveness of our method over standard adversarial
attacks (which consider an entire ensemble as a single model). More
specifically, we compare with AutoAttack \citep{croce2020reliable}, Brendel-Bethge attack (BB) \citep{brendel2019accurate},
Carlini-Wagner attack (CW) \citep{carlini2017towards}, and PGD attack \citep{madry2017towards}. 
For AutoAttack, we use the standard version which includes 4 different attacks. 
For BB attack, we initialized with the PGD attack with 20 steps. 
For CW attack, we set the confidence factor to 1.0. 
We evaluate these attacks on 2 ensemble settings, a diverse (D) ensemble set with
4 different architectures (ResNet18, VGG16, GoogLeNet, and EfficientNet)
and a non-diverse (ND) ensemble set with 4 ResNet18 architectures.

It can be seen from the Table \ref{tab:compare-attacks} that our TA-MOO attack consistently
achieves the best attack performance, with a significant gap compared
to the best standard attack. More specifically, our TA-MOO method
achieves 38.01\% (SAR-All metric) on the diverse ensemble set, while
the second best attack is AutoAttack with 30.71\% (a gap of 7.3\%).
On the non-diverse set, the gap between our TA-MOO and AutoAttack
is still notably large at 4\%. These standard attacks consider an entire ensemble 
as a single model, i.e., aim to optimize a single objective given a single ensemble output. 
Therefore, they cannot guarantee a successful attack on each member. 

\begin{center}
    \begin{table}
    \centering{}\caption{Attacking Ensemble model with a diverse set D=\{R-ResNet18, V-VGG16,
    G-GoogLeNet, E-EfficientNet\} and non-diverse set ND=\{4 ResNets\}.
    Experiment on the CIFAR10 dataset with cross-entropy objective loss.
    The most important metric is emphasized in blue.\label{tab:compare-attacks}}
    \begin{tabular}{llrrrrrr}
     &  & \textcolor{blue}{A-All} & A-Avg & R/R1 & V/R2 & G/R3 & E/R4\tabularnewline
    \hline 
    \multirow{6}{*}{D} & PGD & \textcolor{blue}{28.21} & \uuline{48.34} & \uuline{48.89} & \uuline{49.08} & \uuline{48.38} & 47.03\tabularnewline
     & CW & \textcolor{blue}{6.10} & 16.63 & 13.53 & 15.76 & 11.74 & 25.47\tabularnewline
     & B\&B & \textcolor{blue}{6.67} & 38.03 & 37.95 & 38.92 & 35.58 & 39.68\tabularnewline
     & AutoAttack & \textcolor{blue}{\uuline{30.71}} & 45.49 & 48.32 & 45.83 & 47.25 & 40.56\tabularnewline
     & MOO & \textcolor{blue}{25.16} & 44.76 & 39.06 & 46.83 & 37.05 & \textbf{56.11}\tabularnewline
     & TA-MOO & \textbf{\textcolor{blue}{38.01}} & \textbf{51.10} & \textbf{49.55} & \textbf{52.15} & \textbf{49.29} & \uuline{53.40}\tabularnewline
    \hline 
    \multirow{6}{*}{ND} & PGD & \textcolor{blue}{28.17} & 48.75 & 51.94 & 45.55 & 54.15 & 43.34\tabularnewline
     & CW & \textcolor{blue}{4.71} & 13.86 & 14.92 & 12.71 & 17.51 & 10.31\tabularnewline
     & B\&B & \textcolor{blue}{5.29} & 40.51 & 49.06 & 35.19 & 48.63 & 29.16\tabularnewline
     & AutoAttack & \textcolor{blue}{\uuline{37.00}} & 49.32 & 51.07 & 48.58 & 51.08 & 46.55\tabularnewline
     & MOO & \textcolor{blue}{32.50} & \uuline{52.21} & \uuline{53.25} & \uuline{49.05} & \uuline{56.80} & \uuline{49.76}\tabularnewline
     & TA-MOO & \textbf{\textcolor{blue}{41.01}} & \textbf{57.33} & \textbf{58.88} & \textbf{55.32} & \textbf{60.81} & \textbf{54.29}\tabularnewline
    \hline 
    \end{tabular}
    \end{table}
\par\end{center}

\subsection{Attacking the ImageNet dataset} \label{subsec:attack-imagenet}

\paragraph{Experimental Setting. }

We conduct experiments on the ENS setting using the adversarial pre-trained
models on the RobustBench \citep{croce2021robustbench}. We use two sets of an ensemble
to verify the importance of our task-oriented strategy. The first
set is the robust ensemble (RE) set including 3 robust models: ResNet18
(model ID: Salman2020Do\_R18 \citep{salman2020adversarially}, robust accuracy 25.32\%), ResNet50 (model
ID: Salman2020Do\_R50 \citep{salman2020adversarially}, robust accuracy 34.96\%) and ResNet50 (model
ID: Wong2020Fast \citep{wong2019fast}, robust accuracy 26.24\%). The second set is the
less-robust ensemble (LE) which includes 3 models: ResNet18 (model
ID: Salman2020Do\_R18), ResNet50 (model ID: Salman2020Do\_R50) and
the standard training ResNet50 (model ID: Standard\_R50, robust accuracy
0\%). We use both targeted attack and untargeted attack settings,
with $\epsilon=4/255$ , and $\eta=1/255$ with 20 steps. We use 5000 images of the
validation set to evaluate.

\paragraph{Experimental Results.}

We report experimental results with different settings in Table
\ref{tab:ens-imagenet}, where RE/LE/TAR/UNTAR represents Robust Ensemble/Less-Robust
Ensemble/Targeted Attack/Untargeted Attack, respectively. It can be
seen that, in the robust ensemble setting (RE-TAR and RE-UNTAR), our
MOO achieves a similar performance compared to the baseline, while
TA-MOO has a further improvement over MOO. The gap of SAR-All between
TA-MOO and the uniform weighting strategy is 0.1\% in the targeted
attack setting (RE-TAR), while that in the untargeted attack setting
is 1.2\%. In the less-robust ensemble setting (LE-TAR and LE-UNTAR),
the improvement of our methods over the baseline is higher than in
the robust ensemble setting. With the gap of SAR-All between TA-MOO
and the uniform strategy is 0.38\% with the targeted attack setting
(LE-TAR), while the gap in the untargeted setting (LE-UNTAR) is 15.22\%
a significantly higher. While it is acknowledged that the targeted
attack is a more common protocol in attacking the ImageNet dataset \citep{athalye2018obfuscated}, however, we believe that our significant improvement on the
untargeted attack is still worth noting.

\begin{center}
    \begin{table}
    \centering{}\caption{Evaluation attacking performance on the ImageNet dataset. RE/LE/TAR/UNTAR
    represents Robust Ensemble/Less-Robust Ensemble/Targeted Attack/Untargeted
    Attack, respectively. R18/R50/STD represents robust ResNet18, robust
    ResNet50 and standard ResNet50 pre-trained model, respectively. The
    most important metric is emphasized in blue.\label{tab:ens-imagenet} }
    \begin{tabular}{llrrrrr}
     &  & \textcolor{blue}{A-All} & A-Avg & R18/R18 & R50/R50 & R50/STD\tabularnewline
    \hline 
    RE-TAR & Uniform & \textcolor{blue}{29.58} & 39.38 & 42.50 & 32.22 & 43.42\tabularnewline
     & MOO & \textcolor{blue}{29.66} & \textbf{39.73} & 42.86 & \textbf{32.32} & 44.00\tabularnewline
     & TA-MOO & \textbf{\textcolor{blue}{29.68}} & \textbf{39.73} & \textbf{42.90} & 32.26 & \textbf{44.02}\tabularnewline
    \hline 
    LE-TAR & Uniform & \textcolor{blue}{30.30} & 58.14 & 42.36 & 32.06 & \textbf{100.0}\tabularnewline
     & MOO & \textcolor{blue}{30.66} & \textbf{58.37} & \textbf{42.70} & \textbf{32.48} & 99.94\tabularnewline
     & TA-MOO & \textbf{\textcolor{blue}{30.68}} & 58.25 & 42.54 & 32.36 & 99.86\tabularnewline
    \hline 
    RE-UNTAR & Uniform & \textcolor{blue}{48.58} & 60.11 & 64.22 & 51.72 & 64.38\tabularnewline
     & MOO & \textcolor{blue}{48.68} & \textbf{60.20} & \textbf{64.30} & 51.82 & \textbf{64.48}\tabularnewline
     & TA-MOO & \textbf{\textcolor{blue}{49.80}} & 59.71 & 63.80 & \textbf{52.38} & 62.94\tabularnewline
    \hline 
    LE-UNTAR & Uniform & \textcolor{blue}{34.24} & 61.01 & 46.98 & 36.28 & 99.78\tabularnewline
     & MOO & \textcolor{blue}{44.76} & 68.29 & 58.42 & 46.64 & 99.80\tabularnewline
     & TA-MOO & \textbf{\textcolor{blue}{49.46}} & \textbf{70.74} & \textbf{61.26} & \textbf{51.14} & \textbf{99.82}\tabularnewline
    \hline 
    \end{tabular}
    \end{table}
\par\end{center}

We conduct an additional experiment on the EoT setting
with the ImageNet dataset and report result in Table \ref{tab:eot-imagenet}. In this experiment, we use the robust pretrained
ResNet18 model (model ID: Salman2020Do\_R18) as the victim model.
We use the standard attack setting, i.e., targeted attack with $\epsilon=4/255,\eta=1/255$
with 20 steps. It can be seen that both MOO and TA-MOO could obtain
a better attack performance than the uniform strategy. It is a worth
noting that, in the experiment on the CIFAR10/CIFAR100 datasets (i.e.,
Table \ref{tab:eot-main-res} in the main paper) the dominating issue of the vertical filliping
exists and prevents MOO to obtain a better performance. In the ImageNet
dataset, the dominating issue is less serious, therefore, explains
the improvement of MOO and corroborates our hypothesis on the issue of dominating task.

\begin{center}
    \begin{table}
    \caption{Evaluation on the EoT setting with the ImageNet dataset. The most
    important metric is emphasized in blue.\label{tab:eot-imagenet}}
    
    \centering{}%
    \begin{tabular}{lccccccccc}
     & \textcolor{blue}{A-All} & A-Avg & I & H & V & C & G & B & R\tabularnewline
    \hline 
    Uniform & \textcolor{blue}{31.52} & 46.59 & 41.12 & \textbf{40.98} & 67.42 & 41.60 & 43.26 & 41.82 & 49.96\tabularnewline
    MOO & \textcolor{blue}{31.92} & 47.19 & 41.92 & 41.78 & 67.64 & \textbf{42.10} & 43.66 & 42.74 & 50.48\tabularnewline
    TA-MOO & \textbf{\textcolor{blue}{32.00}} & \textbf{47.21} & \textbf{41.94} & 41.80 & \textbf{67.66} & 42.06 & \textbf{43.70} & \textbf{42.80} & \textbf{50.52}\tabularnewline
    \hline 
    \end{tabular}
    \end{table}
\par\end{center}

\section{Additional Discussions}\label{sec:sup-discussions}

\begin{table}
\centering{}\caption{Evaluation of Attacking Ensemble model on the CIFAR10 (C10) and CIFAR100 (C100) datasets. The highest/second highest performance is highlighted in \textbf{Bold}/\uuline{Underline}. 
The table is copied from Table \ref{tab:ens-main-res} in the main paper for reading comprehension purpose.}
\label{tab:ens-main-res-supp}
\begin{tabular}{llcccccc}
\toprule 
 &  & \multicolumn{2}{c}{CW} & \multicolumn{2}{c}{CE} & \multicolumn{2}{c}{KL}\tabularnewline
 &  & \textcolor{blue}{A-All} & A-Avg & \textcolor{blue}{A-All} & A-Avg & \textcolor{blue}{A-All} & A-Avg\tabularnewline
\midrule 
\multirow{4}{*}{C10} & Uniform & \textcolor{blue}{26.37} & \textbf{41.13} & \textcolor{blue}{28.21} & 48.34 & \textcolor{blue}{17.44} & \uuline{32.85}\tabularnewline
 & MinMax & \textcolor{blue}{\uuline{27.53}} & \uuline{41.20} & \textcolor{blue}{\uuline{35.75}} & \textbf{51.56} & \textcolor{blue}{\uuline{19.97}} & \textbf{33.13}\tabularnewline
 & MOO & \textcolor{blue}{18.87} & 34.24 & \textcolor{blue}{25.16} & 44.76 & \textcolor{blue}{15.69} & 29.54\tabularnewline
 & TA-MOO & \textbf{\textcolor{blue}{30.65}} & 40.41 & \textbf{\textcolor{blue}{38.01}} & \uuline{51.10} & \textbf{\textcolor{blue}{20.56}} & 31.42\tabularnewline
\midrule
\multirow{4}{*}{C100} & Uniform & \textcolor{blue}{52.82} & \textbf{67.39} & \textcolor{blue}{55.86} & 72.62 & \textcolor{blue}{38.57} & \textbf{54.88}\tabularnewline
 & MinMax & \textcolor{blue}{\uuline{54.96}} & 66.92 & \textcolor{blue}{\uuline{63.70}} & \uuline{75.44} & \textcolor{blue}{\uuline{40.67}} & \uuline{53.83}\tabularnewline
 & MOO & \textcolor{blue}{51.16} & 65.87 & \textcolor{blue}{58.17} & 73.19 & \textcolor{blue}{39.18} & 53.44\tabularnewline
 & TA-MOO & \textbf{\textcolor{blue}{55.73}} & \uuline{67.02} & \textbf{\textcolor{blue}{64.89}} & \textbf{75.85} & \textbf{\textcolor{blue}{41.97}} & 53.76\tabularnewline
\bottomrule
\end{tabular}
\end{table}

\begin{table}
\centering{}\caption{Attacking Ensemble model with a diverse set D=\{R-ResNet18, V-VGG16, G-GoogLeNet, E-EfficientNet\} and non-diverse
set ND=\{4 ResNets\}. $w$ represents the final $w$ of MOO (mean $\pm$ std). $\| \nabla_{\delta} f_i(\delta) \|$ represents the gradient norm of each model (mean $\pm$ std). The table is copied from Table \ref{tab:ENS-non-diverse} in the main paper for reading comprehension purpose.}
\label{tab:ENS-non-diverse-sup}
\begin{tabular}{llcccccc}
 &  & \textcolor{blue}{A-All} & A-Avg & R/R1 & V/R2 & G/R3 & E/R4\tabularnewline
\midrule 
\multirow{3}{*}{D} 
& $\| \nabla_{\delta} f_i(\delta) \| $ & - & - & 7.15 $\pm$ 6.87 & 4.29 $\pm$ 4.64 & 7.35 $\pm$ 7.21 & 0.98 $\pm$ 0.72\tabularnewline
& $w$ & - & - & 0.15 $\pm$ 0.14 & 0.17 $\pm$0.13 & 0.15 $\pm$ 0.14 & 0.53 $\pm$ 0.29\tabularnewline
& Uniform & \textcolor{blue}{28.21} & 48.34 & 48.89 & 49.08 & 48.38 & 47.03\tabularnewline
 & MOO & \textcolor{blue}{25.16} & 44.76 & 39.06 & 46.83 & 37.05 & 56.11\tabularnewline
 & TA-MOO & \textcolor{blue}{38.01} & 51.10 & 49.55 & 52.15 & 49.29 & 53.40\tabularnewline
 \midrule 
\multirow{3}{*}{ND} 
& $\| \nabla_{\delta} f_i(\delta) \| $ & - & - & 8.41 $\pm$ 8.22 & 6.68$\pm$ 6.95 & 7.36 $\pm$ 6.03 & 5.67 $\pm$ 6.09\tabularnewline
& $w$ & - & - & 0.23 $\pm$ 0.21 & 0.24$\pm$0.17 & 0.23 $\pm$ 0.19 & 0.30 $\pm$ 0.21\tabularnewline
& Uniform & \textcolor{blue}{28.17} & 48.75 & 51.94 & 45.55 & 54.15 & 43.34\tabularnewline
 & MOO & \textcolor{blue}{32.50} & 52.21 & 53.25 & 49.05 & 56.80 & 49.76\tabularnewline
 & TA-MOO & \textcolor{blue}{41.01} & 57.33 & 58.88 & 55.32 & 60.81 & 54.29\tabularnewline
\bottomrule
\end{tabular}
\end{table}

\subsection{When does MOO Work? \label{subsec:sup-discussion-dominating-issue}}
\paragraph{The dominating issue.}
On  one hand, there is the dominating issue that happens in all the three settings. 
The issue can be recognized by the gap of attack performance among tasks. For example, 
in Table \ref{tab:ENS-non-diverse-sup} (i.e., the ENS setting with the diverse ensemble and MOO method), 
the gap between highest ASR (at EfficientNet) and lowest ASR (at GoogLeNet) is 19\%. 
In the EoT setting, the problem is even worse: The largest gap observed is 53.6\% as shown 
in Table \ref{tab:eot-det-sto-all} (the highest ASR is 88.19\% with Vertical flipping and the lowest ASR is 34.54\% with Horizontal flipping in with MOO - D-C10 setting). 
The dominating issue is also be recognized by the observation that a significant small gradient strength of one task on comparison with 
other tasks' strength. For example, in Table \ref{tab:ENS-non-diverse-sup} it can be seen that the gradient 
strength corresponding to the EfficientNet architecture (mean value is 0.98) is much lower than 
those of other architectures (mean values are at least 4.29). As the result, the weight corresponding to the EfficientNet architecture is much higher than those of others. 

The root of the dominating issue can be the natural of the setting (i.e., as shown in Table \ref{tab:eot-det-sto-all} with the EoT setting, when the 
domination of the Vertical flipping task can be observed in all methods) or because of the MOO solver which is discussed in Section \ref{sec:sup-gradient-des-discuss} 

\paragraph{Overcoming the dominating issue.}
On the other hand, if overcoming this issue, MOO can outperform the Uniform strategy. 
For example, on attacking the non-diverse ensemble model (i.e., 4 ResNets) MOO surpasses the Uniform strategy by 4.3\% and 3.5\% in the ASR-All and ASR-Avg metrics, respectively. On generating universal perturbations, MOO outperforms the Uniform strategy in most of the settings. 
As discussed in Section \ref{subsec:sup-aug-exp}, a simple memory caching trick can helps to overcome the infinite gradient issue and significantly boosts the performance of MOO or TA-MOO. Therefore, we believe that developing a technique to lessen the dominating issue might be a potential extension to further improve the performance. 

\paragraph{Balancing among goal-unachived tasks.}
We observed in the EoT setting, the dominating issue is strictly serious when gradients of some tasks are much weaker/stronger than others.
It is because of the natural of the transformation operations, therefore, this issue happens regardless status of the tasks.
In the set of goal-unachieved tasks' gradients can exist a dominated one, resulting to a much higher weight of the dominated task.  
Therefore, in order to strike a more balance among goal-unachieved tasks, we apply an additional regularization which minimizes the entropy 
of goal-unachieved weights $\mathcal{H}(w)= \sum_{i=s+1}^m - w_i \log w_i$. 
If all tasks have been achieved (i.e., $s=m$) then the additional regularization will be ignored.  
This additional regularization helps to improve further 2\% in the EoT setting.

\subsection{Importance of the Task-Oriented Regularization.\label{subsec:sup-discussion-task-oriented-reg}}

In this discussion, we would like to provide more experimental results in the ENS and EoT settings to further emphasize the contribution of the Task-Oriented regularization. 
Figure \ref{fig:ens-radar-tamoo} shows the ASR of each individual task in the ENS setting with three losses and the EoT setting with ResNet18 architecture and deterministic transformations.
As shown in Figure \ref{fig:ens-radar-tamoo-ens}, in the ENS setting, the MOO adversary produces a much higher ASR on the EfficientNet architecture than other architectures with any losses.
In contrast, the TA-MOO adversary has a lower ASR on the EfficientNet architecture but a much higher ASR on other architectures.   
Similar observation can be seen in Figure \ref{fig:ens-radar-tamoo-eot} such that the ASR corresponding to the V-flipping of MOO is slightly higher than 
that of TA-MOO, however, the ASR on other transformations of MOO is much lower than those of TA-MOO. 

\begin{figure}
    \begin{centering}
    \subfloat[ENS setting]{
    \begin{centering}
        \includegraphics[width=0.45\textwidth]{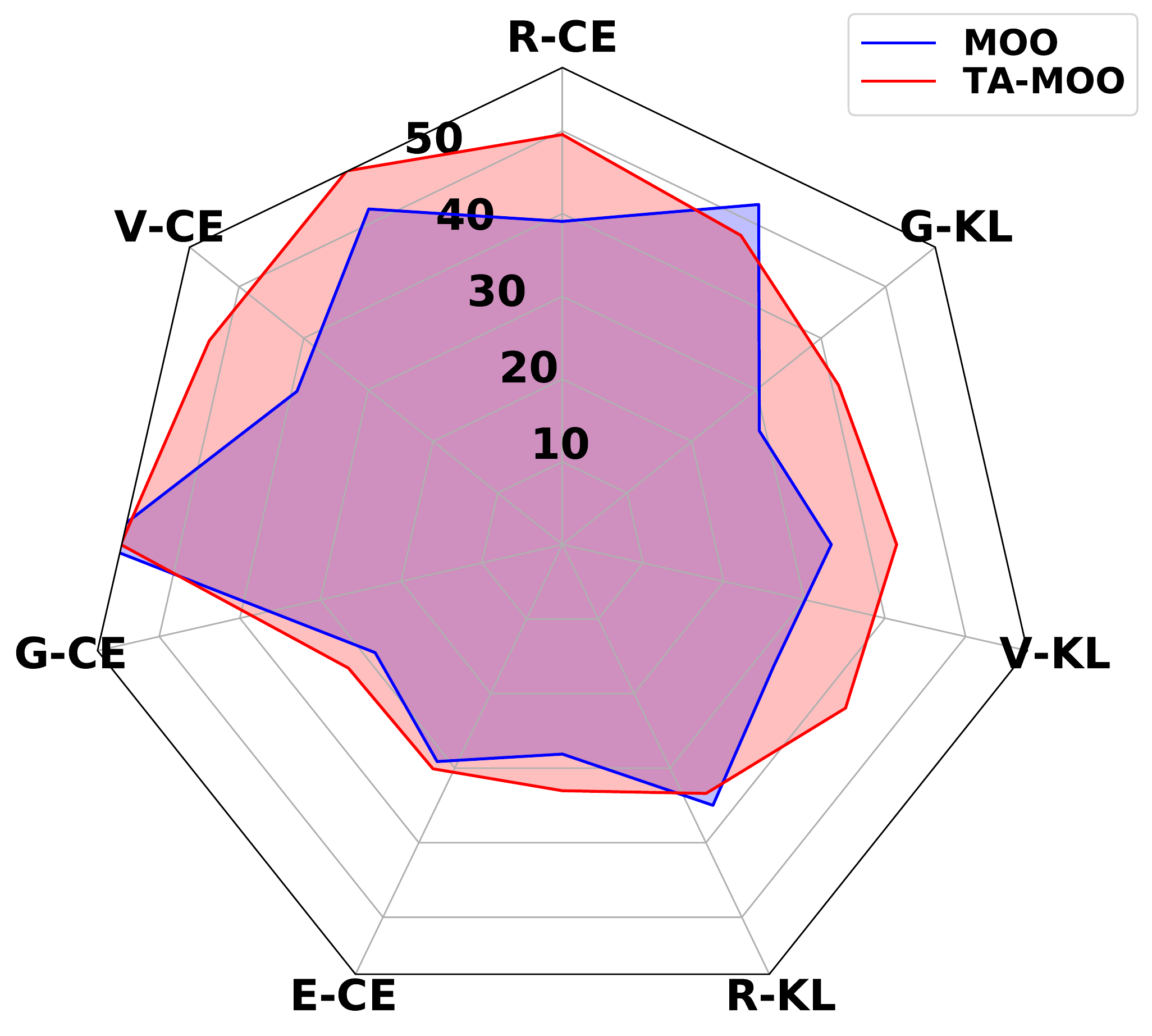}
    \par\end{centering}
    \label{fig:ens-radar-tamoo-ens}
    }
    \subfloat[EoT setting]{\begin{centering}
        \includegraphics[width=0.5\textwidth]{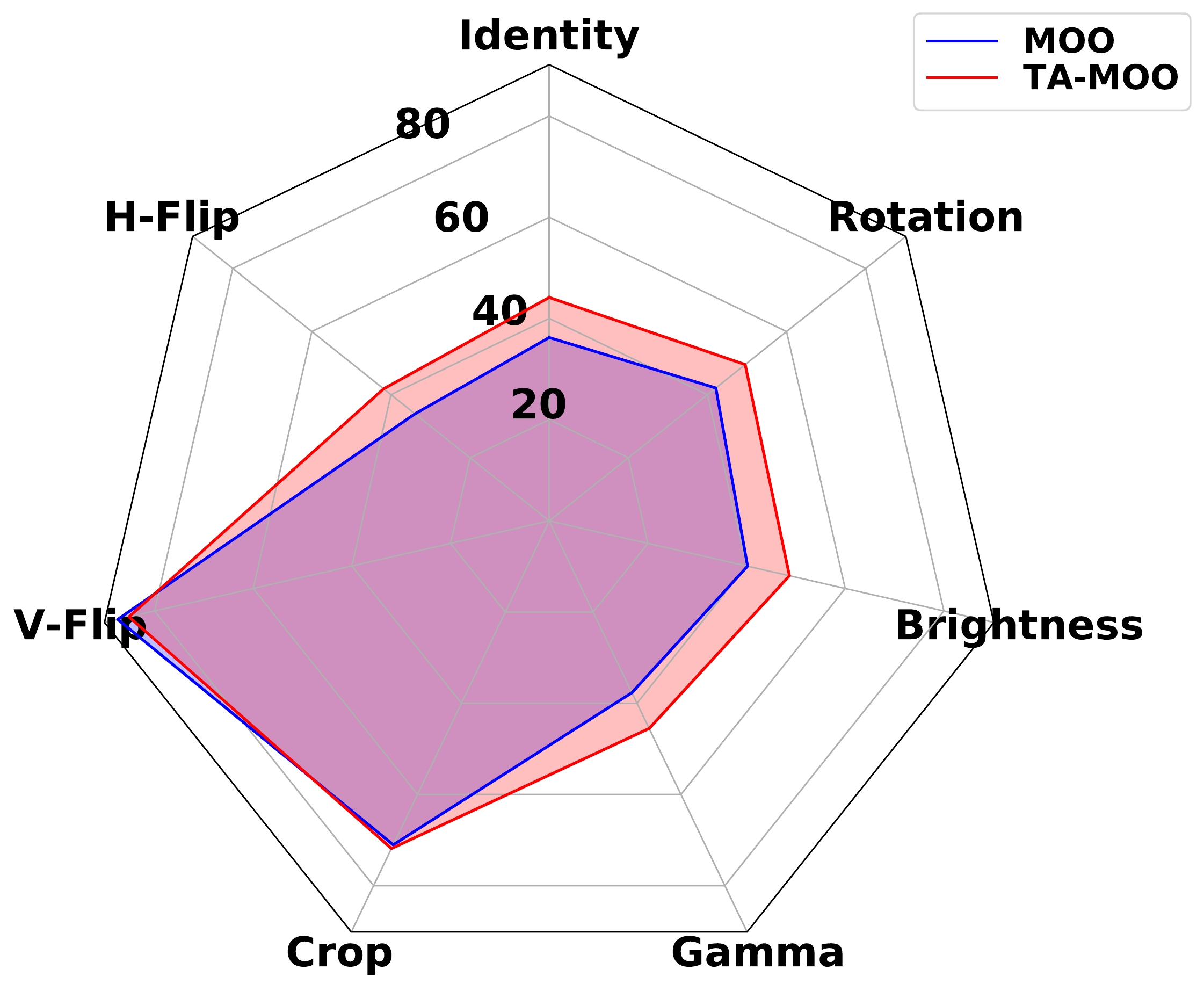}
    \par\end{centering}
    \label{fig:ens-radar-tamoo-eot}
    }
    \par\end{centering}
    \caption{Comparison on the ASR of each individual task. R: ResNet18, V: VGG16, G: GoogLeNet, E: EfficientNet. 
    CE: Cross-entropy loss, KL: Kullback-Leibler divergence, CW: Carnili-Wagner loss}
    \label{fig:ens-radar-tamoo}
\end{figure}

\subsection{More Efficient MOO Solvers} \label{sec:sup-gradient-des-discuss}

\paragraph{Discussions on the weighted-sum method.}
One of the most common approaches to solve the MOO problem is the scalarizing method, which formulates a single-objective optimization (SOO) such that the optimal solutions to the SOO problem are Pareto optimal solutions to the MOO problem.
While this line of approach (e.g., weighted-sum method) is suitable for end-to-end learning such as deep learning, there are several acknowledged weaknesses: 
(i) the choice of utility function has a large impact on the computational complexity of the resulted SOO problem \citep{bjornson2014multiobjective, bjornson2013optimal}; 
(ii) a small change in weights may results in big changes in the combined objective \citep{caballero1997algorithmic}, and vice versa, a huge different weights may produce nearly similar result \citep{coello1999comprehensive}; 
(iii) it does not work well in the case of a non-convex objective space \citep{deb2011multi}. 

One of the most common replacement for the weighted-sum method is the $\epsilon$ constraint method which is applicable to either convex or non-convex problem. Applying a more efficient MOO solver might be one of the potential extensions of this work. 

\paragraph{Discussions on the gradient descent solver.}

Inspired by \cite{sener2018multi}, in this paper we use multi-gradient descent algorithm \citep{deb2011multi} as an MOO solver 
which casts the multi-objective problem to a single-objective problem. While \cite{sener2018multi} used Frank-Wolfe 
algorithm to project the weight into the desired simplex, we use parameterization with softmax instead. Although this technique 
is much faster than Frank-Wolfe algorithm, it has some weaknesses that will be addressed in our future work.
More specifically, the GD solver with softmax parameterization cannot handle well the edge case which is the root of the dominating issue.
The snippet code \ref{snippet_code} provides a minimal example of quadratic optimization problem as similar in MGDA, where the goal is to find 
$w^* = \underset{w \in \simplex_w}{\text{argmin}}  \sum_{i=1}^5 \| w_i \text{g}_i \|_2^2$. 
The solver is the Gradient Solver with softmax parameterization. With $\text{input}_1$ where none of elements dominates others, the solver works quite reasonable 
with the weights corresponding to 4 first elements are equal and less than the last one (corresponding to bigger strength). 
With $\text{input}_2$ where $g_5 \gg g_1$, the solver still works well where $w_1=1$ corresponding to the minimal strength $g_1=0.1$. 
However, with $\text{input}_3$, the solver fails to find a good solution (which should be $w=[1,0,0,0]$ given that input). 
It is a worth noting that the main goal of this paper is to show the application of Multi-objective Optimization for generating adversarial examples and the impact of 
the Task-Oriented regularization. 
Therefore, while the issue of the gradient descent solver is well recognized, we did not take effort to try with a better solver.   

\begin{lstlisting}[language=Python, caption=Python example of the Gradient Solver with softmax parameterization]
import torch 
import torch.nn.functional as F 
import torch.optim as optim 

input_1 = [0.1, 0.1, 0.1, 0.1, 0.2] # normal case 
input_2 = [0.01, 0.1, 0.1, 0.1, 2e3] # normal case  
input_3 = [0.001, 0.002, 0.002, 0.002, 2e3] # dominating issue

init_alpha = [0.2, 0.2, 0.2, 0.2, 0.2]
g = torch.tensor(input_3)
alpha = torch.tensor(init_alpha, requires_grad=True)
opt = optim.SGD([alpha], lr=1.0)

for step in range(20):
    w = F.softmax(alpha, dim=0)
    loss = torch.square(torch.sum(w * g))
    opt.zero_grad()
    loss.backward()
    opt.step()
    print('step={}, w={}'.format(step, w.detach().numpy()))

# Result with input_1 
# step=19, w=[0.20344244 0.20344244 0.20344244 0.20344244 0.18623024]
# Result with input_2
# step=19, w=[9.999982e-01 5.582609e-07 5.582609e-07 5.582609e-07 0.]
# Result with input_3
# step=19, w=[0.28042343 0.23985887 0.23985887 0.23985887 0.]
\end{lstlisting}\label{snippet_code}

\subsection{Correlation between the Objective Loss and Attack Performance.} \label{subsec:correlation-loss-attack-performance} 

It is broadly accepted that to fool a model, a feasible approach is maximizing the objective loss (i.e., CE, KL, or CW loss), and the higher the loss, the higher the attack success rate. While it is true with the same architecture, we found that it does not hold when comparing different architectures. 
Figure \ref{fig:ens-cf10-diverse} shows the adversarial loss and the attack success rate for each model in the ENS setting. With the CW loss as the adversarial objective, it can be observed that there is a positive correlation between the loss value and the ASR, i.e., the higher the loss, the higher the ASR. 
For example, with the same adversarial examples, the adversarial loss on EfficientNet is the highest and so is ASR. 
However, there is no clear correlation observed when using CE and KL losses.
Therefore, the higher weighted loss does not directly imply a higher success rate for attacking an ensemble of different architectures. 
The MinMax method \citep{wang2021adversarial} which solely weighs the tasks' losses, therefore, does not always achieve a good performance in all the tasks. 

\begin{figure}
\begin{centering}
\subfloat[CW]{
\begin{centering}
\includegraphics[width=0.8\textwidth]{images/ens/MinMaxEns_cw.png}
\par\end{centering}
}

\subfloat[CE]{\begin{centering}
\includegraphics[width=0.8\textwidth]{images/ens/MinMaxEns_ce.png}
\par\end{centering}
}

\subfloat[KL]{
\begin{centering}
\includegraphics[width=0.8\textwidth]{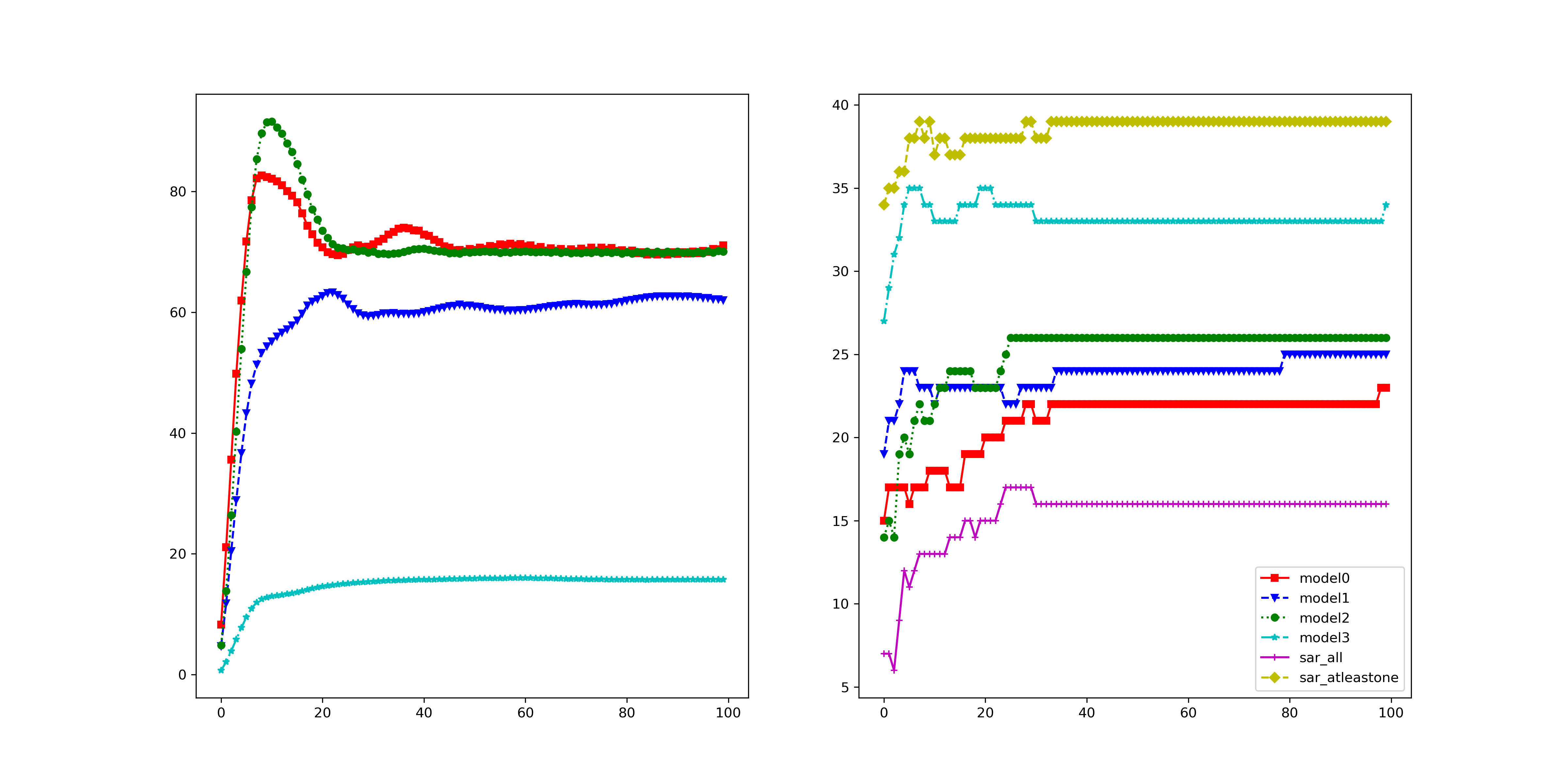}
\par\end{centering}
}

\par\end{centering}
\caption{Loss (left fig) and ASR (right fig) of each task over all attack iterations with the MinMax method. model0/1/2/3 represents R/V/G/E architecture, respectively. }
\label{fig:ens-cf10-diverse}
\end{figure}

\subsection{Conflicting between gradients in the adversarial generation task} \label{subsec:conflicting}

In multi-task learning setting, conflicting between gradient is the common issue 
to tackle with. More specifically, the gradients with respect to the (shared) model 
parameter of task $f_i$ and task $f_j$ can have a negative correlation (i.e., cosine similarity 
between $\nabla_\theta f_i (\theta, \delta) $ and $\nabla_\theta f_j (\theta, \delta)$ is negative). 
However, in the adversarial generation task, we consider the gradient 
with respect to the input (e.g., $\nabla_\delta f(\theta, \delta)$) to update the 
adversarial examples. As we explore through empirical experiments, the issue that we 
to deal with is not the gradient confliction problem but the gradient domination problem. 
These gradients with respect to the inputs can have a positive correlation but also 
have a huge difference in their strengths. 
In this specific challenge, the standard MOO which solely relies on the gradient 
strengths to calculated the weight for each task is strongly sensitive to the 
gradient domination problem and in some cases cannot lead to a good solution 
as discussed in Appendix \ref{subsec:sup-discussion-dominating-issue}

To further support our hypothesis, we would like to provide a measurement on 
the cosine similarity between gradients on different ensemble members on the ENS 
setting in Table \ref{tab:cosine-gradients}. 
Each cell (row-ith, column-jth) of the Table reports the cosine similarity between 
gradient $\nabla_{\delta}f_{i}(\delta)$ of model ith and gradient $\nabla_{\delta}f_{j}(\delta)$ 
of model jth (w.r.t. the same input $\delta$). 
It can be seen that the gradients between different architectures has the positive correlation 
instead of negative correlation. On the other hand, as shown in the last row, the gradient norm $\|\nabla_{\delta}f_{i}(\delta)\|$ 
varies widely among architectures. While this observation is in line with the widely accepted 
phenomenon about the transferability of adversarial examples, it also does support our motivation to derive 
the TA-MOO method to improve the standard MOO.

\begin{table}

	\caption{Correlation between gradients of ensemble members on ENS setting.
	Each cell (row-ith, column-jth) reports the cosine similarity (mean
	$\pm$ std) between gradient $\nabla_{\delta}f_{i}(\delta)$ of model
	ith and gradient $\nabla_{\delta}f_{j}(\delta)$ of model jth (w.r.t.
	the same input $\delta$). The last row $\|\nabla_{\delta}f_{i}(\delta)\|$
	reports the gradient norm of each model. R: ResNet18, V: VGG16, E:
	EfficientNet, G: G-GoogLeNet. \label{tab:cosine-gradients}}
	
	\begin{centering}
	\begin{tabular}{c|cccc}
	 & R & V & G & E\tabularnewline
	\hline 
	R & 1.00$\pm$0.00 & 0.34$\pm$0.15 & 0.44$\pm$0.17 & 0.35$\pm$0.19\tabularnewline
	V & 0.34$\pm$0.15 & 1.00$\pm$0.00 & 0.36$\pm$0.19 & 0.41$\pm$0.22\tabularnewline
	G & 0.44$\pm$0.17 & 0.36$\pm$0.19 & 1.00$\pm$0.00 & 0.41$\pm$0.18\tabularnewline
	E & 0.35$\pm$0.19 & 0.41$\pm$0.22 & 0.41$\pm$0.18 & 1.00$\pm$0.00\tabularnewline
	\hline 
	$\|\nabla_{\delta}f_{i}(\delta)\|$ & 7.15 $\pm$ 6.87 & 4.29 $\pm$ 4.64 & 7.35 $\pm$ 7.21 & 0.98 $\pm$ 0.72\tabularnewline
	\end{tabular}
	\par\end{centering}
\end{table}

\subsection{Discussion on the Convergence of our methods} \label{subsec:convergence}

In multi-task learning, the gradient of each task is calculated with respect to the (shared) model parameter 
(e.g., $\nabla_\theta f(\theta, \delta)$). 
Therefore, to quantify the convergence of a multi-task learning method, we can measure the gradient 
norm of the comment gradient direction to quantify the convergence of the model. 
The gradient norm is expected to be a very small value when the model reaches to the Pareto optimality points.
However, in adversarial generation problem, the gradient of each task is calculated with respect to 
the input (e.g., $\nabla_\delta f(\theta, \delta)$). 
Therefore, unlike in the multi-task learning, there is a different behavior of gradient in 
the adversarial generation task. 
To verify our hypothesis, we measure the gradient norm of the gradient over all attack iterations 
and visualize in Figure \ref{fig:norm_grad_common}. 
It can be seen that the gradient norm of all attacks tends to converge to a large value. 
It is a worth noting that we use projected gradient descent with $l_\infty$ in all attacks. 
Therefore, in each attack iteration, the amount to update is not the gradient 
$\nabla_\delta f(\theta, \delta)$ but the sign of it scaling 
with a step size $\eta_\delta$. However, there is still an interesting observation such that MOO and TA-MOO 
attack have a much lower gradient norm than other attacks.

\begin{center}
    \begin{figure}
        \begin{centering}
            \includegraphics[width=0.5\textwidth]{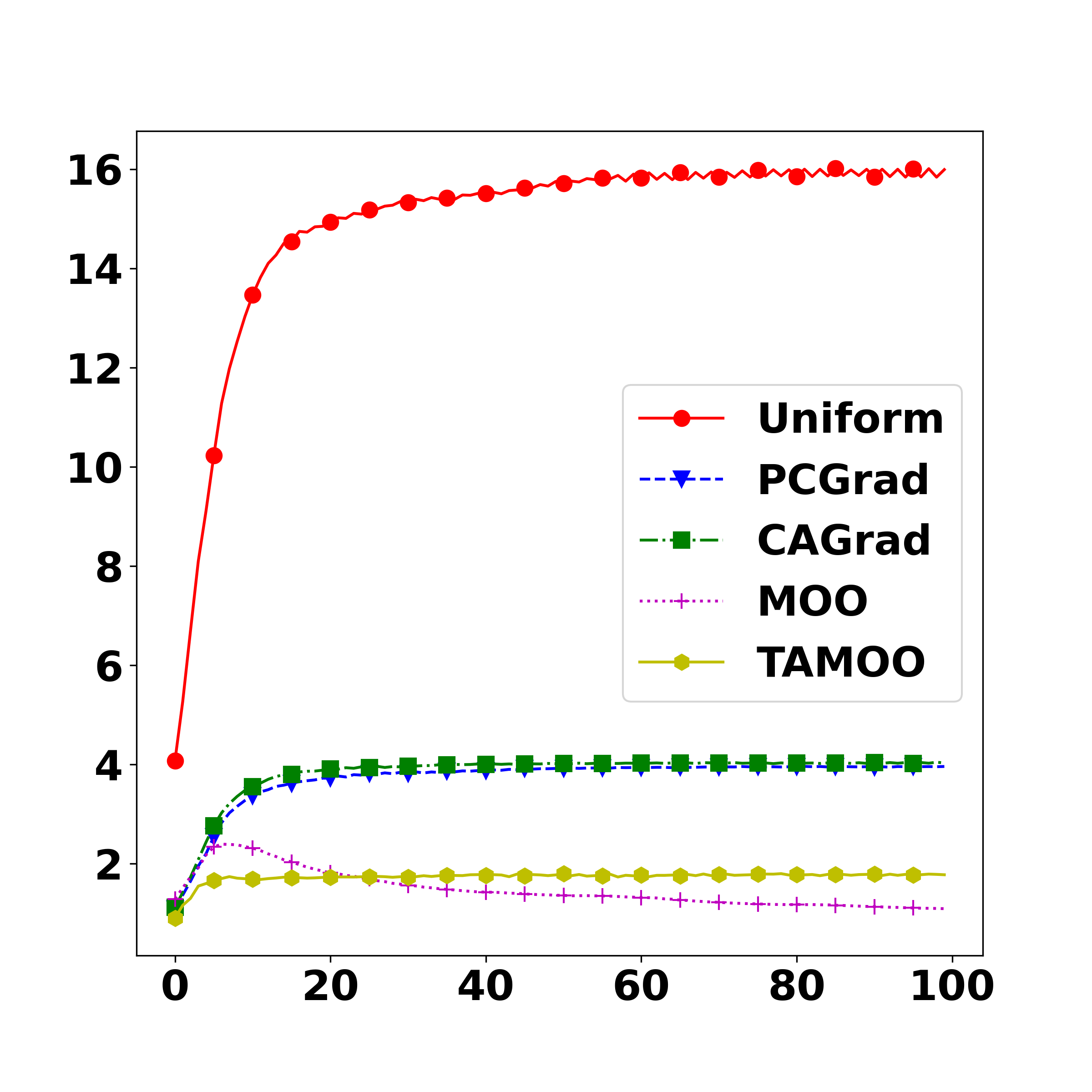}
            \caption{Norm of the gradient $\nabla_\delta f(\delta)$ over all attack iterations. Measure on the diverse set 
            of the ENS setting, with CE loss. \label{fig:norm_grad_common}}		
        \par\end{centering}
    \end{figure}
\par\end{center}

We would like to propose a simple alternative approach to quantify the convergence of our 
method in the adversarial generation setting.  
More specifically, we leverage the advantage of the adversarial generation task such that 
we can access to the label to audit whether the task is successful or not. 
Therefore, we simply measure the loss and the success attack rate over all attack iterations 
as shown in Figure \ref{fig:convergence}.

First, we would like to recall the definition of the Pareto optimality. 
Given $m$ objective function $f(\delta) \triangleq [f_1(\delta), ..., f_m(\delta)]$, the Pareto optimality 
$\delta^*$ of the multi-objective optimization $\delta^* = \underset{\delta}{\text{argmax}} f(\delta)$ 
if there is no feasible solution $\delta'$ such that is strictly better than $\delta^*$ in some tasks 
(i.e., $f_i(\delta') > f_i(\delta^*)$ for some $i$) while equally good as $\delta^*$ in all other tasks 
(i.e., $f_j(\delta') = f_j(\delta^*), j \neq i$).
Bear this definition in mind, it can be seen from the loss progress of MOO attack in Figure \ref{fig:convergence-MOO} that 
(i) from iteration 1st to around iteration 10th all the losses are increased quickly showing 
that the method optimize efficiently; 
(ii) after iteration 10th, the loss w.r.t. the EfficientNet model (i.e., model3 in the legend) continually increases 
while other losses continually decrease. Therefore, any solution after iteration 10th do not dominate 
each other indicating that the method reaches the Pareto front.

On the other hand, it can be seen from Figure \ref{fig:convergence-TA-MOO} that the loss progress of our TA-MOO is more stable. TA-MOO also can 
optimize to the optimal point efficiently as MOO does, however, after reaching the peak, the losses in 
all tasks are more stable than those in MOO. This observation indicates that the solutions 
after the peak point are also in the Pareto front but are more concentrated than those in MOO. 
It can explain the stability of the success attack rate in TA-MOO in Figure \ref{fig:convergence-TA-MOO}. 
Comparing across both MOO and TA-MOO at their last iteration shows that while the loss w.r.t. the EfficientNet model (model3) in MOO is a bit higher than that in TA-MOO, these other losses w.r.t. V/G/E models in MOO is lower than those in TA-MOO. 
This observation indicates that in term of losses, the solutions of 
MOO and TA-MOO do not dominate each other. 
However, the solution of TA-MOO is more stable and leads better final attacking performance.

\begin{figure}
	\begin{centering}
	\subfloat[MOO \label{fig:convergence-MOO}]{
	\begin{centering}
	\includegraphics[width=0.5\textwidth]{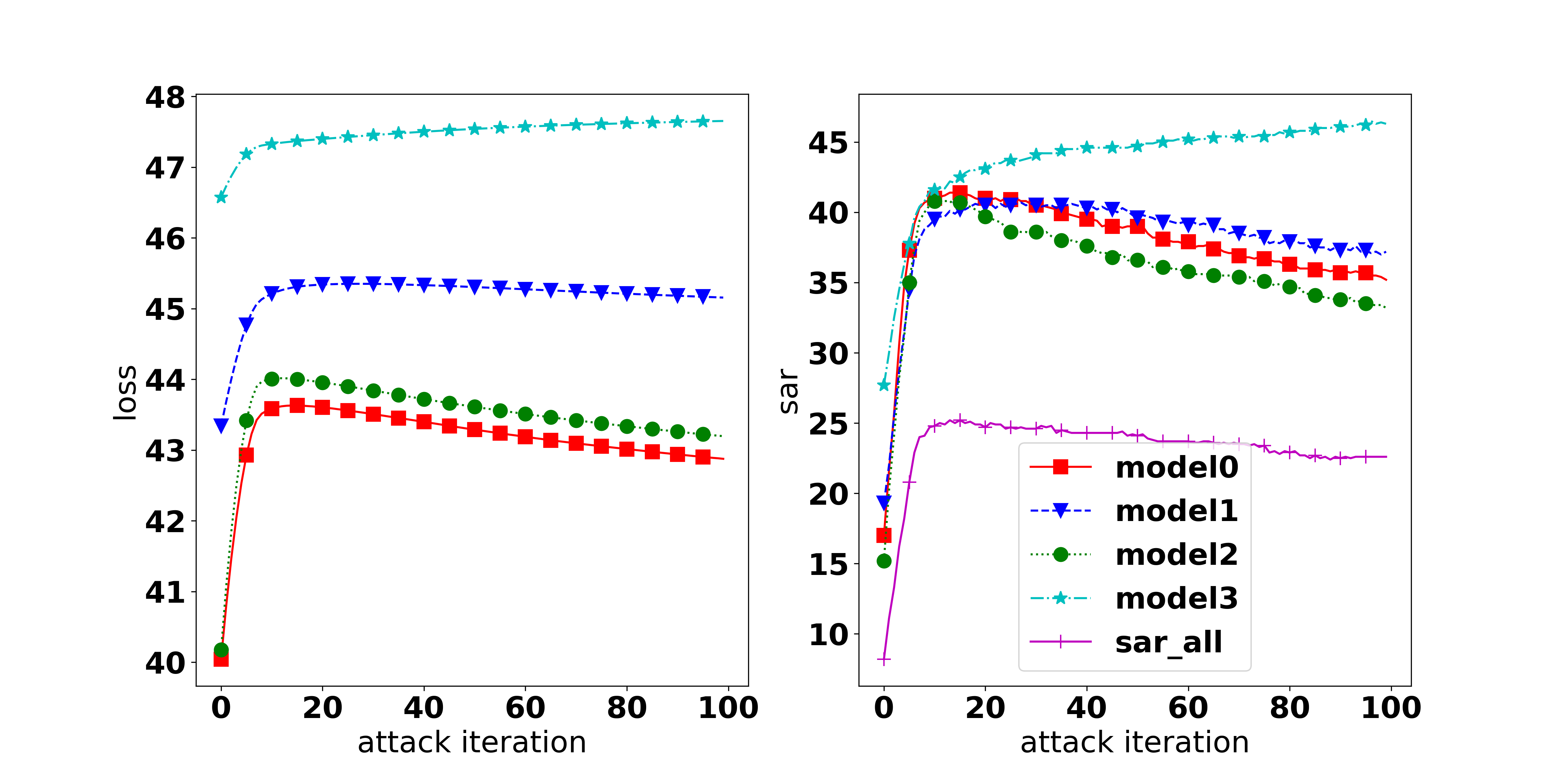}
	\par\end{centering}
	}
	\subfloat[TA-MOO \label{fig:convergence-TA-MOO}]{\begin{centering}
	\includegraphics[width=0.5\textwidth]{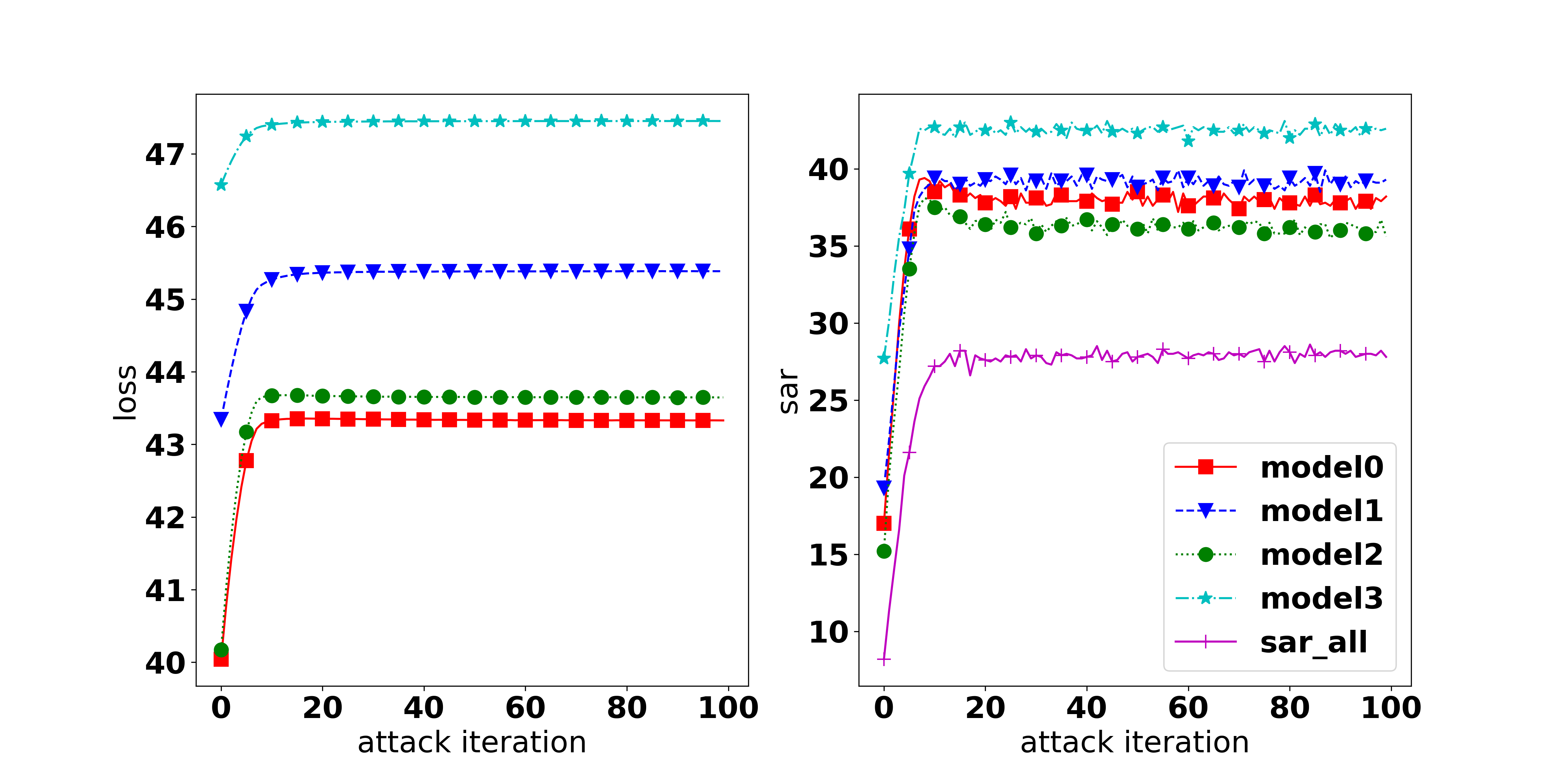}
	\par\end{centering}
	}
	
	\par\end{centering}
	\caption{Loss (left fig) and SAR (right fig) of each task over all attack iterations. model0/1/2/3 represents R/V/G/E architecture, respectively. 
	The CW loss is used as the adversaries's objective function.}
	\label{fig:convergence}
\end{figure}

\subsection{Additional Experiments with Different initializations for MOO} \label{subsec:optimal-init-moo}

In our method, the default initialization for the weight $w$ is $1/m$ equally for all tasks. 
Therefore, one raising valid concern is that \textit{Might better initialization can help to boost the 
performance?}. To answer this question, we first find the optimal initital weight by using the 
weight at the last iteration when running MOO and TA-MOO attacks with the default initialization. 
For example, as shown in Figure \ref{fig:weight-sar-MOO} for the ENS setting with diverse architectures, 
the average weight that MOO assigns for model R/V/G/E converging to 0.15/0.17/0.15/0.53 (\textit{set A}), respectively. 
The average weights' distribution learned by TA-MOO is 0.19/0.25/0.19/0.37 (\textit{set B}), respectively. 
It is a worth noting that, we consider each set of weights for each data sample separately, 
and the above weights are just the average over entire testing set (e.g., 10K sample), while the 
full statistic (mean $\pm$ std) of weights can be seen in Table 2. 
In order to make the experiment to be more comprehensive with diverse initializations, 
we use two additional sets including set C=[0.22, 0.23, 0.22, 0.33] and set D=[0.24, 0.25, 0.24, 0.27].

Given these above four weights sets A/B/C/D, 
we then init the standard MOO with one of these above sets and adjust the learning rate $\eta_w$ 
with three options {5e-3, 5e-5, 1e-8} and report results in Table \ref{tab:ENS-diverse-diff-init-4sets}. 
The complete attacking progress can be seen in Figure \ref{fig:weight-sar}. 
It can be seen from Table \ref{tab:ENS-diverse-diff-init-4sets} that better initialization 
does help to improve the performance of the standard MOO. 
The best setting is the initialization with set D and $\eta_w=\text{5e-3}$ achieves 29.53\% in A-All metric, 
a 4.37\% improvement over the default MOO initialization. 
It can be seen from the evolution of the weights in Figure \ref{fig:weight-sar-MOO-A-5e-3} that 
even initializing with the converged weights (i.e., set A) from the pre-running attack, 
the weight of each task does not stand still but converges to a different value. 
It is another different behavior in adversarial generation task compared to the multi-task learning problem. 
On the other hand, despite of the extensive tuning, the performance of MOO is still far below the TA-MOO approach, 
with the gap of 8.48\% in A-All metric.

\begin{figure}
	\begin{centering}
	\subfloat[MOO \label{fig:weight-sar-MOO}]{
	\begin{centering}
	\includegraphics[width=0.5\textwidth]{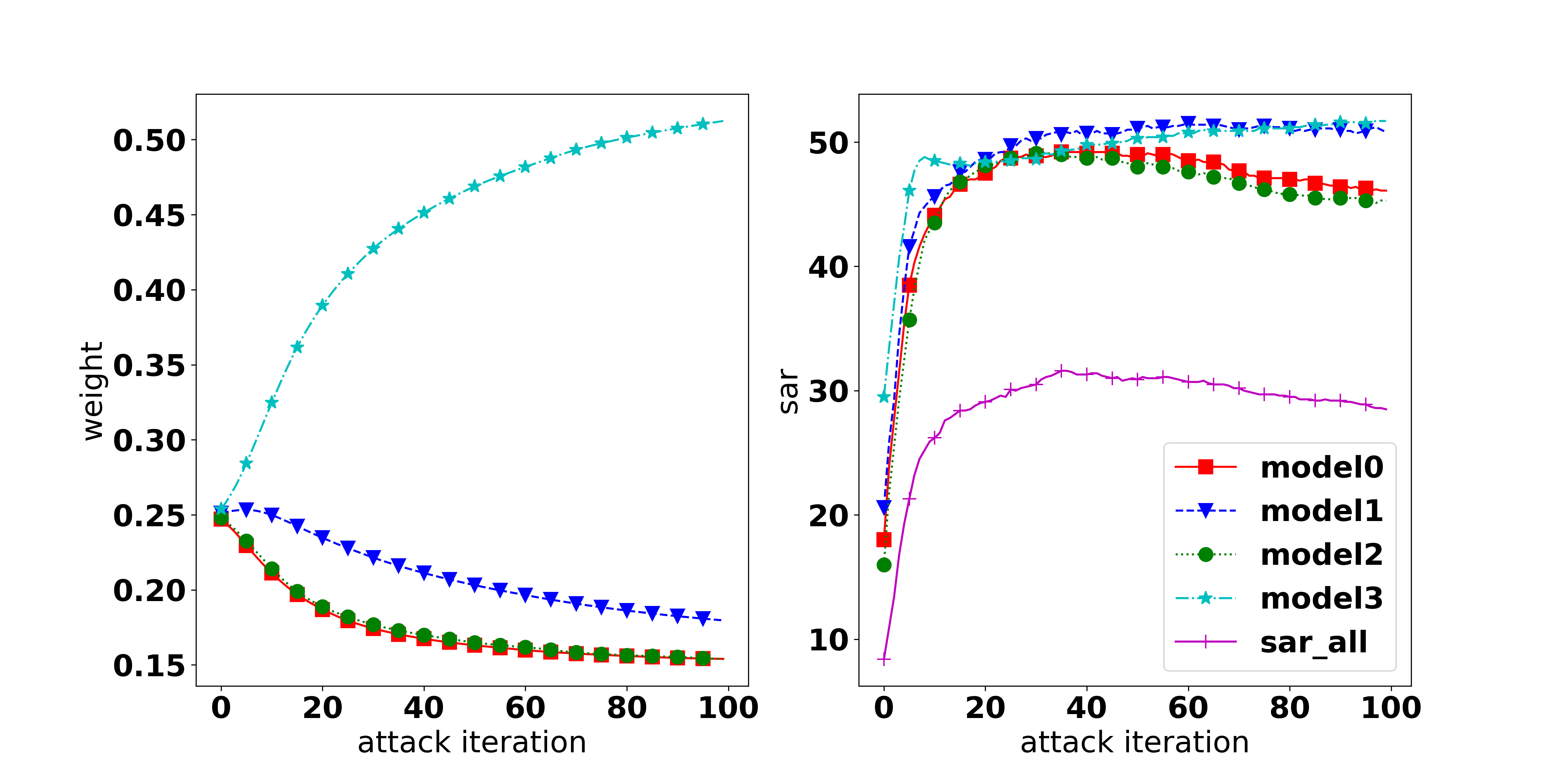}
	\par\end{centering}
	}
	\subfloat[TA-MOO \label{fig:weight-sar-TA-MOO}]{\begin{centering}
	\includegraphics[width=0.5\textwidth]{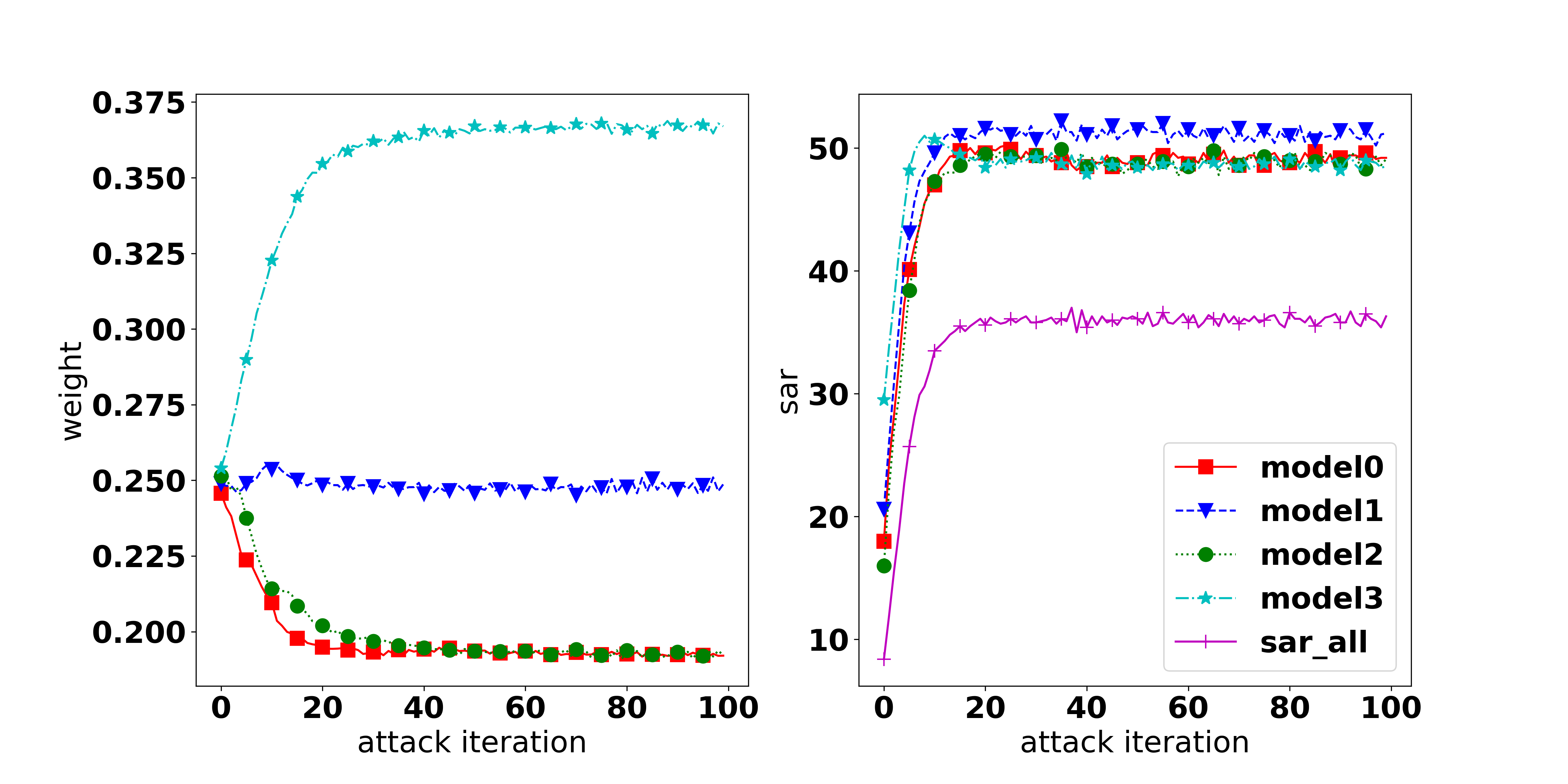}
	\par\end{centering}
	}

    \subfloat[$MOO^A$ with $\eta_w=\text{5e-3}$ \label{fig:weight-sar-MOO-A-5e-3}]{\begin{centering}
        \includegraphics[width=0.5\textwidth]{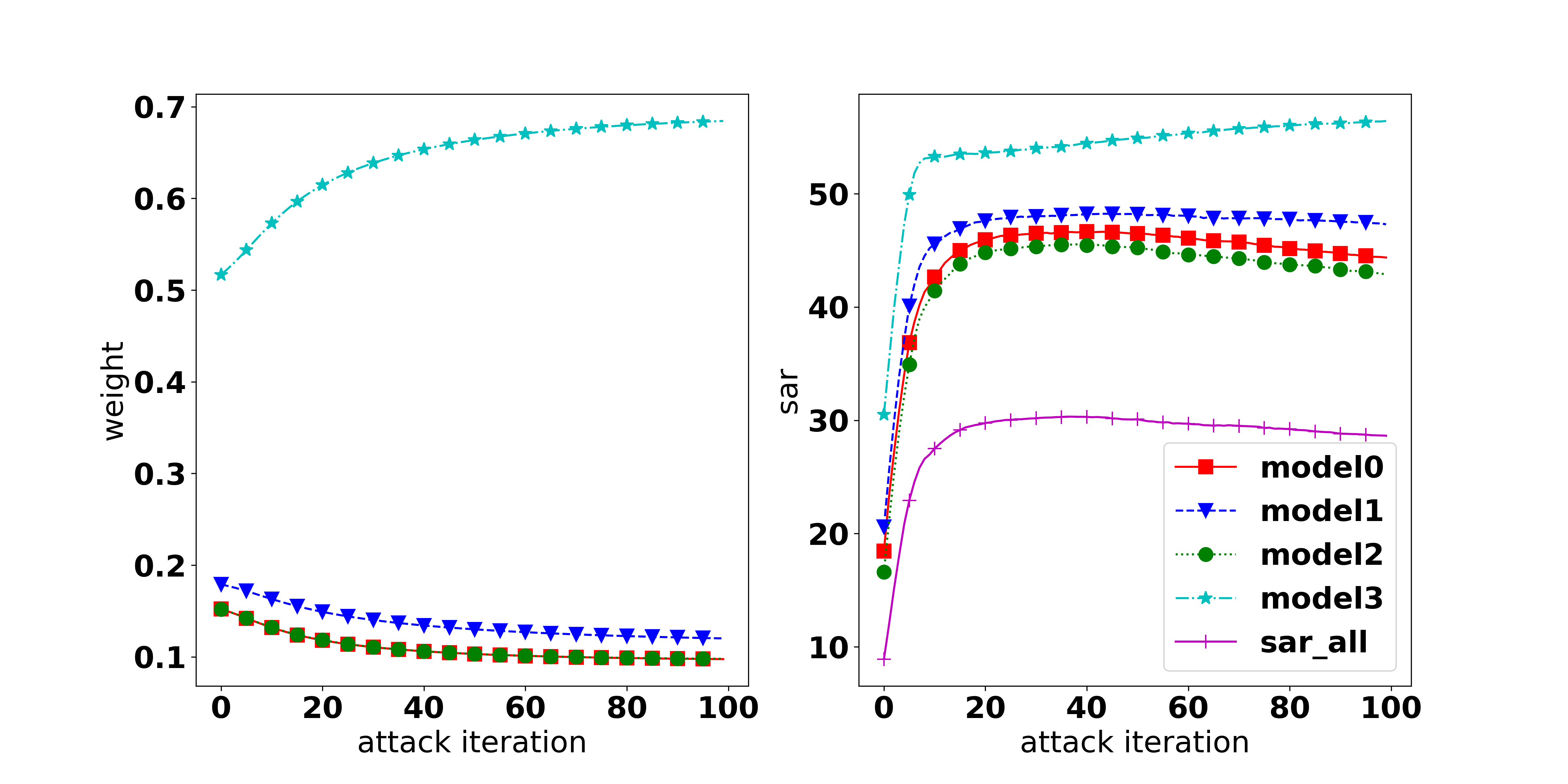}
        \par\end{centering}
    }
    \subfloat[$MOO^B$ with $\eta_w=\text{5e-3}$ \label{fig:weight-sar-MOO-B-5e-3}]{\begin{centering}
        \includegraphics[width=0.5\textwidth]{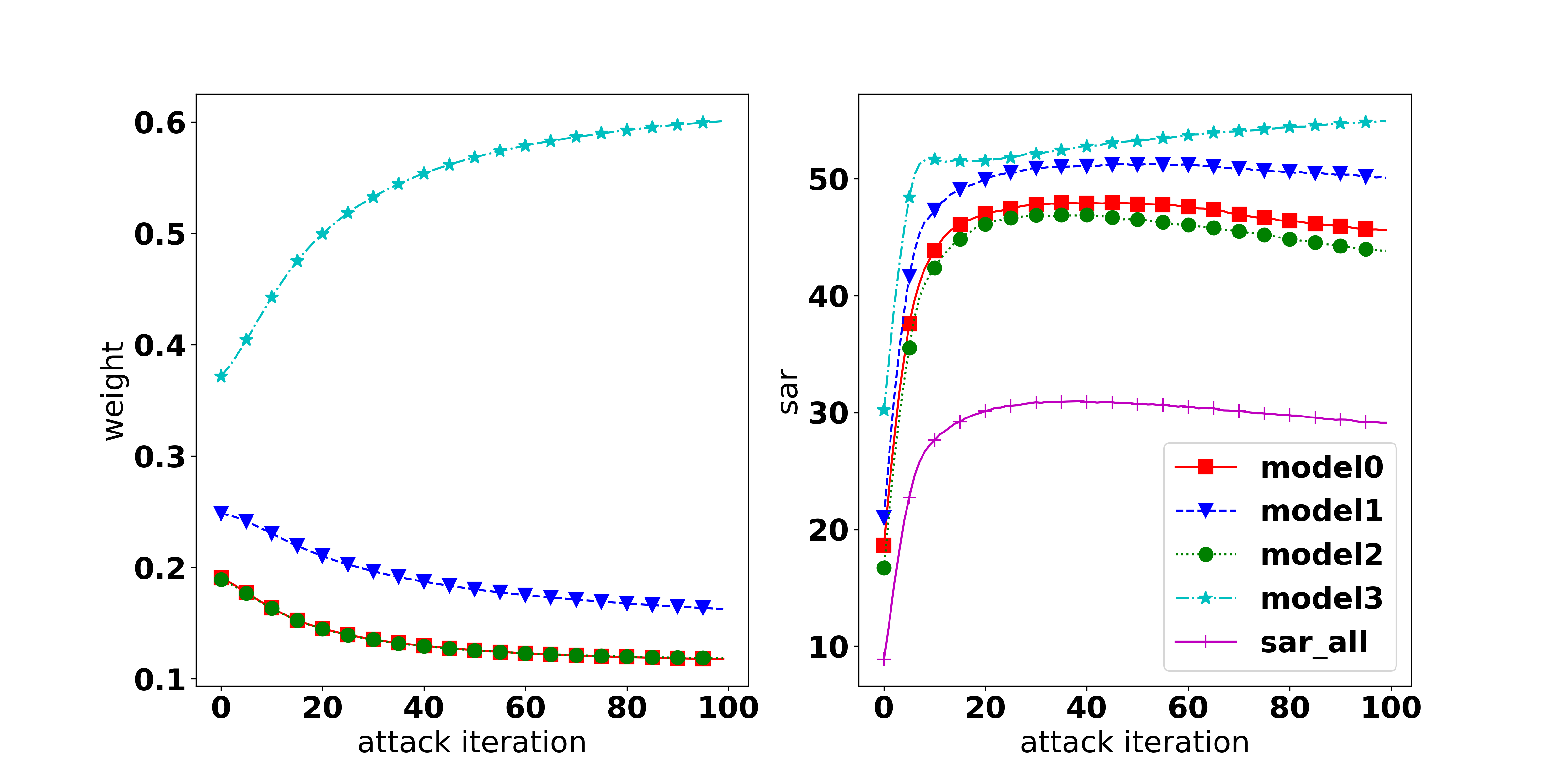}
        \par\end{centering}
    }

	\subfloat[$MOO^C$ with $\eta_w=\text{5e-3}$ \label{fig:weight-sar-MOO-C-5e-3}]{\begin{centering}
        \includegraphics[width=0.5\textwidth]{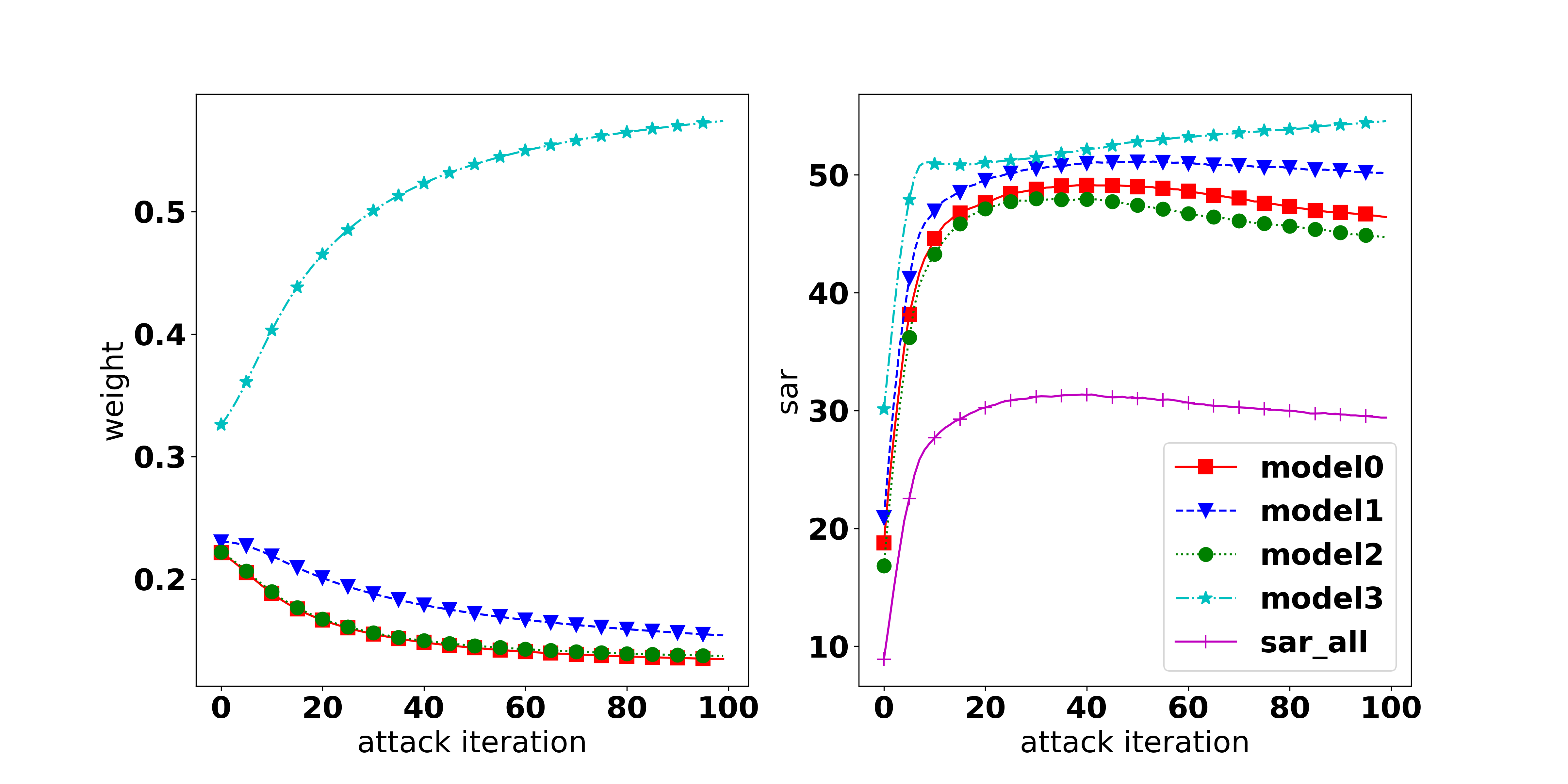}
        \par\end{centering}
    }
    \subfloat[$MOO^D$ with $\eta_w=\text{5e-3}$ \label{fig:weight-sar-MOO-D-5e-3}]{\begin{centering}
        \includegraphics[width=0.5\textwidth]{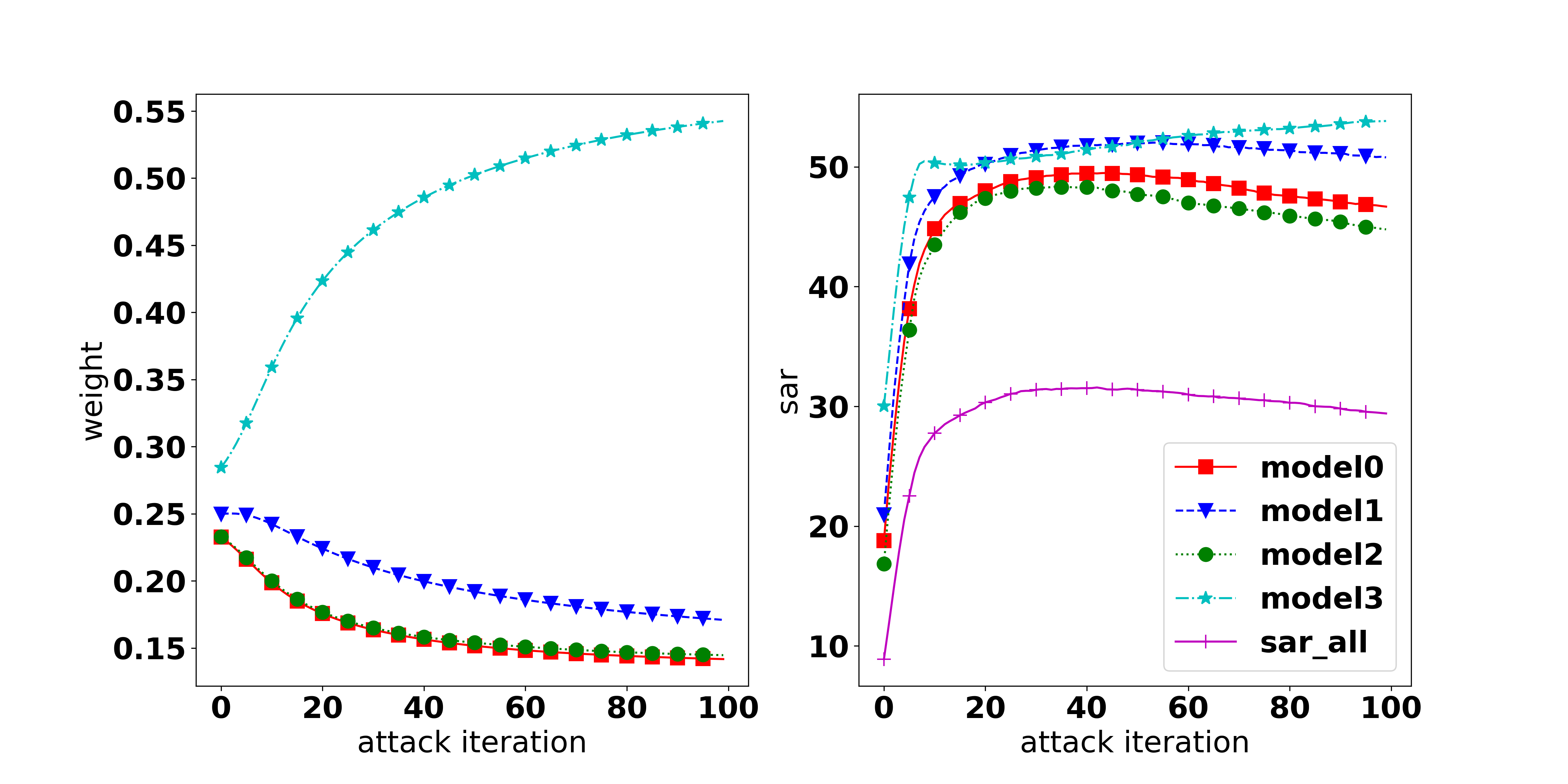}
        \par\end{centering}
    }

	\par\end{centering}
	\caption{Weight (left fig) and SAR (right fig) of each task over all attack iterations. 
    model0/1/2/3 represents R/V/G/E architecture, respectively.}
	\label{fig:weight-sar}
\end{figure}

\begin{center}
	\begin{table}
	\begin{centering}
	\caption{Attacking Ensemble model with a diverse set D=\{R-ResNet18, V-VGG16,
	G-GoogLeNet, E-EfficientNet\}. $\text{MOO}^{A/B/C/D}$ is MOO with
	initial weights from set A/B/C/D, respectively. $\eta_{w}$ denotes
	the learning rate to update for the weight $w$. \label{tab:ENS-diverse-diff-init-4sets}}
	\par\end{centering}
	\centering{}%
	\begin{tabular}{lccc}
	 & $\eta_{w}=\text{5e-3}$ & $\eta_{w}=\text{5e-5}$ & $\eta_{w}=\text{1e-8}$\tabularnewline
	\midrule 
	$\text{MOO}^{A}$ & 28.64 & 29.18 & 29.12\tabularnewline
	$\text{MOO}^{B}$ & 29.13 & 28.75 & 28.65\tabularnewline
	$\text{MOO}^{C}$ & 29.38 & 28.46 & 28.33\tabularnewline
	$\text{MOO}^{D}$ & 29.53 & 28.37 & 28.18\tabularnewline
	\midrule 
	MOO & 25.16 & - & -\tabularnewline
	TA-MOO & 38.01 & - & -\tabularnewline
	\bottomrule
	\end{tabular}
	\end{table}
\par\end{center}

\end{document}